\documentclass{article} 

\setlength{\paperheight}{11in}
\setlength{\paperwidth}{8.5in}

\oddsidemargin .5in    
\evensidemargin .5in
\marginparwidth 0.07 true in
\topmargin -0.625in
\addtolength{\headsep}{0.25in}
\textheight 9.0 true in       
\textwidth 5.7 true in        
\widowpenalty=10000
\clubpenalty=10000

\flushbottom \sloppy

\usepackage{hyperref}
\usepackage{url}
\usepackage{amsthm}
\usepackage{amsmath}
\usepackage{latexsym}
\usepackage[pdftex]{graphicx}
\usepackage{amsfonts} 
\usepackage{multirow}
\usepackage{float}
\usepackage{mathrsfs}
\usepackage[ruled]{algorithm2e}

\newtheorem{lemma}{{\bf Lemma}}

\newtheorem{proposition}{{\bf Proposition}}
\newtheorem{theorem}{{\bf Theorem}}

\def\newline{\nonumber\\&\quad}

\def\E{\mathbb{E}}

\def\Z{\mathbb{Z}}
\def\R{\mathbb{R}}
\def\IR{\mathcal{I}}
\def\a{\alpha}
\def\abs#1{{\left| #1 \right|}}
\def\I#1{{\bf 1}_{#1}}

\title{When are Kalman-Filter Restless Bandits Indexable?}
\author{Christopher Dance and Tomi Silander\\
Xerox Research Centre Europe, Grenoble, France}
\date{May, 2015}
\begin{document}
\maketitle
\begin{abstract}
We study the restless bandit associated with an extremely simple scalar
Kalman filter model in discrete time. 
Under certain assumptions, we prove that the problem is {\it indexable}
in the sense that the {\it Whittle index} is a non-decreasing function of the 
relevant belief state. In spite of the long history of this problem, this appears to be
the first such proof. 
We use results about {\it Schur-convexity} 
and {\it mechanical words},
which are particular
binary strings intimately related to {\it palindromes}.
\end{abstract}

\section{Introduction}
We study the problem of monitoring several time series so as to maintain a precise belief while minimising the cost of sensing. 
Such problems can be viewed as POMDPs with belief-dependent rewards \cite{ArayaNIPS10} and their applications include active sensing \cite{ChenICML14}, attention mechanisms for multiple-object tracking \cite{VulNIPS09}, as well as online summarisation of massive data from time-series \cite{BadanidiyuruKDD14}. Specifically, we discuss the restless bandit \cite{Whittle88} associated with the discrete-time Kalman filter~\cite{Thiele1880}. 

{\it Restless bandits} generalise bandit problems \cite{Bubeck12, Gittins11} to situations where the state of each arm (project, site or target) continues to change even if the arm is not played. As with bandit problems, the states of the arms evolve independently given the actions taken, suggesting that there might be efficient algorithms for large-scale settings, based on calculating an {\it index} for each arm, which is a real number associated with the (belief-)state of that arm alone. However, while bandits always have an optimal index policy (select the arm with the largest index), it is known that no index policy can be optimal for some discrete-state restless bandits \cite{Ortner12} and such problems are in general PSPACE-hard even to approximate to any non-trivial factor~\cite{Guha10}. Further, in this paper we address restless bandits
with real-valued rather than discrete states.

On the other hand, Whittle proposed a natural index policy for restless bandits~\cite{Whittle88}, but this policy only makes sense when the restless bandit is {\it indexable}, as we now explain.
Say we have $n$ restless bandits and we are constrained to play $m$ arms at each time.
Whittle considered relaxing this constraint by only requiring that the time-average number of arms
played is $m$.
Now the optimal average cost for this relaxed problem is a lower bound on the optimal average cost
for the original problem.
Also, the relaxed problem can be separated into $n$ single-arm problems by the method of Lagrange
multipliers, making it relatively easy to solve.
In this separated version of the relaxed problem, 
each arm behaves identically to an arm in the original problem, except that an additional
price $\lambda$ is charged each time the arm is played, where $\lambda$ corresponds to the Lagrange
multiplier for the relaxed constraint.
Now let us consider a family of optimal policies which achieves the optimal cost-to-go $Q_i(x,u;\lambda)$
for a single arm $i$ with price $\lambda$ and which takes actions $u = \pi_i(x;\lambda)$ when in state $x$
where $u = 0$ means passive and $u=1$ means active.
At first glance, we might intuitively suppose that 
it becomes less and less attractive to be active as the price $\lambda$ increases
so that as the price is increased beyond some value $\lambda_i(x)$, the optimal action
switches from active to passive.
At this price we are ambivalent between being active and passive so that 
$Q_i(x,0;\lambda_i(x)) = Q_i(x,1; \lambda_i(x))$.
Such a value $\lambda_i(x)$ is called the {\it Whittle index} for arm $i$ in state $x$.
Indeed if there is a family of optimal policies for which 
\begin{align*}
\pi_i(x;\lambda_{\text{hi}}) \le \pi_i(x;\lambda_{\text{lo}}) \quad \text{for all states $x$ and all pairs of prices $\lambda_{\text{hi}} \ge \lambda_{\text{lo}}$}
\end{align*}
then an optimal solution to the relaxed problem for price $\lambda$
is to activate arm $i$ if and only if $\lambda < \lambda_i(x)$.
If a restless bandit satisfies this condition, it is said to be {\it indexable}.
It is important to note that some restless bandits are not indexable,
so activating arm $i$ if and only if $\lambda < \lambda_i(x)$ does not correspond to an optimal solution 
to the relaxed problem.
Indeed, in a study of small randomly-generated problems, 
Weber and Weiss~\cite{Weber90} found that roughly 10\% of problems were not indexable.

As a policy based on $\lambda_i(x)$ is so good for the relaxed problem when the arms are indexable,
this motivates us to use $\lambda_i(x)$ as a heuristic for the original problem.
This heuristic is called Whittle's index policy and at each time it
activates the $m$ arms with the highest indexes $\lambda_i(x)$.
Further motivation for studying indexability is that for ordinary bandits 
the Whittle index reduces to the Gittins index,
making the Whittle index policy optimal when only one arm may be active at each time,
that is when $m=1$.
More generally, Whittle's index policy is not optimal for some restless bandit problems even when the arms
are indexable, but indexability is still a rather useful concept, 
since if all arms are indexable and certain other conditions hold, Whittle's policy is asymptotically
optimal, as we now explain. 
Consider a sequence of restless bandit problems parameterised by the number of
indexable arms $n$ and in which $m = \alpha n$ of the arms can be simultaneously active 
for some fixed $\alpha \in (0,1)$.
Then as $n$ tends to infinity, 
the time-average cost per arm for Whittle's index policy converges to the time-average
cost per arm for an optimal policy, provided a certain fluid approximation has a unique fixed point.
This result was first demonstrated by Weber and Weiss~\cite{Weber90} who for simplicity of exposition
only considered the symmetric case in which the $n$ arms have identical costs and transition probabilities.
Recently, Verloop~\cite{Verloop14} extended this result to asymmetric cases 
involving multiple types of arms. 
Interestingly, this extension also covers cases where new arms arrive and old arms depart.

Restless bandits associated with scalar Kalman(-Bucy) filters in continuous time were recently shown to be indexable~\cite{LeNy11} and the corresponding discrete-time problem has attracted considerable attention over a long period~\cite{Meier67, LaScala06, NinoMora09, Villar12}.
However, that attention has produced no satisfactory proof of indexability -- {\it even} for scalar time-series and even if we assume that there is 
a {\it monotone} optimal policy for the single-arm problem, 
which is a policy that plays the arm if and only if the relevant belief-state 
exceeds some threshold (here the relevant belief-state is a posterior variance).
Theorem~\ref{main} of this paper addresses that gap. After formalising the problem (Section 2), we describe the concepts and intuition (Section~3) behind the main result (Section 4). The main tools are {\it mechanical words} (which are not sufficiently well-known) and {\it Schur convexity}. As these tools are associated with rather general theorems, we believe that future work (Section 5) should enable substantial generalisation of our results.

\section{Problem and Index}
We consider the problem of tracking $N$ time-series, which we call arms,
in discrete time. 
The state $Z_{i,t}\in \R$ of arm $i$ at time $t\in \Z_+$ evolves
as a standard-normal random walk independent 
of everything but its immediate past ($\Z_+, \R_-$ and $\R_+$ all include zero).
The action space is $\mathcal{U} := \{1, \dots, N\}$.
Action $u_t=i$ makes an expensive observation $Y_{i,t}$ of arm $i$
which is normally-distributed about $Z_{i,t}$ 
with precision $b_i\in\R_+$ and we receive 
cheap observations $Y_{j,t}$ of 
each other arm $j$ with precision $a_j\in\R_+$
where $a_j<b_j$ and $a_j=0$ means no observation at all.

Let $Z_t, Y_t, \mathcal{H}_t, \mathcal{F}_t$ be the state, observation,
history and observed history, so that
$Z_t:=(Z_{1,t}, \dots, Z_{N,t}), 
Y_t:= (Y_{1,t}, \dots, Y_{N,t}),
\mathcal{H}_t:= ((Z_{0},u_0,Y_{0}), \dots, (Z_{t},u_t,Y_{t}))$ and 
$\mathcal{F}_t := ((u_{0}, Y_{0}), \dots, (u_{t},Y_{t})).$
Then we formalise the above as ($\I{\cdot}$ is the indicator function)
\begin{align*}
Z_{i,0} &\sim \mathcal{N}(0,1), 
& Z_{i,t+1}\mid\mathcal{H}_{t} &\sim \mathcal{N}(Z_{i,t}, 1), 
& Y_{i,t}\mid\mathcal{H}_{t-1}, Z_{t}, u_t &\sim \mathcal{N}\left( Z_{i,t}, 
\frac{\I{u_t\neq  i}}{a_i}+\frac{\I{u_t=i}}{b_i}\right) .
\end{align*}
Note that this setting is readily generalised to 
$\E[(Z_{i,t+1}-Z_{i,t})^2]\neq 1$ by a change of variables.

Thus the posterior belief is given by the Kalman filter as
$Z_{i,t}\mid\mathcal{F}_t \sim \mathcal{N}(\hat Z_{i,t}, x_{i,t})$
where the posterior mean is ${\hat Z}_{i,t}\in \R$ and the {\it error variance} $x_{i,t}\in\R_+$ satisfies
\begin{align}
x_{i,t+1} = \phi_{i,\I{u_{t+1}=i}}(x_{i,t}) \quad \text{where} \quad
\phi_{i,0}(x) := \frac{x+1}{a_ix+a_i+1} \ \ \text{and} \ \
\phi_{i,1}(x) := \frac{x+1}{b_ix+b_i+1} .
\label{phi}
\end{align}

{\bf Problem KF1.} 
Let $\pi$ be a policy so that $u_t = \pi(\mathcal{F}_{t-1})$.
Let $x^\pi_{i,t}$ be the error variance under $\pi$.
The problem is to choose $\pi$ so as to minimise the following 
objective for discount factor $\beta\in [0,1)$. 
The objective consists of a weighted sum of error variances $x^\pi_{i,t}$
with weights $w_i\in\R_+$ plus observation costs 
$h_i\in\R_+$ for $i=1, \dots, N$:
\begin{align*}
\E \left[ 
	\sum_{t=0}^\infty \sum_{i=1}^N \beta^t \left\{ 
	 h_i \I{u_t=i} + w_i x_{i,t}^\pi \right\}
\right] = \sum_{t=0}^\infty \sum_{i=1}^N \beta^t \left\{ 
	 h_i \I{u_t=i} + w_i x_{i,t}^\pi \right\}
\end{align*}
where the equality follows as~(\ref{phi}) is a deterministic mapping 
(and assuming $\pi$ is deterministic). 

{\bf Single-Arm Problem and Whittle Index.} Now fix an arm $i$ and write
$x_t^\pi,\phi_0(\cdot), \dots$  instead of $x_{t,i}^\pi, \phi_{i,0}(\cdot), \dots$.
Say there are now two actions $u_t=0,1$ corresponding to cheap and expensive observations respectively
and the expensive observation now costs $h+\nu$ where $\nu\in\R$.
The {\it single-arm problem} is to 
choose a policy, which here is an action sequence, $\pi:=(u_0, u_1, \dots)$ 
\begin{align}
\text{so as to minimise} \quad V^\pi(x|\nu) := 
\sum_{t=0}^\infty \beta^t \left\{ 
	 (h+\nu) u_t + w x_{t}^\pi \right\} \quad \text{where $x_0 = x$.}
\label{subproblem}
\end{align}
Let $Q(x,\alpha|\nu)$ be the optimal cost-to-go in this problem if the first
action must be $\alpha$ and let $\pi^*$ be an optimal policy, so that
\begin{align*}
Q(x,\alpha|\nu) := (h+\nu) \alpha + w x + \beta V^{\pi^*}(\phi_\alpha(x)|\nu) .
\end{align*}
For any fixed $x\in\R_+$, the value of $\nu$ for which actions $u_0 = 0$
and $u_0 = 1$ are both optimal is known as the {\it Whittle index} $\lambda^W(x)$
assuming it exists and is unique. 
In other words 
\begin{align}
\text{\it The Whittle index $\lambda^W(x)$ is the solution to $Q(x,0|\lambda^W(x))=Q(x,1|\lambda^W(x)).$}
\label{Qsinglearm}
\end{align}
Let us consider a policy which takes action $u_0=\alpha$ then acts optimally producing actions $u_t^{\alpha*}(x)$ and error variances $x_t^{\alpha *}(x)$.
Then~(\ref{Qsinglearm}) gives
\begin{align*}
\sum_{t=0}^\infty \beta^t \left\{ 
	 (h+\lambda^W(x)) u^{0*}_t + w x_{t}^{0*}(x) \right\}
= \sum_{t=0}^\infty \beta^t \left\{ 
	 (h+\lambda^W(x)) u^{1*}_t + w x_{t}^{1*}(x) \right\}.
\end{align*}
Solving this linear equation for the index $\lambda^W(x)$ gives
\begin{align}
\lambda^W(x) = w \frac{\sum_{t=1}^\infty \beta^t (x_{t}^{0*}(x)-x_{t}^{1*}(x))}{
\sum_{t=0}^\infty \beta^t (u^{1*}_t(x)-u^{0*}_t(x))} - h.
\label{index1}
\end{align}
Whittle~\cite{Whittle88} recognised that for his index policy (play the arm with the largest $\lambda^W(x)$) to make sense, 
any arm which receives an expensive observation for 
added cost $\nu$, must also receive an expensive observation for 
added cost $\nu'<\nu$.
Such problems are said to be {\it indexable}.
The question resolved by this paper is whether 
Problem KF1 is indexable.
Equivalently, is $\lambda^W(x)$ non-decreasing in $x\in\R_+$?

\section{Main Result, Key Concepts and Intuition}
We make the following intuitive assumption about threshold (monotone) policies. 
\vspace{0.2cm} \\
{\bf A1.} {\it For some $x\in\R_+$ depending on $\nu\in\R$, the policy $u_t = \I{x_t \ge x}$ is optimal for problem~(\ref{subproblem}).} \vspace{0.2cm}

Note that under A1, definition~(\ref{Qsinglearm}) means
the policy $u_t = \I{x_t > x}$ is also optimal, so we can choose
\begin{align}
\left.\begin{aligned}
u_{t}^{0*}(x) &:= \begin{cases} 0 & \text{if $x_{t-1}^{0*}(x)\le x$} \\ 1 & \text{otherwise} \end{cases}
& \quad\text{and}\quad
x_{t}^{0*}(x) &:= \begin{cases} \phi_0(x_{t-1}^{0*}(x)) & \text{if $x_{t-1}^{0*}(x)\le x$} \\ \phi_1(x_{t-1}^{0*}(x))  & \text{otherwise} \end{cases}
\\
u_{t}^{1*}(x) &:= \begin{cases} 0 & \text{if $x_{t-1}^{1*}(x)< x$} \\ 1 & \text{otherwise} \end{cases} 
& \quad\text{and}\quad
x_{t}^{1*}(x) &:= \begin{cases} \phi_0(x_{t-1}^{1*}(x)) & \text{if $x_{t-1}^{1*}(x)< x$} \\ \phi_1(x_{t-1}^{1*}(x))  & \text{otherwise} \end{cases}
\end{aligned}
\quad\right\}
\label{orbits}
\end{align}
where $x_0^{0*}(x)=x_0^{1*}(x)=x$.
We refer to $x_t^{0*}(x), x_t^{1*}(x)$ as the {\it $x$-threshold orbits}
(Figure~\ref{fig:orbits}).

We are now ready to state our main result.

{\bf Theorem 1.} {\it
Suppose a threshold policy (A1) is optimal for the single-arm problem~(\ref{subproblem}). 
Then Problem KF1 is indexable. 
Specifically, for any $b>a\ge 0$ let
\begin{align*}
\phi_0(x)&:=\frac{x+1}{a x+a+1}, &\phi_1(x)&:=\frac{x+1}{bx+b+1}
\end{align*}
and for any $w\in\R_+, h\in\R$ and $0<\beta<1$, let
\begin{align}
\lambda^W(x) := w \frac{\sum_{t=1}^\infty \beta^t (x_{t}^{0*}(x)-x_{t}^{1*}(x))}{
\sum_{t=0}^\infty \beta^t (u^{1*}_t(x)-u^{0*}_t(x))} - h
\label{index}
\end{align}
in which action sequences $u_{t}^{0*}(x), u_{t}^{1*}(x)$
and error variance sequences $x_{t}^{0*}(x), x_{t}^{1*}(x)$
are given in terms of $\phi_0, \phi_1$ by~(\ref{orbits}).
Then $\lambda^W(x)$ is a continuous and non-decreasing function of $x\in \R_+$.}

\begin{figure}
\centering
\includegraphics[bb=15mm 105mm 195mm 200mm,clip=true,width=12.8cm]{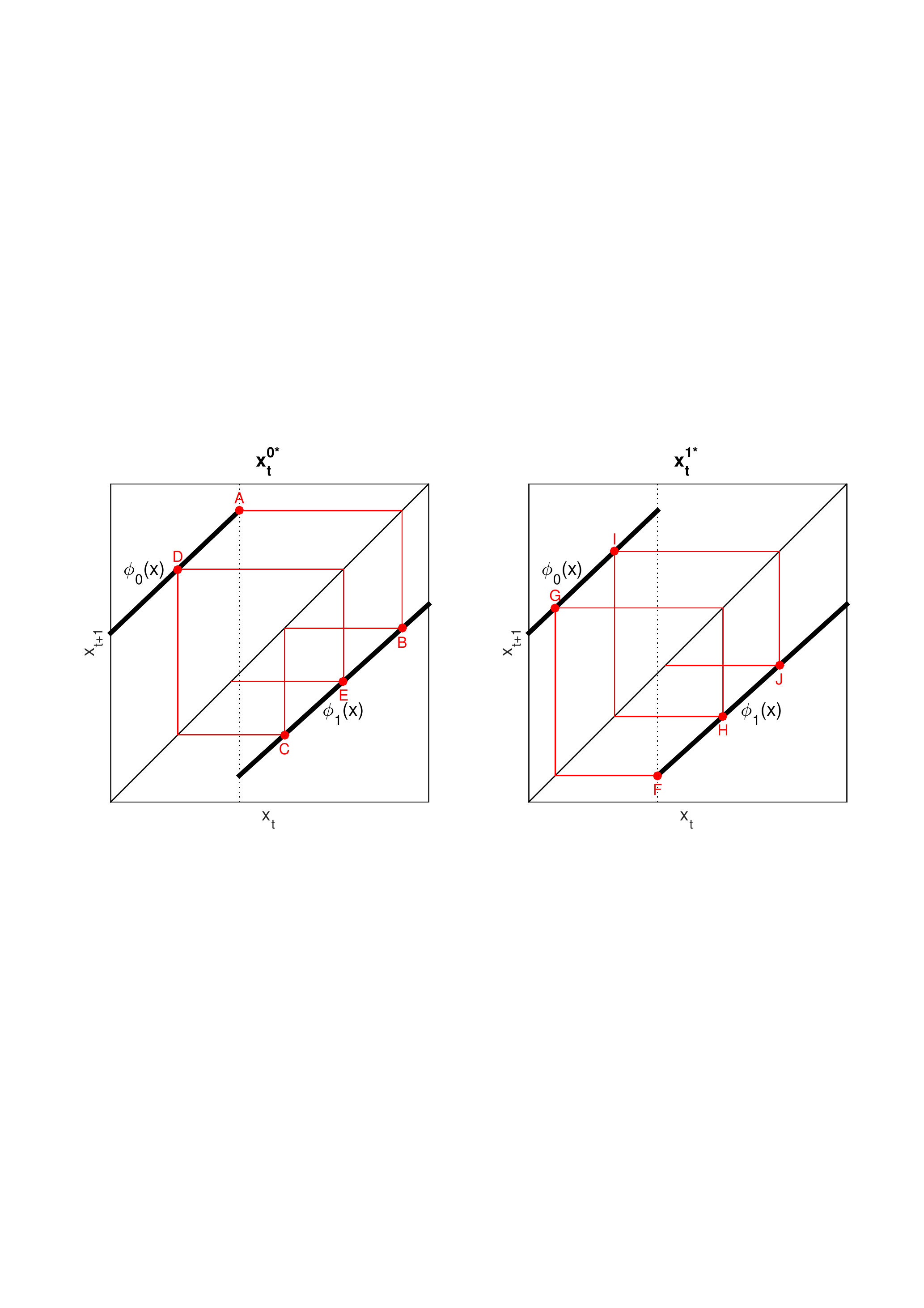}
\caption{Orbit $x^{0*}_t(x)$ traces the path $ABCDE\dots$ for the word $01w=01101$.
Orbit $x^{1*}_t(x)$ traces the path $FGHIJ\dots$ 
for the word $10w=10101$.
Word $w=101$ is a palindrome.}
\label{fig:orbits}
\end{figure}

We are now ready to describe the key concepts underlying this result.

{\bf Words.} 
In this paper, a {\it word} $w$ is a string on $\{0,1\}^*$ with $k^{\rm th}$ letter $w_k$ and $w_{i:j}:=w_{i} w_{i+1} \dots w_{j}$. The empty word is $\epsilon$, the concatenation of words $u,v$ is $uv$,
the word that is the $n$-fold repetition of $w$ is $w^n$,
the infinite repetition of $w$ is $w^\omega$ 
and $\tilde w$ is the reverse of $w$,
so $w=\tilde w$ means $w$ is a palindrome.
The length of $w$ is $\abs{w}$ and $\abs{w}_u$ is the number
of times that word $u$ appears in $w$, overlaps included.

{\bf Christoffel, Sturmian and Mechanical Words.}
It turns out that the action sequences in~(\ref{orbits}) are given
by such words, so the following definitions are central to this paper.

The {\it Christoffel tree} (Figure~\ref{fig:tree}) is an infinite complete binary tree~\cite{Berstel08} in which each node is labelled with a pair $(u,v)$ of words. 
The root is $(0,1)$ and the children of $(u,v)$ are $(u,uv)$ and $(uv,v)$.
The {\it Christoffel words} are the words $0,1$ and the concatenations $uv$ for all $(u,v)$ in that tree.
The fractions $\abs{uv}_1 / \abs{uv}_0$ form the Stern-Brocot tree~\cite{Graham94} which contains each positive rational number exactly once.
Also, infinite paths in the Stern-Brocot tree converge to the positive irrational numbers. Analogously, {\it Sturmian words} could be thought of 
as infinitely-long Christoffel words.

Alternatively, among many known characterisations, the Christoffel words can be defined as the words $0,1$
and the words $0w1$ where $a:=\abs{0w1}_1 / \abs{0w1}$ and 
\begin{align*}
(01w)_n := \lfloor (n+1)a \rfloor - \lfloor na \rfloor
\end{align*}
for any relatively prime natural numbers $\abs{0w1}_0$ and $\abs{0w1}_1$
and for $n=1, 2, \dots, \abs{0w1}$.
The Sturmian words are then the infinite words $0w_1w_2\cdots$ where,
for $n=1, 2, \dots$ and $a\in (0,1) \backslash \mathbb{Q}$,
\begin{align*}
(01w_1w_2\cdots)_n := \lfloor (n+1)a \rfloor - \lfloor na \rfloor .
\end{align*}
We use the notation $0w1$ for Sturmian words although they are infinite.
\begin{figure}
\centering
\includegraphics[bb=20mm 130mm 195mm 165mm,clip=true,width=12.8cm]{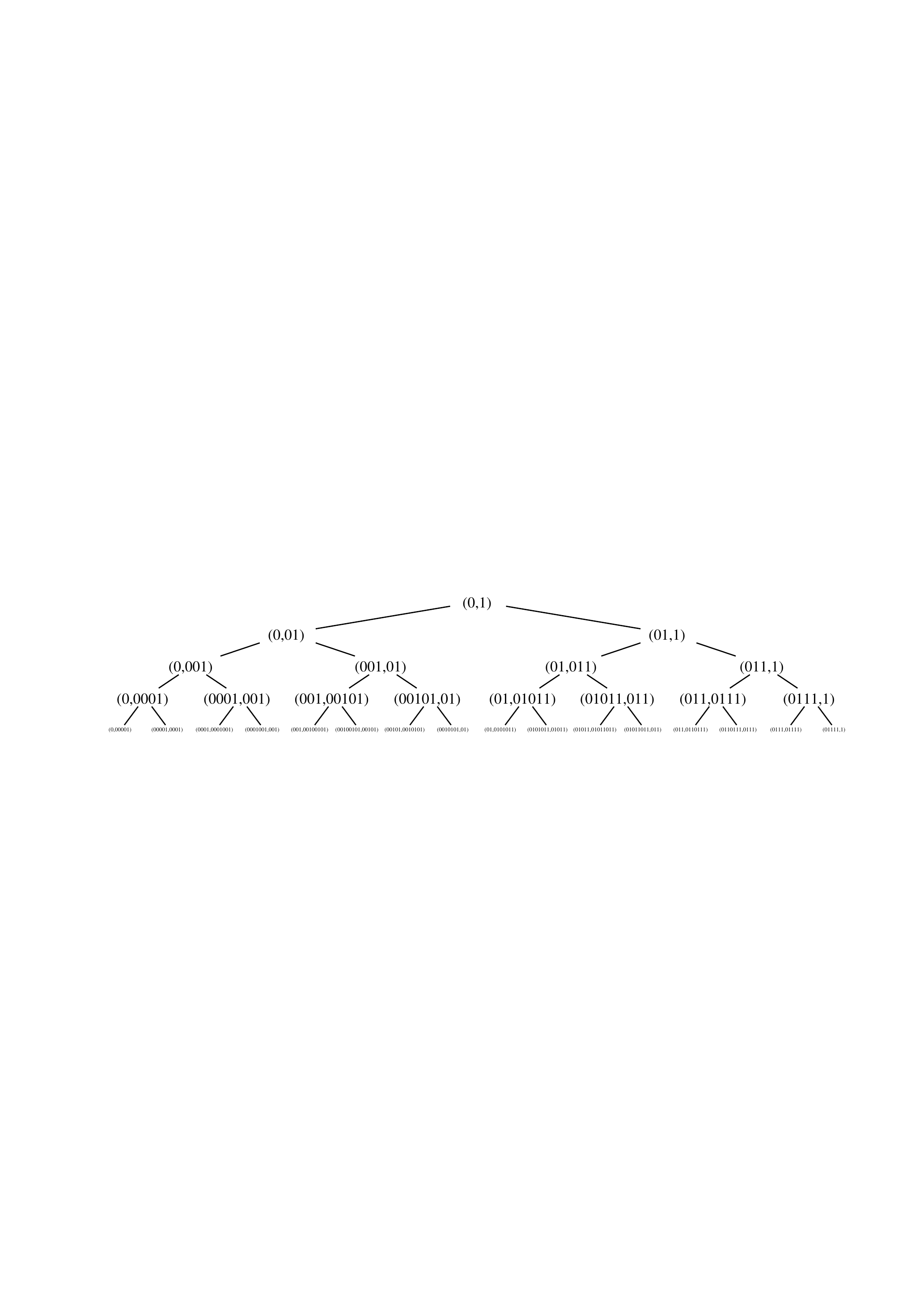}
\caption{Part of the Christoffel tree.}
\label{fig:tree}
\end{figure}

The set of {\it mechanical words} is the union of the Christoffel and Sturmian  words~\cite{Lothaire02}. (Note that the mechanical words are sometimes
defined in terms of infinite repetitions of the Christoffel words.)

{\bf Majorisation.}
As in~\cite{Marshall10}, let $x,y\in \R^m$ and let $x_{(i)}$ and $y_{(i)}$ be their elements sorted in {\it ascending} order.
We say $x$ is {\it weakly supermajorised} by $y$ and write
$x\prec^w y$ if 
\begin{align*}
\sum_{k=1}^j x_{(k)} \ge \sum_{k=1}^j y_{(k)}\qquad\text{for all $j=1, \dots, m$.}
\end{align*}
If this is an equality for $j=m$ we say $x$ is {\it majorised} by $y$
and write $x\prec y$.
It turns out that 
\begin{align*}
x\prec y \qquad &\Leftrightarrow \qquad \sum_{k=1}^j x_{[k]} \le \sum_{k=1}^j y_{[k]} \quad \text{for $j=1, \dots, m-1$ with equality for $j=m$}
\intertext{
where $x_{[k]},y_{[k]}$ are the sequences sorted in {\it descending} order.
For $x,y\in\R^m$ we have~\cite{Marshall10}}
x\prec y \qquad &\Leftrightarrow\qquad
\sum_{i=1}^m f(x_i) \le \sum_{i=1}^m f(y_i) \quad\text{for all convex functions
$f:\R\rightarrow \R$.}
\end{align*}
More generally, a real-valued function $\phi$ defined on a subset $\mathcal{A}$ of $\R^m$ is said to be {\it Schur-convex}
on $\mathcal{A}$ if $x\prec y$ implies that $\phi(x)\le\phi(y)$.

{\bf M{\"o}bius Transformations.} Let $\mu_A(x)$ denote the M{\"o}bius transformation $\mu_A(x) := \frac{A_{11}x+A_{12}}{A_{21}x+A_{22}}$ where $A\in\R^{2\times 2}$.
M{\"o}bius transformations such as 
$\phi_0(\cdot), \phi_1(\cdot)$ are closed under composition, 
so 
for any word $w$ we define 
$\phi_w(x):=\phi_{w_{\abs{w}}}\circ \dots \circ \phi_{w_2} \circ \phi_{w_1}(x)$ and $\phi_\epsilon(x) := x.$

{\bf Intuition.} Here is the intuition behind our main result.
 
For any $x\in \R_+$, the orbits in~(\ref{orbits}) 
correspond to a particular mechanical word $0,1$ 
or $0w1$ depending on the value of $x$
(Figure~\ref{fig:orbits}).
Specifically, for any word $u$, let $y_u$ be the fixed point of the mapping
$\phi_u$ on $\R_+$ so that $\phi_u(y_u)=y_u$ and $y_u\in\R_+$.
Then the word corresponding to $x$ is 1 for
$0\le x\le y_1$, $0w1$ for $x\in [y_{01w},y_{10w}]$ and 0 for 
$y_0\le x < \infty$.
In passing we note that these fixed points 
are sorted in ascending order
by the ratio $\rho:=\abs{01w}_0 / \abs{01w}_1$ of counts of 0s to counts of 1s,
as illustrated by Figure~\ref{fig:intuition}.
Interestingly, it turns out that ratio $\rho$ 
is a piecewise-constant yet continuous function of $x$,
reminiscent of the Cantor function.

Also, composition of M{\"o}bius transformations 
is homeomorphic to matrix multiplication
so that 
\begin{align*}
\mu_A \circ \mu_B(x) = \mu_{AB}(x) \qquad\text{for any $A,B\in\R^{2\times 2}.$} 
\end{align*}
Thus, the index~(\ref{index}) can be written in terms of the orbits of a linear system (\ref{sigmas}) given by $0, 1$ or $0w1.$
Further, if $A\in\R^{2\times2}$ and $\det(A)=1$ 
then the gradient of the corresponding M{\"o}bius transformation is the convex function
\begin{align*}
\frac{d\mu_A(x)}{dx} = \frac{1}{(A_{21}x+A_{22})^{2}}.
\end{align*}
So the gradient of the index
is the difference of the sums of a convex function of the 
linear-system orbits.
However, such sums are Schur-convex functions
and it follows that the index is increasing because one orbit
weakly supermajorises the other, as we now show for the case
$0w1$ (noting that the proof is easier for words $0,1$).
As $0w1$ is a mechanical word, $w$ is a palindrome.
Further, if $w$ is a palindrome, it turns out that 
the difference between the linear-system orbits increases with $x$.
So, we might define the {\it majorisation point} for $w$ as the $x$ for which 
one orbit majorises the other.
Quite remarkably, 
if $w$ is a palindrome then the majorisation point is $\phi_w(0)$ (Proposition~\ref{major}).
Indeed the black circles and blue dots of Figure~\ref{fig:intuition} coincide.
Finally, $\phi_w(0)$ is less than or equal to $y_{01w}$
which is the least $x$
for which the orbits correspond to the word $0w1$. 
Indeed, the blue dots of Figure~\ref{fig:intuition} are below the corresponding black dots. 
Thus one orbit does indeed supermajorise the other.

\begin{figure}
\centering
\includegraphics[bb=20mm 105mm 185mm 187mm,clip=true,width=12.8cm]{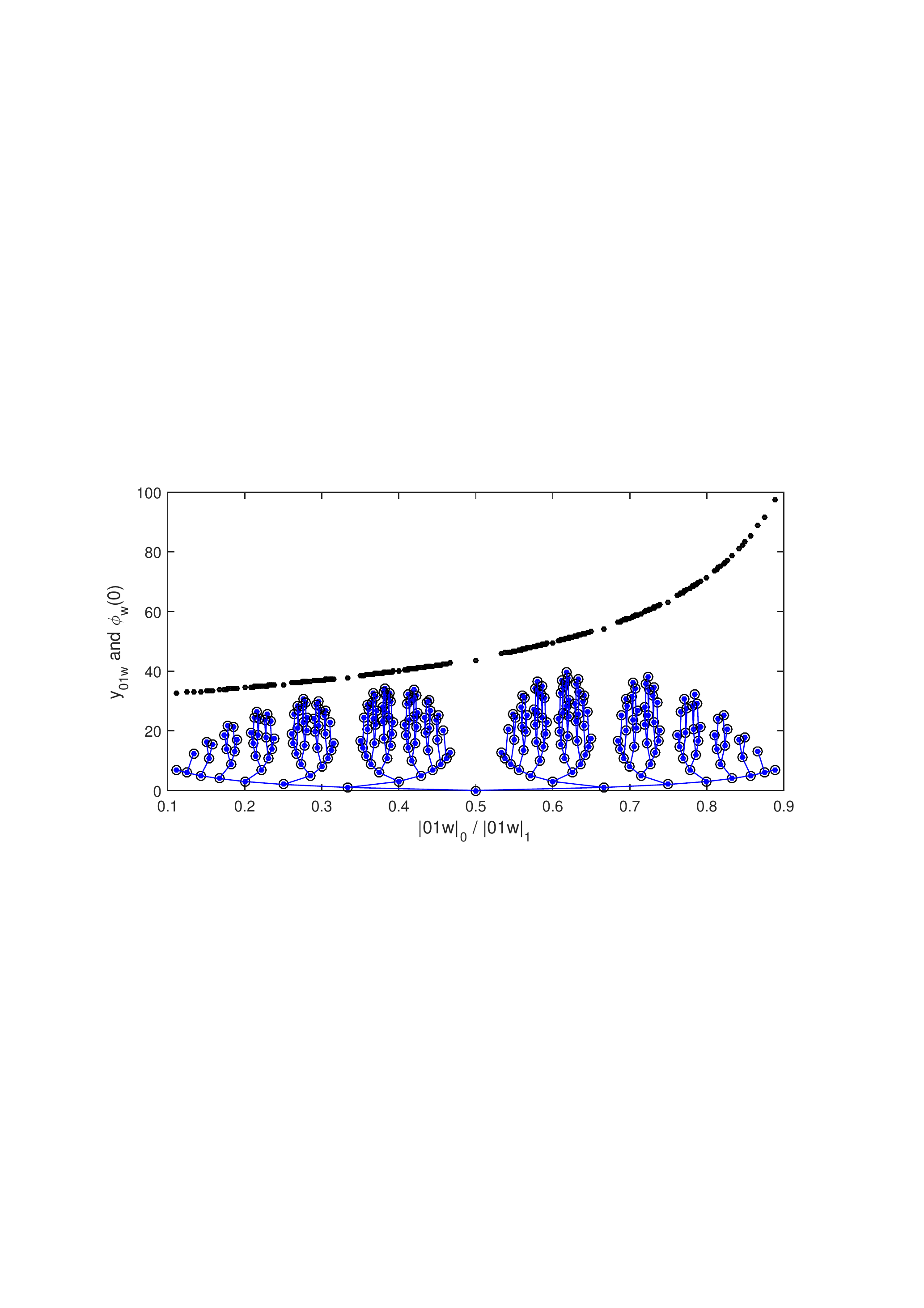}
\caption{Lower fixed points $y_{01w}$
of Christoffel words (black dots), majorisation points for those words (black circles) and the tree of $\phi_w(0)$ (blue).}
\label{fig:intuition}
\end{figure}

\section{Proof of Main Result}
\subsection{Mechanical Words}
The M{\"o}bius transformations of~(\ref{phi}) satisfy the following assumption for $\IR:= \R_+$.
We prove that the fixed point $y_w$ of word $w$
(the solution to $\phi_w(x)=x$ on $\IR$) is unique in the supplementary material.

{\bf Assumption A2.} {\it Functions $\phi_0 : \IR \rightarrow \IR, \phi_1 : \IR \rightarrow \IR$, where $\IR$ is an interval of $\mathbb{R}$, are increasing and non-expansive, so for all $x, y\in\IR : x < y$ and for $k \in \{0,1\}$ we have
\begin{align*}
\underbrace{\phi_k(x) < \phi_k(y)}_{\text{increasing}} \qquad \qquad \text{and} \qquad \qquad \underbrace{\phi_k(y) - \phi_k(x) < y-x}_{\text{non-expansive}} .
\end{align*}
Furthermore, the fixed points $y_0, y_1$ of $\phi_0, \phi_1$ on $\IR$ satisfy $y_1 < y_0$. }

Hence the following two propositions (supplementary material) apply to $\phi_0, \phi_1$ of~(\ref{phi}) on $\IR=\R_+$.
\begin{proposition} Suppose A2 holds, $x\in \IR$ and $w$ is a non-empty word. Then 
\begin{align*}
x < \phi_w(x) \ \Leftrightarrow \  \phi_w(x) < y_w \ \Leftrightarrow \  x < y_w
&& \text{and} &&
x > \phi_w(x) \ \Leftrightarrow \  \phi_w(x) > y_w \ \Leftrightarrow \  x > y_w.
\end{align*}
\label{increasing}
\end{proposition}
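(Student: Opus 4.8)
The plan is to reduce everything to two inherited properties of the composite map $\phi_w$ and then to split on the position of $x$ relative to the fixed point $y_w$. First I would check that, for every non-empty word $w$, the map $\phi_w:\IR\to\IR$ is increasing and strictly non-expansive, i.e.\ $\phi_w(x)<\phi_w(y)$ and $\phi_w(y)-\phi_w(x)<y-x$ for all $x<y$ in $\IR$. Monotonicity is immediate, a composition of increasing maps being increasing. For non-expansiveness, write $\phi_w=\psi\circ\phi_{w_1}$ with $\psi:=\phi_{w_{\abs{w}}}\circ\cdots\circ\phi_{w_2}$ (the identity when $\abs{w}=1$); each $\phi_k$ is strictly non-expansive, hence weakly non-expansive, and a composition of increasing weakly non-expansive maps is weakly non-expansive, so $\psi$ is weakly non-expansive. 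Thus for $x<y$, monotonicity of $\phi_{w_1}$ gives $\phi_{w_1}(x)<\phi_{w_1}(y)$ and therefore $\phi_w(y)-\phi_w(x)=\psi(\phi_{w_1}(y))-\psi(\phi_{w_1}(x))\le\phi_{w_1}(y)-\phi_{w_1}(x)<y-x$, the last inequality being the strict non-expansiveness of the single map $\phi_{w_1}$ from A2.

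Next I would establish the single implication that drives the proposition: if $x<y_w$ then $x<\phi_w(x)$ and $\phi_w(x)<y_w$. Monotonicity applied to $x<y_w$ gives $\phi_w(x)<\phi_w(y_w)=y_w$, and strict non-expansiveness applied to the pair $x<y_w$ gives $y_w-\phi_w(x)=\phi_w(y_w)-\phi_w(x)<y_w-x$, i.e.\ $x<\phi_w(x)$. By the symmetric argument, $x>y_w$ implies $\phi_w(x)>y_w$ and $x>\phi_w(x)$, while $x=y_w$ forces $\phi_w(x)=x=y_w$ by the fixed-point property (existence and uniqueness of $y_w$ on $\IR$, needed for ``the fixed point'' to be well defined, being supplied separately in the supplementary material).

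Finally I would assemble the equivalences: exactly one of $x<y_w$, $x=y_w$, $x>y_w$ holds, and the three conditions $x<\phi_w(x)$, $\phi_w(x)<y_w$, $x<y_w$ all hold in the first case and all fail in the other two, so they are equivalent; reversing every inequality and invoking the $x>y_w$ case gives the second chain. I do not expect a genuine obstacle here; the only point needing care is keeping the non-expansiveness strict through the composition, so that $x\neq y_w$ really does force $\phi_w(x)$ to move strictly toward $y_w$.
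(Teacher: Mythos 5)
Your argument is essentially the same as the paper's: you first establish that $\phi_w$ inherits monotonicity and strict non-expansiveness (the paper proves this by induction on $\abs{w}$, you peel off the first letter instead of the last, but either works), then apply these two properties to the pair $(x,y_w)$ to get the forward implications, and close the equivalences via trichotomy rather than the paper's explicit contrapositive-plus-injectivity step. Both are correct and the differences are cosmetic.
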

\vspace{-0.7cm}
For a given $x$, in the notation of~(\ref{orbits}), 
we call the shortest word $u$ such that 
$(u^{1*}_1,u^{1*}_2, \dots)=u^\omega$ the {\it $x$-threshold word}.
Proposition~\ref{mechanical} generalises a recent 
result about $x$-threshold words in 
a setting where $\phi_0, \phi_1$ are linear~\cite{Rajpathak12}.
\begin{proposition}
Suppose A2 holds and $0w1$ is a mechanical word. 
Then 
\begin{align*}
\text{$0w1$ is the $x$-threshold word} \ \Leftrightarrow \ x \in [y_{01w}, y_{10w}].
\end{align*}
Also, if $x_0, x_1\in\mathcal{I}$ with $x_0\ge y_0$ and $x_1\le y_1$ 
then the $x_0$- and $x_1$-threshold words are $0$ and $1$.
\label{mechanical}
\end{proposition}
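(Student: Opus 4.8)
The plan is to read the sequences of~(\ref{orbits}) as the itinerary of a threshold map on $\IR$ (apply $\phi_1$ and emit a $1$ when the current point is at least the threshold $x$, else apply $\phi_0$ and emit a $0$), to determine when that itinerary is eventually periodic, and to match its period with the arithmetic description of mechanical words; the two tie-break conventions in~(\ref{orbits}) are exactly what make the interval $[y_{01w},y_{10w}]$ closed at both ends. The only analytic input is Proposition~\ref{increasing}, together with $y_1<y_0$ from A2 and its consequence (from Proposition~\ref{increasing}) that $\phi_1<\phi_0$ on $(y_1,y_0)$ --- the interval in which all the relevant fixed points and orbits live.

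The two side assertions are quick. If $x\ge y_0$, then $x_0^{1*}(x)=x\ge x$ so the first step is active and $x_1^{1*}(x)=\phi_1(x)$; since $x>y_1$, Proposition~\ref{increasing} gives $y_1<\phi_1(x)<x$, so the orbit has dropped strictly below the threshold. From then on every step is passive, and Proposition~\ref{increasing} applied to $\phi_0$ shows $x_t^{1*}(x)$ stays strictly below $x$ (monotonically approaching $y_0$), so the itinerary is eventually the constant $0$ and the $x$-threshold word is $0$. If $x\le y_1$, every step is active, the orbit is monotone inside $[x,y_1]$ and never drops below the threshold, so the $x$-threshold word is $1$.

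For the main equivalence I would first treat the endpoints. Let $V$ be a cyclic rotation of the mechanical word $0w1$ and $y_V$ its (unique, by the supplementary material) fixed point on $\IR$; iterating $z\mapsto\phi_V(z)$ from $y_V$ produces a purely periodic orbit whose itinerary is $V^\omega$. The crux is to show this coincides with the threshold dynamics of~(\ref{orbits}), i.e. that for every prefix length $k$ the partial-orbit point $\phi_{V_{1:k}}(y_V)$ lies on the side of the threshold $y_V$ prescribed by the letter $V_{k+1}$. This is exactly the balance property of the mechanical word: via the characterisation $(01w)_n=\lfloor (n+1)a\rfloor-\lfloor na\rfloor$ with $a=\abs{0w1}_1/\abs{0w1}$ the number of $1$s in any window is pinned to within one, and since each application of $\phi_1$ pulls the orbit toward $y_1$ and each application of $\phi_0$ toward $y_0>y_1$ (Proposition~\ref{increasing}), that tight control is precisely what keeps each partial orbit point on the correct side of $y_V$. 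I would make this rigorous by comparing, step by step, the orbit of $\phi_V$ at $y_V$ against the orbit of the associated circle rotation and pushing the comparison through the monotone, non-expansive maps; this is where the linear-map argument of~\cite{Rajpathak12} gets generalised, its linearity hypothesis being replaced by A2. Carrying this out with $V=01w$ under the tie-break that is passive when the orbit point equals $x$ (the orbit $x^{0*}_t$), and with $V=10w$ under the opposite tie-break (the orbit $x^{1*}_t$), shows the $x$-threshold word is $0w1$ at $x=y_{01w}$ and at $x=y_{10w}$, and that (as part of the same analysis) $01w$ and $10w$ are the extreme rotations, so $y_{01w}\le y_{10w}$.

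It remains to handle $x$ strictly inside $(y_{01w},y_{10w})$ and to prove the converse, and this is where the main obstacle lies. Raising the threshold $x$ shrinks the active region $\{z\ge x\}$; combining this with Proposition~\ref{increasing} one shows the itinerary of~(\ref{orbits}) varies monotonically with $x$ and can change only as $x$ passes a periodic point $y_V$, and since $[y_{01w},y_{10w}]$ contains no fixed point of a rotation outside the cyclic class of $0w1$ (the combinatorial input, matching the nesting of fixed points in Figure~\ref{fig:intuition}) the itinerary is constant on that interval and eventually periodic with period $0w1$. Conversely, if the $x$-threshold word is $0w1$ then the orbit of~(\ref{orbits}) is eventually a periodic orbit, so one of its points equals some $y_V$ for a rotation $V$ of $0w1$; running the itinerary back to time $0$ with Proposition~\ref{increasing} then forces $x\in[y_{01w},y_{10w}]$. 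The hard step throughout is the one that converts the arithmetic balance of the mechanical word into the geometric statement that $\phi_{V_{1:k}}(y_V)$ is correctly ordered relative to $y_V$ using only A2; I expect it to need a simultaneous induction tracking, for each prefix, both the count of $1$s (pinned by the floor formula) and the position of the orbit point relative to all the fixed points $y_V$, with non-expansiveness preventing the accumulated discrepancy from ever flipping an inequality. Everything else --- the side assertions, the monotonicity in $x$, and the converse --- should be comparatively routine once that dictionary is in place.
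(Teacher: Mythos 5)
Your plan is a genuinely different route from the paper's: you aim to identify the threshold dynamics directly with a circle rotation via the arithmetic balance property of mechanical words, whereas the paper (supplementary Propositions~\ref{prop:0n}--\ref{prop:zrange}) runs a renormalisation-style induction down the Christoffel tree. The paper first shows (Propositions~\ref{prop:0n},~\ref{prop:0n1}) that a threshold word containing $00$ (resp.\ $11$) must be of the form $L_q(w)$ (resp.\ $R_q(w)$), then proves, via Proposition~\ref{prop:zzt}, that $\phi_{0^q}$ maps the interval $[\tilde y_{01\tilde w},\tilde y_{10\tilde w}]$ for the ``desubstituted'' system $\tilde\phi_k:=\phi_{L_q(k)}$ onto $[y_{01w},y_{10w}]$, so the whole equivalence follows by induction on tree depth. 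This sidesteps entirely the hard step you flag: at no point does the paper need to convert the floor-function balance of $0w1$ into an ordering of $\phi_{V_{1:k}}(y_V)$ relative to $y_V$. Each morphism level only requires the elementary facts of Propositions~\ref{prop:incdec} and~\ref{prop:yorder}.

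That hard step is where your proposal has a real gap, and you are candid about it: ``comparing, step by step, the orbit of $\phi_V$ at $y_V$ against the orbit of the associated circle rotation'' is exactly the kind of topological conjugacy that the paper explicitly says may be difficult to exhibit in this generality (see the remark at the start of the supplementary material). A ``simultaneous induction tracking the count of $1$s and the position of the orbit point'' is not yet an argument; non-expansiveness alone does not obviously prevent the accumulated nonlinear discrepancy from flipping an inequality somewhere along a long prefix, and you have given no mechanism that controls it. Separately, the converse step is stated incorrectly: if the $x$-threshold word is $0w1$, the real orbit converges to, but generically never equals, a point $y_V$ for a rotation $V$ of $0w1$, so ``one of its points equals some $y_V$'' is false as written; you would have to argue by a limiting procedure and then justify ``running the itinerary back'' through the (contracting, hence non-surjective on $\IR$) maps. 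Finally, the assertion that $[y_{01w},y_{10w}]$ contains no fixed point of a rotation outside the cyclic class of $0w1$ is itself a nontrivial combinatorial fact that needs proof; in the paper this is subsumed by Proposition~\ref{prop:yorder} together with the interval bookkeeping in the proof of Proposition~\ref{prop:0n1}. Your side assertions (the cases $x\ge y_0$ and $x\le y_1$) are correct and essentially match the opening lines of the paper's proof of Proposition~\ref{prop:0n}.
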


We also use the following very interesting fact (Proposition 4.2 on p.28 of~\cite{Berstel08}). 
\begin{proposition}
Suppose $0w1$ is a mechanical word. Then $w$ is a palindrome.
\label{palindrome}
\end{proposition}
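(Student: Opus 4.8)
I would first reduce to the finite case. Along the branch of the Christoffel tree determined by the slope of a Sturmian word $0w1$, the central factors of the successive Christoffel words form an increasing sequence of palindromes (once the finite case is proved) whose lengths tend to infinity and whose limit is $w$; hence every finite prefix of $w$ lies inside a palindrome, which is the only meaningful reading of ``$w$ is a palindrome'' for an infinite word. So from now on $0w1$ is a Christoffel word, say $0w1=uv$ for a node $(u,v)$ of the Christoffel tree, and I would induct on the depth of that node. The difficulty is that the plain statement ``$w$ is a palindrome'' does not survive one step of the production $(u,v)\mapsto\{(u,uv),(uv,v)\}$, so the induction hypothesis has to be strengthened.

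Observe that the words occurring as components of nodes are precisely $0$, $1$ and the Christoffel words of length $\ge2$ (each node newly creates the concatenation $uv$, which starts with $0$ and ends with $1$); thus a left component is $0$ or has the form $0p1$, a right component is $1$ or has the form $0q1$, and I call $p$ and $q$ the \emph{central factors} (with $p=\epsilon$ when the left component is $0$ and $q=\epsilon$ when the right component is $1$). The invariant I would carry through the induction is that at a node with central factors $p$ (left) and $q$ (right),
\begin{align*}
p\text{ and }q\text{ are palindromes}\qquad\text{and}\qquad p\,1\,0\,q\;=\;q\,0\,1\,p,
\end{align*}
supplemented by the degenerate readings ``$q=0^{k}$ for some $k\ge0$ if the left component is $0$'' and ``$p=1^{k}$ for some $k\ge0$ if the right component is $1$''. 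This is enough, for in the generic case $u=0p1$, $v=0q1$ one has $uv=0\,(p\,1\,0\,q)\,1$, so $w=p\,1\,0\,q$ and $\tilde w=\tilde q\,0\,1\,\tilde p=q\,0\,1\,p=w$; in the degenerate cases $w$ is a power of $0$ or of $1$. The base node $(0,1)$ is immediate. For the inductive step, assume $(u,v)$ satisfies the invariant with central factors $p,q$ and pass to its children: the central factor of $uv$ is $q':=p\,1\,0\,q$, a palindrome by the invariant, and the identity needed at $(u,uv)$, namely $p\,1\,0\,q'=q'\,0\,1\,p$, collapses using the invariant twice,
\begin{align*}
p\,1\,0\,q' \;=\; p\,1\,0\,(p\,1\,0\,q) \;=\; p\,1\,0\,(q\,0\,1\,p) \;=\; (p\,1\,0\,q)\,0\,1\,p \;=\; (q\,0\,1\,p)\,0\,1\,p \;=\; q'\,0\,1\,p ,
\end{align*}
while the identity at $(uv,v)$ follows from $q'\,1\,0\,q=(q\,0\,1\,p)\,1\,0\,q=q\,0\,1\,(p\,1\,0\,q)=q\,0\,1\,q'$; the degenerate readings, including the transition to the full identity when a branch first leaves a spine of the tree, are handled in the same mechanical way. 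This closes the induction and gives the proposition.

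The one genuinely nontrivial point is finding and justifying this strengthening — the coupling $p\,1\,0\,q=q\,0\,1\,p$ between a node's two central factors — and verifying that both productions preserve it; the rest is bookkeeping, together with minor care over the $u=0$ and $v=1$ corners and over the Sturmian limit. A geometric alternative is also available \cite{Berstel08}: realise $0w1$ as the lower lattice path beneath the segment from $(0,0)$ to $(q,p)$, where $q=\abs{0w1}_0$ and $p=\abs{0w1}_1$; the half-turn about the midpoint of that segment permutes the integer lattice and interchanges ``weakly below'' with ``weakly above'' while fixing the segment, so it sends the lower Christoffel path to the upper one, whence the upper Christoffel word is the reversal $1\,\tilde w\,0$ of $0w1$. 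One then shows separately that the upper Christoffel word of a slope is obtained from the lower one by exchanging its first and last letters, i.e.\ equals $1\,w\,0$; comparing gives $\tilde w=w$. The crux there is that last identification, which I would obtain by translating the $w$-portion of the lower path by $(-1,+1)$ and checking it is the $w$-portion of the upper path, or again from the Stern--Brocot recursion.
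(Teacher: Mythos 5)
The paper does not actually prove this proposition --- it simply cites it as Proposition~4.2, p.~28 of Berstel et al.\ (2008), so any correct proof you supply is necessarily ``a different route'' from the paper's.

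Your argument is essentially correct and is a reasonable self-contained proof. The strengthened invariant you carry at each node $(0p1,0q1)$ of the Christoffel tree --- that $p$ and $q$ are palindromes and $p\,1\,0\,q = q\,0\,1\,p$ --- is exactly the commutation relation satisfied by standard pairs, and both productions $(u,v)\mapsto(u,uv)$ and $(u,v)\mapsto(uv,v)$ preserve it by the two-line computations you give, so the induction closes. You are right that this coupling is the one genuinely non-obvious step: the plain statement ``$w$ is a palindrome'' is not inductive on its own. Two places where a referee would want more care, though you flag both: (i) the ``degenerate readings'' along the spines $(0,0^k1)$ and $(01^k,1)$ need to be spelled out, including the check that when a branch first leaves a spine --- at a node of the form $(0^{k+1}1, 0^k1)$ or $(01^k, 01^{k+1})$ --- the identity $p\,1\,0\,q=q\,0\,1\,p$ actually initialises (it does, since $0^k10\,0^{k-1}=0^{k-1}01\,0^k$); (ii) the reduction of the Sturmian case to the finite case needs the fact that the central factors along an infinite branch form a chain of palindromic prefixes converging to $w$, which is true but is itself a nontrivial fact about Sturmian words (palindromic closure / Rauzy graphs) rather than a one-line observation. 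Your geometric alternative via the half-turn symmetry of the lattice path is also a standard and valid route and is closer in spirit to the proof in the reference the paper cites. Either way, what you gain over the paper is an explicit proof in place of a citation; what the paper gains by citing is brevity, since the result is classical.
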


\subsection{Properties of the Linear-System Orbits $M(w)$ and Prefix Sums $S(w)$}
{\bf Definition.} Assume that $a, b\in \R_+$ and $a<b$. 
Consider the matrices 
\begin{align*}
F:=\begin{pmatrix}1&1\\a&1+a\end{pmatrix},\qquad
G:=\begin{pmatrix}1&1\\b&1+b\end{pmatrix}
\quad\text{and}\quad K:=\begin{pmatrix}-1&-1\\0&1\end{pmatrix} 
\end{align*}
so that the M{\"o}bius transformations $\mu_F,\mu_G$ are the functions $\phi_0,\phi_1$ of~(\ref{phi}) and $GF-FG=(b-a)K$.
Given any word $w\in\{0,1\}^*$, we define the {\it matrix product} $M(w)$ 
\begin{align*}
M(w) := M(w_{\abs{w}}) \cdots M(w_1), \quad \text{where $M(\epsilon):=I, M(0):=F$ and $M(1):=G$}
\end{align*}
where $I\in\R^{2\times 2}$ is the identity and the {\it prefix sum} $S(w)$ as the matrix polynomial
\begin{align}
S(w) := \sum_{k=1}^{\abs{w}} M(w_{1:k}), \qquad \text{where $S(\epsilon) := 0$ (the all-zero matrix).}
\end{align}
For any $A\in\R^{2\times 2}$, let $\text{tr}(A)$ be the trace of $A$,
let $A_{ij}=[A]_{ij}$ be the entries of $A$ and let $A\ge 0$
indicate that all entries of $A$ are non-negative.

{\bf Remark.} Clearly, $\det(F)=\det(G)=1$ so that $\det(M(w))=1$ for any word $w$. Also, $S(w)$ corresponds to the partial sums of the linear-system orbits, as hinted in the previous section. 

The following proposition captures the role of palindromes (proof in the supplementary material).
\begin{proposition}
Suppose $w$ is a word, $p$ is a palindrome and $n\in \Z_+$.
Then
\begin{enumerate}
\item $M(p)=\begin{pmatrix}\frac{fh+1}{h+f}&f\\\frac{h^2-1}{h+f}&h\end{pmatrix}$ for some $f,h\in\R$,
\item $\text{tr}(M(10p))=\text{tr}(M(01p))$,
\item If $u\in\{p(10p)^n,(10p)^n10\}$ then 
$M(u)-M(\tilde u) = \lambda K$ for some $\lambda\in\R_-$,
\item If $w$ is a prefix of $p$ then $[M(p(10p)^n10w)]_{22}\le [M(p(01p)^n01w)]_{22}$,
\item $[M((10p)^n10w)]_{21}\ge [M((01p)^n01w)]_{21}$,
\item $[M((10p)^n1)]_{21}\ge[M((01p)^n0)]_{21}$.
\end{enumerate}
\label{prop:pal}
\end{proposition}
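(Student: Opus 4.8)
\emph{Proof plan.} I would base the whole proposition on a single observation about reversal. Set $R:=\bigl(\begin{smallmatrix}1&-1\\-1&0\end{smallmatrix}\bigr)$, so $R=R^{T}$ and $\det R=-1$. A direct $2\times 2$ check gives $RF^{T}=FR$ and $RG^{T}=GR$, i.e.\ $RF^{T}R^{-1}=F$ and $RG^{T}R^{-1}=G$. Since $M(uv)=M(v)M(u)$ and transposition reverses products, applying these two single-letter identities letter by letter yields the master identity $M(\tilde w)=R\,M(w)^{T}R^{-1}$ for \emph{every} word $w$. In what follows I use this, together with $\det M(w)=1$, the relation $GF-FG=(b-a)K$, the identity $K^{2}=I$, and the fact that every $M(w)$ has nonnegative entries (being a product of the nonnegative matrices $F,G$).

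Parts 1 and 2 are then immediate. If $p$ is a palindrome then $\tilde p=p$, so $M(p)R=R\,M(p)^{T}=\bigl(M(p)R\bigr)^{T}$; thus $M(p)R$ is symmetric, and equating its two off-diagonal entries gives $[M(p)]_{21}=[M(p)]_{22}-[M(p)]_{11}$. With $\det M(p)=1$ and $f:=[M(p)]_{12}$, $h:=[M(p)]_{22}$ this is exactly the matrix of part 1; equivalently (using $K^{2}=I$, $\det M(p)=1$) part 1 restates as $M(p)\,K\,M(p)=K$, which I use below. For part 2, $M(10p)-M(01p)=M(p)(FG-GF)=-(b-a)M(p)K$, so $\text{tr}(M(10p))-\text{tr}(M(01p))=-(b-a)\,\text{tr}(M(p)K)$, and a one-line computation from part 1 gives $\text{tr}(M(p)K)=0$.

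For part 3, the master identity gives $M(u)-M(\tilde u)=\bigl(M(u)R-R\,M(u)^{T}\bigr)R^{-1}$, and the bracketed matrix is antisymmetric, hence a scalar multiple of $\bigl(\begin{smallmatrix}0&1\\-1&0\end{smallmatrix}\bigr)$; since $\bigl(\begin{smallmatrix}0&1\\-1&0\end{smallmatrix}\bigr)R^{-1}=K$ we obtain $M(u)-M(\tilde u)=\lambda_{u}K$ for \emph{every} word $u$, with $\lambda_{u}=[M(u)]_{22}-[M(\tilde u)]_{22}$ from the $(2,2)$ entry, so only its sign is at issue (and $\widetilde{p(10p)^{n}}=p(01p)^{n}$, $\widetilde{(10p)^{n}10}=(01p)^{n}01$). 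For $u=p(10p)^{n}$ write $A_{n}:=M(p(10p)^{n})=M(p)\bigl(FG\,M(p)\bigr)^{n}$ and $B_{n}:=M(p(01p)^{n})=M(p)\bigl(GF\,M(p)\bigr)^{n}$, so $A_{n+1}=M(p)FG\,A_{n}$, $B_{n+1}=M(p)GF\,B_{n}$. Substituting $A_{n}=B_{n}+\lambda_{n}K$ and using $FG-GF=-(b-a)K$ and $M(p)KM(p)=K$ gives
\[
A_{n+1}-B_{n+1}=-(b-a)\,K\bigl(GF\,M(p)\bigr)^{n}+\lambda_{n}\,M(p)FGK ,
\]
whose $(2,2)$ entry is $\lambda_{n+1}=-(b-a)\bigl[(GF\,M(p))^{n}\bigr]_{22}+\bigl([M(p)]_{21}+(1+a)[M(p)]_{22}\bigr)\lambda_{n}$. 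Both coefficients are $\ge 0$, so $\lambda_{0}=0$ forces $\lambda_{n}\le 0$ for all $n$. The family $(10p)^{n}10$ is handled identically with $A_{n}:=M((10p)^{n}10)$, $B_{n}:=M((01p)^{n}01)$ and base case $M(10)-M(01)=-(b-a)K$, i.e.\ $\lambda_{0}=-(b-a)<0$.

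Parts 4--6 are genuine entrywise inequalities --- for $w\neq\epsilon$ the two words involved are no longer reverses of each other --- and I would prove them by a simultaneous induction on $n$, peeling one $10$ (resp.\ $01$) off the front at a time. Using $p(10p)^{n}10w=(p10)^{n+1}w$ one has $M(p(10p)^{n}10w)=M(w)\bigl(FG\,M(p)\bigr)^{n+1}$ and likewise with $GF\,M(p)$, while $(FG-GF)M(p)=-(b-a)KM(p)$ with $KM(p)=\bigl(\begin{smallmatrix}-[M(p)]_{22}&-[M(p)]_{12}-[M(p)]_{22}\\ [M(p)]_{21}&[M(p)]_{22}\end{smallmatrix}\bigr)$ (by part 1) having nonpositive first row and nonnegative second row; telescoping $(FG\,M(p))^{n+1}-(GF\,M(p))^{n+1}$ and using nonnegativity and $b>a$ then writes the difference in part 4 as a sum of terms each of which should carry a fixed sign. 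The hypothesis that $w$ is a prefix of $p$ additionally lets one grow $w$ one letter at a time, and parts 5 and 6 are exactly the $(2,1)$-entry companion estimates that close the $(2,2)$-entry induction of part 4 --- the three are mutually recursive, with part 6 at $n=0$ being just $b\ge a$ and part 5 at $n=0$ reading $[M(10w)]_{21}-[M(01w)]_{21}=(b-a)[M(w)]_{21}\ge 0$. I expect the real obstacle to be here: isolating the right bundle of auxiliary entrywise monotonicities of the $[M(\cdot)]_{ij}$ (relations such as ``second column $\ge$ first column'', the ordering of $[M(p)]_{11},[M(p)]_{12},[M(p)]_{22}$, and how these behave under left/right multiplication by $F$, $G$, $M(p)$ and by prefixes of $p$) that is at once strong enough to run the induction and stable enough to survive multiplication by the mixed-sign matrix $K$. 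Once that bundle is pinned down, parts 4--6 reduce to routine if lengthy $2\times 2$ bookkeeping.
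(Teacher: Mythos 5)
Your proofs of Claims 1--3 are correct but follow a somewhat different route from the paper's. The paper's reversal identity is $M(\tilde w)=KM(w)^{-1}K$; yours is $M(\tilde w)=RM(w)^TR^{-1}$ with $R=\bigl(\begin{smallmatrix}1&-1\\-1&0\end{smallmatrix}\bigr)$. These are equivalent (indeed since $\det M(w)=1$, $M(w)^{-1}$ is the adjugate, which is a sign-twisted transpose) and your version has the pleasant feature of making Claim~1 fall out from the symmetry of $M(p)R$ and making the ``$M(u)-M(\tilde u)\propto K$'' part of Claim~3 a one-liner for \emph{all} words $u$. For the \emph{sign} of $\lambda$ in Claim~3 you set up a two-term recursion $\lambda_{n+1}=-(b-a)c_n+\lambda_n d_n$ with $c_n,d_n\ge 0$ and exploit $M(p)KM(p)=K$, whereas the paper instead uses the ratio monotonicity $[M(w01w')]_{22}/[M(w01w')]_{21}\ge[M(w10w')]_{22}/[M(w10w')]_{21}$ applied repeatedly to pass from $u$ to $\tilde u$ by $10\leftrightarrow 01$ swaps. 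Your recursion is valid for both families (the second family needs $[FGM(p)K]_{22}\ge 0$ rather than $[M(p)FGK]_{22}\ge 0$, which also checks out), so this part of your proposal is genuinely an alternative.

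The problem is Claims 4--6, which you leave as a plan rather than a proof, and the plan as stated does not close. Your telescoping writes $[M(p(10p)^n10w)-M(p(01p)^n01w)]_{22}$ as $-(b-a)\sum_k[A_kKM(p)B_k]_{22}$ with $A_k,B_k\ge 0$, but $[A K M(p) B]_{22}=-A_{21}[M(p)B]_{12}+(A_{22}-A_{21})[M(p)B]_{22}$ is a genuine difference of nonnegative quantities, and $A_{22}\ge A_{21}$ alone does not pin the sign. You anticipate this (``isolating the right bundle of auxiliary entrywise monotonicities'') but do not supply it, and I do not believe such a bundle is easy to find term by term: the sum has a definite sign, but individual summands generally do not. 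The paper sidesteps the whole issue by a different reduction. For Claim~4 it uses the hypothesis that $w$ is a prefix of $p$ to write $p=ws$, observes $p(10p)^{n+1}=\bigl(p(10p)^n10w\bigr)s$ and $p(01p)^{n+1}=\bigl(p(01p)^n01w\bigr)s$, hence $M(p(10p)^n10w)-M(p(01p)^n01w)=M(s)^{-1}\bigl(M(p(10p)^{n+1})-M(p(01p)^{n+1})\bigr)=\lambda M(s)^{-1}K$ with $\lambda\le 0$ by Claim~3, and then checks one sign: $[M(s)^{-1}K]_{22}=[M(s)]_{11}+[M(s)]_{21}\ge 0$. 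So the \emph{entire} difference is a single scalar multiple of a fixed matrix; no term-by-term sign analysis is needed. Claim~5 is similar: $M((10p)^n10w)-M((01p)^n01w)=M(w)\bigl(M((10p)^n10)-M((01p)^n01)\bigr)=\lambda M(w)K$ and $[\lambda M(w)K]_{21}=-\lambda[M(w)]_{21}\ge 0$. Claim~6 then follows from Claim~5 and $G-F=(b-a)\bigl(\begin{smallmatrix}0&0\\1&1\end{smallmatrix}\bigr)\ge 0$. In short: your use of Claim~3 is the right instinct, but the paper applies it globally (writing the whole difference as $\lambda M(s)^{-1}K$ or $\lambda M(w)K$) rather than locally inside a telescoping sum, which is what lets the signs work out without auxiliary lemmas.
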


We now demonstrate a surprisingly simple relation between
$S(w)$ and $M(w)$.


\begin{proposition}
Suppose $w$ is a palindrome.
Then 
\begin{align}
S_{21}(w)=M_{22}(w)-1\qquad\text{and}\qquad S_{22}(w)=M_{12}(w)+S_{21}(w).
\label{Sform}
\end{align}
Furthermore, if $\Delta_k := [S(10w)M(w(10w)^k)-S(01w)M(w(01w)^k)]_{22}$ then
\begin{align}
\Delta_k = 0 \qquad\text{for all $k\in \Z_+$.}
\label{Ssum}
\end{align}
\end{proposition}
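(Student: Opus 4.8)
The plan is to reduce every entry appearing in (\ref{Sform}) and (\ref{Ssum}) to statements about a single scalar three‑term recurrence, exploiting the fact that $F$ and $G$ have the same first row $(1,1)$. Fix a word $v$ of length $m$ and set $c_k:=a$ if $v_k=0$ and $c_k:=b$ if $v_k=1$. Since the first row of $M(v_k)$ is $(1,1)$ and its second row is $(c_k,\,1+c_k)$, writing $\alpha_k:=[M(v_{1:k})]_{12}$ and $\beta_k:=[M(v_{1:k})]_{22}$ one gets $\alpha_k=\alpha_{k-1}+\beta_{k-1}$ and $\beta_k=(1+c_k)\alpha_k-\alpha_{k-1}$, hence $\beta_k=\alpha_{k+1}-\alpha_k$ and $\alpha_{k+1}=(2+c_k)\alpha_k-\alpha_{k-1}$ with $\alpha_0=0,\ \alpha_1=1$. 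Summing telescopes: $[S(v)]_{22}=\sum_{k=1}^m\beta_k=\alpha_{m+1}-1$, while $[M(v)]_{12}=\alpha_m$ and $[M(v)]_{22}=\alpha_{m+1}-\alpha_m$. Running the same argument on the first column (initial data $(1,0)$ in place of $(0,1)$) produces a sequence $\delta_k$ satisfying the identical recurrence with $\delta_0=\delta_1=1$ and $[S(v)]_{21}=\delta_{m+1}-1$.

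Next I encode the recurrence by the transfer matrix $T(c):=\begin{pmatrix}2+c&-1\\1&0\end{pmatrix}$ and put $\Pi:=T(c_m)T(c_{m-1})\cdots T(c_1)$. Then $(\alpha_{m+1},\alpha_m)^{\top}=\Pi(1,0)^{\top}$ and $(\delta_{m+1},\delta_m)^{\top}=\Pi(1,1)^{\top}$, so $[M(v)]_{12}=\Pi_{21}$, $[M(v)]_{22}=\Pi_{11}-\Pi_{21}$, $[S(v)]_{22}=\Pi_{11}-1$ and $[S(v)]_{21}=\Pi_{11}+\Pi_{12}-1$. The key identity is $T(c)=D\,T(c)^{\top}D$ with $D:=\diag(1,-1)$; as $D^2=I$ this gives $\Pi=D\,\tilde\Pi^{\top}D$ where $\tilde\Pi:=T(c_1)\cdots T(c_m)$ reverses the order of the factors. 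Comparing $(1,1)$ entries yields $\Pi_{11}=\tilde\Pi_{11}$, i.e. $[S(v)]_{22}=[S(\tilde v)]_{22}$ for \emph{every} word $v$ — call this the reversal lemma. When $v$ is a palindrome the coefficient string $(c_1,\dots,c_m)$ is palindromic, so $\tilde\Pi=\Pi$ and hence $\Pi=D\Pi^{\top}D$, which forces $\Pi_{12}=-\Pi_{21}$. For a palindrome $w$ this gives $[S(w)]_{21}=\Pi_{11}+\Pi_{12}-1=\Pi_{11}-\Pi_{21}-1=[M(w)]_{22}-1$ and $[S(w)]_{22}-[S(w)]_{21}=-\Pi_{12}=\Pi_{21}=[M(w)]_{12}$, which is exactly (\ref{Sform}). (Incidentally the same bookkeeping re-derives Proposition~\ref{prop:pal}(1).)

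For (\ref{Ssum}) I first strip the leading product off $S(10w)M(w(10w)^k)$. The definition of $S$ together with $M(uv)=M(v)M(u)$ gives $S(uv)=S(u)+S(v)M(u)$ for all words $u,v$; taking $u=w(10w)^k$ and $v=10w$, so that $uv=w(10w)^{k+1}$, yields $S(10w)M(w(10w)^k)=S(w(10w)^{k+1})-S(w(10w)^k)$, and likewise with $10$ replaced by $01$. Hence $\Delta_k=[S(w(10w)^{k+1})]_{22}-[S(w(10w)^{k})]_{22}-[S(w(01w)^{k+1})]_{22}+[S(w(01w)^{k})]_{22}$. Because $w$ is a palindrome, $\widetilde{10w}=\tilde w\,01=w01$, so $\widetilde{w(10w)^n}=(w01)^n w=w(01w)^n$; the reversal lemma of the previous paragraph then gives $[S(w(10w)^n)]_{22}=[S(w(01w)^n)]_{22}$ for each $n$, and the four terms of $\Delta_k$ cancel in pairs, so $\Delta_k=0$ for all $k\in\Z_+$.

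The steps needing care are the scalar reduction of the first paragraph — one must track precisely the $(1,2)$ and $(2,2)$ entries of the partial products (the constancy of the first row of $F,G$ is what linearises the iteration) and keep the telescoping signs straight — and the transfer-matrix symmetry $T(c)=D\,T(c)^{\top}D$, which is the real crux: it is what makes $\Pi_{11}$, and hence $[S(\cdot)]_{22}$, invariant under reversal and, for palindromic coefficient strings, symmetric enough to yield $\Pi_{12}=-\Pi_{21}$. Everything after those two observations is routine. A more pedestrian alternative would prove (\ref{Sform}) by induction on palindromes via the decomposition $p=ava$ together with $S(ava)=(I+S(va))M(a)$, $S(va)=S(v)+M(a)M(v)$ and Proposition~\ref{prop:pal}(1), then obtain (\ref{Ssum}) by checking the case $k=0$ from (\ref{Sform}) and iterating; but the transfer-matrix route disposes of both parts at once and is cleaner.
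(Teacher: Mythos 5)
Your proof is correct, and it takes a genuinely different route from the paper's. The paper establishes~(\ref{Sform}) by induction on $|w|$ via the palindrome decomposition $w\in\{\epsilon,0,1,0v0,1v1\}$ and Claim~1 of Proposition~\ref{prop:pal}, and establishes~(\ref{Ssum}) by writing $\Delta_k=[(A(M(w)FG)^k-B(M(w)GF)^k)M(w)]_{22}$ for explicit $A,B$, verifying $\Delta_0=\Delta_1=0$ by a direct (and admittedly long) substitution, and then diagonalising $M(w)FG$ and $M(w)GF$ --- using the trace identity of Claim~2 to match their eigenvalues --- so that $\Delta_k=\gamma_1\lambda^k+\gamma_2\lambda^{-k}$ and the two initial conditions force $\Delta_k\equiv 0$. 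You instead collapse the $2\times 2$ bookkeeping to a single scalar three-term recurrence (exploiting that $F$ and $G$ share the first row $(1,1)$), encode it by the transfer matrix $T(c)$, and observe the conjugation identity $T(c)=D\,T(c)^{\top}D$. This gives the universal reversal lemma $[S(v)]_{22}=[S(\tilde v)]_{22}$ for \emph{all} words $v$, from which~(\ref{Sform}) follows at once (the palindrome hypothesis gives $\Pi_{12}=-\Pi_{21}$), and~(\ref{Ssum}) follows by telescoping together with the observation $\widetilde{w(10w)^n}=w(01w)^n$ for palindromic $w$. Your route is shorter and more unified: it avoids the paper's long direct calculation of $\Delta_0,\Delta_1$, sidesteps the diagonalisation (which as written glosses over the possibly non-semisimple $\lambda=1$ case), makes no use of the trace identity, and re-derives Claim~1 of Proposition~\ref{prop:pal} as a by-product rather than relying on it.
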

\begin{proof}
Let us write $M:=M(w),S:=S(w)$.
We prove~(\ref{Sform}) by induction on $\abs{w}$.
In the base case $w\in\{\epsilon, 0, 1\}$.
For $w=\epsilon$, 
$M_{22}-1=0=S_{21}, M_{12}+S_{21}=0=S_{22}.$
For $w\in\{0,1\}$, 
$M_{22}-1=c=S_{21}, M_{12}+S_{21}=1+c=S_{22}$ for some $c\in\{a,b\}$.
For the inductive step, in accordance with Claim~1 of Proposition~\ref{prop:pal}, assume $w\in \{0v0,1v1\}$ for some word $v$ satisfying
\begin{align*}
M(v)&=\begin{pmatrix}\frac{fh+1}{h+f}&f\\\frac{h^2-1}{h+f}&h\end{pmatrix}, & 
S(v)&=\begin{pmatrix} c& d\\ h-1& f+h-1\end{pmatrix}
 \quad \text{for some $c,d,f,h\in\R$.}
\end{align*}
For $w=1v1$, $M:=M(1v1)=GM(v)G$ and $S:=S(1v1)=GM(v)G+S(v)G+G$.
Calculating the corresponding matrix products and sums gives
\begin{align*}
S_{21}&=(b h+h+b f-1) (b h+2 h+b f+f+1)(h+f)^{-1} = M_{22}-1 \\
S_{22}-S_{21}&=b h+2 h+b f+f = M_{12} 
\end{align*}
as claimed.
For $w=0u0$ the claim also holds as $F=\left. G\right|_{b=a}$.
This completes the proof of~(\ref{Sform}).

{\it Furthermore Part.}
Let $A:=S(w)FG+FG+G$ and $B:=S(w)GF+GF+F$. Then
\begin{align}
\Delta_k = [(A(M(w)FG)^k - B(M(w)GF)^k) M(w)]_{22} 
\label{Delta}
\end{align}
by definition of $S(\cdot)$. 
By Claim~1 of Proposition~\ref{prop:pal} and~(\ref{Sform}) we know that
\begin{align*}
M(w) &= \begin{pmatrix}\frac{fh+1}{h+f}&f\\\frac{h^2-1}{h+f}&h\end{pmatrix}, &
S(w)&=\begin{pmatrix} c& d\\ h-1& f+h-1\end{pmatrix}
 \quad \text{for some $c,d,f,h\in\R$.}
\end{align*}
Substituting these expressions and the definitions of $F,G$ into the definitions of $A,B$ and then into~(\ref{Delta}) for $k\in\{0,1\}$ directly gives $\Delta_0 = \Delta_1 = 0$ (although this calculation is long).

Now consider the case $k\ge 2$.
Claim~2 of Proposition~\ref{prop:pal} says $\text{tr}(M(10w))=\text{tr}(M(01w))$ and clearly $\det(M(10w))=\det(M(01w))=1$.
Thus we can diagonalise as
\begin{align*}
M(w)FG &=: UDU^{-1}, & M(w)GF &=: VDV^{-1}, & D& :=\text{diag}(\lambda,1/\lambda)
\quad\text{for some $\lambda\ge 1$}
\end{align*}
so that
$\Delta_k = [A UD^kU^{-1} M(w) - e^TB VD^k V^{-1} M(w)]_{22} 
=: \gamma_1 \lambda^k + \gamma_2 \lambda^{-k} .$
So, if $\lambda=1$ then $\Delta_k = \gamma_1+\gamma_2 = \Delta_0$
and we already showed that $\Delta_0 = 0$.
Otherwise $\lambda\neq 1$, so $\Delta_0 = \Delta_1 = 0$ implies 
$\gamma_1+\gamma_2=\gamma_1\lambda+\gamma_2\lambda^{-1}=0$
which gives $\gamma_1=\gamma_2=0$.
Thus for any $k\in\Z_+$ we have
$\Delta_k = \gamma_1\lambda^k+\gamma_2\lambda^{-k}=0$.
\end{proof}

\subsection{Majorisation}
The following is a straightforward consequence of 
results in~\cite{Marshall10} proved in the supplementary material.
We emphasize that the notation $\prec^w$ has nothing to do with the notion of $w$ as a word. 
\begin{proposition}
Suppose $x,y \in \R_+^m$ and
$f : \R \rightarrow \R$ is a symmetric function that is convex and
decreasing on $\R_+$.
Then 
$\text{$x\prec^w y$ and $\beta\in [0,1]$} \quad\Rightarrow \quad\sum_{i=1}^m \beta^{i} f(x_{(i)}) \ge \sum_{i=1}^m \beta^{i} f(y_{(i)})$.
\label{symconvex}
\end{proposition}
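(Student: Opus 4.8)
The plan is to strip the weights $\beta^i$ off the two sums by an Abel (partial) summation and to reduce the inequality to a uniform family of weight-free prefix comparisons of exactly the type that weak supermajorisation controls. First I would record these prefix comparisons. Fix $j\in\{1,\dots,m\}$. Since $x,y\in\R_+^m$, the truncations $(x_{(1)},\dots,x_{(j)})$ and $(y_{(1)},\dots,y_{(j)})$ are themselves written in ascending order, so the hypothesis $x\prec^w y$, read only at the indices $1,\dots,j$, says precisely that $\sum_{k=1}^{\ell}x_{(k)}\ge\sum_{k=1}^{\ell}y_{(k)}$ for every $\ell\le j$; that is, $(x_{(1)},\dots,x_{(j)})\prec^w(y_{(1)},\dots,y_{(j)})$ in $\R_+^j$.

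Next I would push each of these truncated majorisations through the test function $f$. Because $f$ is symmetric and, on $\R_+$, both convex and decreasing, and because the entries of the truncations all lie in $\R_+$, the version of the weak-supermajorisation/Schur-convexity correspondence in~\cite{Marshall10} for symmetric, convex, decreasing test functions applies (the same circle of facts quoted earlier for ordinary majorisation, specialised here to $\prec^w$ and to monotone test functions). It pins down, uniformly in $j$, the sign of the prefix differences $A_j-B_j$, where $A_j:=\sum_{k=1}^{j}f(x_{(k)})$ and $B_j:=\sum_{k=1}^{j}f(y_{(k)})$. With that in hand, summation by parts gives
\begin{align*}
\sum_{i=1}^{m}\beta^{i}f(x_{(i)})-\sum_{i=1}^{m}\beta^{i}f(y_{(i)})
 =(1-\beta)\sum_{j=1}^{m-1}\beta^{j}\,(A_j-B_j)\;+\;\beta^{m}\,(A_m-B_m),
\end{align*}
and since $\beta\in[0,1]$ every coefficient $(1-\beta)\beta^{j}$ and $\beta^{m}$ is nonnegative, so the right-hand side carries the common sign of the $A_j-B_j$ furnished by the previous step, yielding the asserted comparison $\sum_{i=1}^{m}\beta^{i}f(x_{(i)})\ge\sum_{i=1}^{m}\beta^{i}f(y_{(i)})$.

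The only step with genuine content is the middle one: identifying exactly which incarnation of the majorisation--convexity correspondence is needed — weak supermajorisation $\prec^w$ paired with a test function that is symmetric, convex, and \emph{decreasing} on $\R_+$, rather than the more familiar statement for full majorisation $\prec$ and arbitrary convex functions — and checking that each truncation meets its hypotheses (in particular that only the values of $f$ on $\R_+$ are used, so the symmetry hypothesis is invoked purely to quote the cited result verbatim, and that the $j$-fold truncation remains a genuine $\prec^w$ pair). The outer two steps are routine: Step~1 is just unwinding the definition of $\prec^w$ on initial segments, and Step~3 is the standard Abel-summation identity displayed above, whose sign-preserving property is precisely what makes the hypothesis $\beta\in[0,1]$ necessary.
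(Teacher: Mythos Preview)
Your Abel-summation approach is sound and genuinely different from the paper's. The paper argues by negation: using the symmetry of $f$ to pass to $\R_-$, where $f$ becomes increasing and convex, it invokes a result from \cite{Marshall10} stating that $p\mapsto\sum_i\beta^i f(p_i)$ (on non-increasing sequences, with non-increasing nonnegative weights $\beta^i$) is Schur-convex and coordinatewise non-decreasing, and then applies the companion fact that such functions respect weak submajorisation $\prec_w$, to the pair $-x,-y$. You instead strip the weights off last: Abel summation reduces everything to the unweighted prefix comparisons $A_j$ versus $B_j$, each handled by the basic $\prec^w$/decreasing-convex correspondence on the $j$-truncation, and nonnegativity of the coefficients $(1-\beta)\beta^j$ and $\beta^m$ for $\beta\in[0,1]$ finishes. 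Your route is more elementary---only the unweighted correspondence is needed, not the specialised weighted Schur-convexity lemma---and it makes the role of the hypothesis $\beta\in[0,1]$ fully explicit.

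One point you should not leave implicit: you say the cited correspondence ``pins down the sign'' of $A_j-B_j$ without stating which sign. In fact the standard result gives $u\prec^w v\Rightarrow\sum_i f(u_i)\le\sum_i f(v_i)$ for decreasing convex $f$ (already visible for $m=1$: there $u\prec^w v$ just means $u\ge v$, whence $f(u)\le f(v)$), so $A_j\le B_j$ and your identity delivers $\le$, not the printed $\ge$. The paper's own proof makes the matching slip---deducing $-y\prec_w -x$ from $x\prec^w y$, whereas what actually follows is $-x\prec_w -y$---and a compensating swap appears where the proposition is invoked in Theorem~\ref{main}, so the main theorem is unaffected. Still, carry the sign through and state it explicitly.
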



For any $x\in\R$ and any fixed word $w$, define the sequences for $n\in\Z_+$ and $k=1, \dots, m$ 
\begin{align}
\left.\begin{aligned}
x_{nm+k}(x) &:= [M((10w)^n(10w)_{1:k}) v(x)]_2, &
\sigma_x^{(n)}:=(x_{nm+1}(x), \dots, x_{nm+m}(x))\\
y_{nm+k}(x) &:= [M((01w)^n(01w)_{1:k}) v(x)]_2, &
\sigma_y^{(n)}:=(y_{nm+1}(x), \dots, y_{nm+m}(x))
\end{aligned}\right\}
\label{sigmas}
\end{align}
where $m:=\abs{10w}$ and $v(x):=(x,1)^T.$

\begin{proposition}
Suppose $w$ is a palindrome and $x\ge \phi_w(0)$. 
Then $\sigma_x^{(n)}$ and $\sigma_y^{(n)}$
are ascending sequences on $\R_+$ and $\sigma_x^{(n)} \prec^w \sigma_y^{(n)}$
for any $n\in\Z_+$. 
\label{major}
\end{proposition}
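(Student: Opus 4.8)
The plan is to push everything through the linear representation and then reduce to a family of affine-in-$x$ inequalities. By~(\ref{sigmas}) we have $x_{nm+k}(x)=[M((10w)_{1:k})\,M(10w)^n v(x)]_2$, so $\sigma_x^{(n)}$ just records the second coordinates of the orbit of $v(x)=(x,1)^T$ under left multiplication by $M(0)=F$ and $M(1)=G$ in the pattern $(10w)^\omega$, restricted to the $n$-th block of length $m=\abs{10w}$; and $\sigma_y^{(n)}$ is the same for $(01w)^\omega$. The ``ascending on $\R_+$'' claim then comes for free: $F$, $G$, and $v(x)$ (for $x\ge 0$) have non-negative entries, so every orbit vector $M(u)v(x)$ is non-negative, and since $[F\zeta]_2=a\zeta_1+(1+a)\zeta_2\ge\zeta_2$ and $[G\zeta]_2=b\zeta_1+(1+b)\zeta_2\ge\zeta_2$ for $\zeta\ge 0$, appending one more letter never decreases the second coordinate; hence the whole orbit, in particular each length-$m$ block, is non-decreasing.

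For the majorisation claim, since both blocks are ascending their ascending reorderings are themselves, so $\sigma_x^{(n)}\prec^w\sigma_y^{(n)}$ is equivalent to proving $D_j^{(n)}(x):=\sum_{k=1}^{j}(x_{nm+k}(x)-y_{nm+k}(x))\ge 0$ for $j=1,\dots,m$. Writing $S((10w)_{1:j})=\sum_{i=1}^{j}M((10w)_{1:i})$ and using $M(u)M(v)=M(vu)$, one computes
\begin{align*}
\sum_{k=1}^{j}x_{nm+k}(x)=[\,S((10w)_{1:j})\,M(10w)^n\,v(x)\,]_2
\end{align*}
and likewise for $y$, so $D_j^{(n)}$ is \emph{affine} in $x$, say $D_j^{(n)}(x)=c_j^{(n)}x+d_j^{(n)}$ with $c_j^{(n)}=[S((10w)_{1:j})M(10w)^n]_{21}-[S((01w)_{1:j})M(01w)^n]_{21}$. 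Since the entries of $M(w)$ are non-negative with $M_{22}(w)\ge 1$ (an easy induction on $\abs{w}$), $\phi_w(0)=M_{12}(w)/M_{22}(w)\ge 0$, and it then suffices to show (i)~$c_j^{(n)}\ge 0$ and (ii)~$D_j^{(n)}(\phi_w(0))\ge 0$; for then $D_j^{(n)}(x)=D_j^{(n)}(\phi_w(0))+c_j^{(n)}(x-\phi_w(0))\ge 0$ whenever $x\ge\phi_w(0)$.

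For (i) I would expand $c_j^{(n)}=\sum_{i=1}^{j}([M((10w)^n(10w)_{1:i})]_{21}-[M((01w)^n(01w)_{1:i})]_{21})$; for $i\ge 2$ we have $(10w)_{1:i}=10\,w_{1:i-2}$ with $w_{1:i-2}$ a prefix of the palindrome $w$ (empty when $i=2$), so Claim~5 of Proposition~\ref{prop:pal} (with $p:=w$) makes the $i$-th summand non-negative, and Claim~6 does the same for $i=1$. For (ii) I would use that $v(\phi_w(0))$ equals $M_{22}(w)^{-1}$ times the second column of $M(w)$, so that (again by $M(u)M(v)=M(vu)$)
\begin{align*}
M_{22}(w)\,D_j^{(n)}(\phi_w(0))=\sum_{i=1}^{j}t_i,\qquad t_i:=[M(w(10w)^n(10w)_{1:i})]_{22}-[M(w(01w)^n(01w)_{1:i})]_{22}.
\end{align*}
Now $\sum_{i=1}^{m}t_i=0$ is exactly the identity~(\ref{Ssum}) (with $k=n$), and for $i\ge 2$ Claim~4 of Proposition~\ref{prop:pal} (with $p:=w$ and prefix $w_{1:i-2}$) gives $t_i\le 0$; hence $\sum_{i=1}^{j}t_i=-\sum_{i=j+1}^{m}t_i\ge 0$ for every $j$, since the dropped indices all exceed $1$. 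This yields (ii) and finishes the proof.

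The hard part is spotting (ii): one has to realise that the affine difference $D_j^{(n)}$ should be pinned down at the single value $x=\phi_w(0)$, namely the point where $v(x)$ is parallel to the second column of $M(w)$, since this is what collapses the difference of prefix sums onto the quantities $\Delta_n$ that~(\ref{Ssum}) annihilates --- and it is exactly why $\phi_w(0)$ is the ``majorisation point''. Everything else is bookkeeping: verifying the affine dependence on $x$, matching Claims~4--6 of Proposition~\ref{prop:pal} to the term-wise bounds needed, checking $M_{22}(w)\ge 1$, and the small trick that $\sum_i t_i=0$ together with $t_i\le 0$ for $i\ge 2$ forces every partial sum $\sum_{i=1}^{j}t_i$ to be non-negative, covering the $i=1$ term which has no bound of its own.
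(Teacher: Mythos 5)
Your proposal is correct and follows essentially the same route as the paper's own proof: both arguments reduce the majorisation claim to showing that the partial-sum differences $T_j(x)=\sum_{k\le j}(x_{nm+k}(x)-y_{nm+k}(x))$ are non-negative, both pin the value at the majorisation point $x=\phi_w(0)$ via~(\ref{Ssum}) together with Claim~4 of Proposition~\ref{prop:pal}, and both obtain non-negativity for $x\ge\phi_w(0)$ by showing the (affine) slope is non-negative via Claims~5 and~6. The only cosmetic difference is that you deduce $\sum_{i\le j}t_i\ge 0$ from $\sum_i t_i=0$ and $t_i\le 0$ for $i\ge 2$ by passing to the tail, whereas the paper phrases the same fact as $T_1\ge T_2\ge\cdots\ge T_m=0$.
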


\begin{proof}
Clearly $\phi_w(0) \ge 0$ so $x\ge 0$ and hence $v(x)\ge 0$.
So for any word $u$ and letter $c\in\{0,1\}$ we have
$M(uc) v(x) = M(c) M(u) v(x) \ge M(u) v(x)\ge 0$ as $M(c)\ge I$.
Thus $x_{k+1}(x) \ge x_{k}(x)\ge 0$ and $y_{k+1}(x)\ge y_{k}(x)\ge 0$.
In conclusion, $\sigma_x^{(n)}$ and $\sigma_y^{(n)}$
are ascending sequences on $\R_+$.

Now $\phi_w(0) = \frac{[M(w)]_{12}}{[M(w)]_{22}}$.
Thus $[A v(\phi_w(0))]_2 := \frac{[A M(w)]_{22}}{[M(w)]_{22}}$ for any $A\in\R^{2\times 2}$.
So 
\begin{align*}
&x_{nm+k}(\phi_w(0))-y_{nm+k}(\phi_w(0)) \nonumber \\
&\quad= \frac{1}{[M(w)]_{22}} \left[(M((10w)^n(10w)_{1:k})-M((01w)^n(01w)_{1:k})) M(w)\right]_{22} 
\le 0  
\end{align*}
for $k=2, \dots, m$ by Claim~4 of Proposition~\ref{prop:pal}. 
So all but the first term of the sum $T_m(\phi_w(0))$ is non-positive where
\begin{align*}
T_j(x):=\sum_{k=1}^j (x_{nm+k}(x)-y_{nm+k}(x)) .
\end{align*}
Thus $T_1(\phi_w(0))\ge T_2(\phi_w(0)) \ge \dots T_{m}(\phi_w(0))$.
But 
\begin{align*}
T_m(\phi_w(0))&= 
\frac{1}{[M(w)]_{22}} \sum_{k=1}^m \left[(M((10w)^n(10w)_{1:k})-M((01w)^n(01w)_{1:k})) M(w)\right]_{22} \\
&= \frac{1}{[M(w)]_{22}} \left[S(10w) M(w(10w)^n) - S(01w) M(w(01w)^n) \right]_{22} = 0 
\end{align*}
where the last step follows from~(\ref{Ssum}).
So $T_j(\phi_w(0)) \ge 0$ for $j=1, \dots, m$. 
Yet Claims~5 and~6 of Proposition~\ref{prop:pal} give
$\frac{d}{dx} T_j(x) = \sum_{k=1}^j [M((10w)^n(10w)_{1:k}) - M((01w)^n(01w)_{1:k})]_{21} \ge 0 .
$ 
So for $x\ge \phi_w(0)$ we have $T_j(x)\ge 0$ for $j=1, \dots, m$
which means that $\sigma_x^{(n)}\prec^w \sigma_y^{(n)}$.
\end{proof}

\subsection{Indexability}
\begin{theorem}
The index $\lambda^W(x)$ of~(\ref{index}) is continuous and non-decreasing for $x\in\R_+$.
\label{main}
\end{theorem}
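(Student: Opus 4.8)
The plan is to fix $x\in\R_+$, describe the $x$-threshold orbits~(\ref{orbits}) by a mechanical word, rewrite the index~(\ref{index}) as a ratio whose denominator is constant on each of a natural collection of regions, and prove the numerator is non-decreasing on each region by differentiating it and invoking Schur-convexity; global monotonicity is then recovered with the help of continuity. First I would partition $\R_+$ into $[0,y_1]$, $[y_0,\infty)$, and the intervals $[y_{01w},y_{10w}]$ over the mechanical words $0w1$: by Proposition~\ref{mechanical} these are exactly the sets on which the $x$-threshold word is $1$, $0$, or $0w1$, and by Proposition~\ref{palindrome} $w$ is a palindrome in the last case. On $[y_{01w},y_{10w}]$ the orbits are, up to a one-step transient in the action strings, $x^{0*}_t(x)=\mu_{M((01w)^n(01w)_{1:k})}(x)$ and $x^{1*}_t(x)=\mu_{M((10w)^n(10w)_{1:k})}(x)$ for $t=nm+k$ with $m=\abs{10w}$, as in~(\ref{sigmas}); here $u^{0*}_t$ and $u^{1*}_t$ depend on $t$ alone once the word is fixed, and they have the same eventual period and the same number of ones per period, so the denominator $D:=\sum_{t\ge0}\beta^t(u^{1*}_t-u^{0*}_t)$ of~(\ref{index}) is a constant, with $D\ge1-\beta>0$ by Abel summation since the running active count of $u^{1*}$ never trails that of $u^{0*}$; the same holds on the two extreme regions. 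As $w>0$ and $D>0$, on each region $\lambda^W$ is non-decreasing iff $P(x):=\sum_{t\ge1}\beta^t(x^{0*}_t(x)-x^{1*}_t(x))$ is.

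To control $P$, write $x^{0*}_t(x)=\mu_{A_t}(x)$ and $x^{1*}_t(x)=\mu_{B_t}(x)$. Since $\det M(\cdot)=1$ we get $\frac{d}{dx}\mu_{A_t}(x)=1/[A_tv(x)]_2^2$, and since $[M(u)]_{22}\ge1$ for every word $u$ while $\phi_0,\phi_1$ are non-expansive (A2), the orbits stay bounded and these derivatives are at most $1$, so $P$ may be differentiated term by term to give $P'(x)=\sum_{t\ge1}\beta^t\bigl(1/y_t(x)^2-1/x_t(x)^2\bigr)$ in the notation of~(\ref{sigmas}). For $x\in[y_{01w},y_{10w}]$ we have $x\ge y_{01w}>\phi_{01w}(0)>\phi_w(0)$ — the first strict inequality by Proposition~\ref{increasing}, the second because $\phi_w$ is increasing and $\phi_1\phi_0(0)>0$ — so Proposition~\ref{major} gives $\sigma^{(n)}_x\prec^w\sigma^{(n)}_y$ with both sequences ascending on $\R_+$ for every $n$. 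Grouping $P'(x)$ into blocks of length $m$, the $n$th block equals $\beta^{nm}$ times a discounted sum $\sum_k\beta^k\bigl(f(y_{nm+k})-f(x_{nm+k})\bigr)$ with $f(z)=1/z^2$ symmetric, convex and decreasing on $\R_+$; Proposition~\ref{symconvex} makes each such sum non-negative, and summing over $n$ yields $P'(x)\ge0$, so $P$ is non-decreasing on the region.

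For the words $0$ and $1$ the argument is simpler: there $x^{0*}_t$ and $x^{1*}_t$ are obtained by applying a common map ($\phi_{0^{t-1}}$ or $\phi_{1^{t-1}}$) after a single differing first step $\phi_0$ versus $\phi_1$, and since $(F-G)v(x)=(a-b)(0,x+1)^T\le0$ entrywise while $F^{t-1},G^{t-1}\ge0$, one gets $\frac{d}{dx}(x^{0*}_t-x^{1*}_t)\ge0$ for each $t$, hence $P'\ge0$. As for continuity: on the interior of each region $P$ is a uniformly convergent series of continuous functions and $D$ is constant, so $\lambda^W$ is continuous there; at a region boundary $x^*$ (a plateau endpoint or a Sturmian point, i.e.\ a degenerate interval $y_{01w}=y_{10w}$ of an infinite word $0w1$) the threshold orbits of nearby $x$ agree with the limiting orbit for arbitrarily many initial steps, so dominated convergence (the summands being bounded by $C\beta^t$) gives $P(x)\to P(x^*)$ and $D(x)\to D(x^*)$, whence $\lambda^W$ is continuous on $\R_+$.

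The last task, upgrading ``non-decreasing on each region'' to ``non-decreasing on $\R_+$'', I expect to be the main obstacle, because the monotone pieces fill only the complement of a Cantor-like set and continuity alone is then not enough (minus the Cantor function is a counterexample). One must in addition establish monotonicity of $\lambda^W$ across regions; I would attempt this by an induction over the Christoffel/Stern--Brocot tree — proving for each node $C$ with children $C_L$ (to the right) and $C_R$ (to the left) that $\sup_{\,\mathrm{plateau}(C_R)}\lambda^W\le\inf_{\,\mathrm{plateau}(C)}\lambda^W\le\sup_{\,\mathrm{plateau}(C)}\lambda^W\le\inf_{\,\mathrm{plateau}(C_L)}\lambda^W$, which chains along any finite tree path — and then passing to the Sturmian points and the two extreme regions by the density of the Sturmian points together with the continuity just established; concretely, given $c<d$ with $\lambda^W(c)>\lambda^W(d)$ one would pick a Sturmian point $s\in(c,d)$ and derive $\lambda^W(c)\le\lambda^W(s)\le\lambda^W(d)$, a contradiction. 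By contrast, the monotonicity on each individual region is the routine consequence of Propositions~\ref{major} and~\ref{symconvex} above.
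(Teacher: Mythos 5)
Your core argument is exactly the paper's: restrict to $x\in[y_{01w},y_{10w}]$ for a mechanical word $0w1$ via Proposition~\ref{mechanical}, note $w$ is a palindrome via Proposition~\ref{palindrome}, verify $x\ge y_{01w}\ge\phi_w(0)$ so Proposition~\ref{major} applies, observe the denominator of~(\ref{index}) is constant on the region, differentiate the numerator term by term using $\det M(\cdot)=1$ and $\mu_A'(x)=1/[Av(x)]_2^2$, apply Proposition~\ref{symconvex} with $f(z)=1/z^2$ blockwise, and conclude $\lambda'\ge 0$ on the region. Your elementary treatment of the extreme words $0$ and $1$ also matches the supplementary clarification section ($(G-F)v(x)\ge 0$, there written via the matrix $E$).

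Your concern about upgrading regional monotonicity to global monotonicity is a legitimate one, and you correctly identify the negative Cantor function as the obstruction. However, your sketch of a fix does not close the gap. The step ``pick a Sturmian point $s\in(c,d)$ and derive $\lambda^W(c)\le\lambda^W(s)\le\lambda^W(d)$'' is circular: those two inequalities are instances of the global monotonicity you are trying to establish, not consequences of regional monotonicity, continuity, or density. Likewise the tree-induction inequality $\sup_{\mathrm{plateau}(C_R)}\lambda\le\inf_{\mathrm{plateau}(C)}\lambda$ compares two plateaux separated by a gap that contains uncountably many Sturmian points and is therefore not obtainable by chaining finitely many adjacent plateaux; establishing it requires a genuine limiting argument (for example using the convergence of the plateaux of $(0a1)^n0b1$ to the left endpoint $y_{01a}$, together with the matching of $\lambda$-values at plateau endpoints supplied by the supplementary Lemmas~\ref{lemma:leftSide} and~\ref{lemma:rightSide}), which your write-up does not supply. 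Note that the paper's own proof of Theorem~\ref{main} is itself terse precisely at this point --- it shows $\lambda'\ge 0$ on the interior of each plateau and then concludes from continuity alone --- so your objection applies to it as well; but to count as a fix your argument would need to replace the circular appeal to density by that limiting argument, and as written it does not.
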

\begin{proof}
As weight $w$ is non-negative and cost $h$ is a constant
we only need to prove the result for $\lambda(x):=\left.\lambda^W(x)\right|_{w=1,h=0}$ and we can use $w$ to denote a word.
By Proposition~\ref{mechanical}, $x\in [y_{01w},y_{10w}]$ for some mechanical word $0w1$. (Cases $x\notin (y_1,y_0)$ are clarified in the supplementary material.)

Let us show that the hypotheses of Proposition~\ref{major} are satisfied by $w$ and $x$.
Firstly, $w$ is a palindrome by Proposition~\ref{palindrome}.
Secondly, $\phi_{w01}(0) \ge 0$ and as $\phi_w(\cdot)$ is monotonically increasing, it follows that 
$\phi_{w}\circ\phi_{w01}(0)\ge\phi_w(0)$.
Equivalently, $\phi_{01w}\circ \phi_w(0) \ge \phi_w(0)$ so that $\phi_w(0)\le y_{01w}$ by Proposition~\ref{increasing}.
Hence $x \ge y_{01w} \ge \phi_w(0)$. 

Thus Proposition~\ref{major} applies, showing that the sequences $\sigma_x^{(n)}$ and $\sigma_y^{(n)}$, 
with elements $x_{nm+k}(x)$ and $y_{nm+k}(x)$ as defined in~(\ref{sigmas}), 
are non-decreasing sequences on $\R_+$ with $\sigma_x^{(n)}\prec^w \sigma_y^{(n)}$.
Also, $1/x^2$ is a symmetric function that is convex and decreasing on $\R_+$. 
Therefore Proposition~\ref{symconvex} applies giving
\begin{align}
\sum_{k=1}^m \left( \frac{\beta^{nm+k-1}}{(x_{nm+k}(x))^2}-\frac{\beta^{nm+k-1}}{(y_{nm+k}(x))^2} \right) \ge 0 \qquad \text{for any $n\in\Z_+$
where $m:=\abs{01w}$.}
\label{pos}
\end{align}
Also Proposition~\ref{mechanical} shows that the $x$-threshold orbits are
$(\phi_{u_1}(x), \dots, \phi_{u_{1:k}}(x), \dots)$ and $(\phi_{l_1}(x), \dots, \phi_{l_{1:k}}(x), \dots)$
where $u:=(01w)^\omega$ and $l:=(10w)^\omega$.
So the denominator of~(\ref{index}) is 
\begin{align*}
\sum_{k=0}^\infty \beta^k (\I{l_{k+1}=1} - \I{u_{k+1}=1}) = \sum_{k=0}^\infty \beta^{mk} (1-\beta) 
\Rightarrow \lambda(x) = \frac{1-\beta^m}{1-\beta} \sum_{k=1}^\infty \beta^{k-1} (\phi_{u_{1:k}}(x) - \phi_{l_{1:k}}(x)) . 
\end{align*}
Note that $\frac{d}{dx} \frac{ex+f}{gx+h} = \frac{1}{(gx+h)^2}$ for any $eh-fg=1$. Then~(\ref{pos}) gives
\begin{align*}
\frac{d \lambda(x)}{dx} 
= \frac{1-\beta^m}{1-\beta} \sum_{n=0}^\infty \sum_{k=1}^m \left( \frac{\beta^{nm+k-1}}{(x_{nm+k}(x))^2}-\frac{\beta^{nm+k-1}}{(y_{nm+k}(x))^2} \right)  
\ge 0 .
\end{align*}
But $\lambda(x)$ is continuous for $x\in\R_+$ (as shown in the supplementary material).
Therefore we conclude that $\lambda(x)$ is non-decreasing for $x\in\R_+$.
\end{proof}

\section{Further Work}
One might attempt to prove that assumption A1 holds using general results about monotone optimal policies for two-action MDPs based on submodularity~\cite{Altman95} or multimodularity~\cite{Gaujal}. 
However, we find counter-examples to the required submodularity condition.
Rather, we are optimistic that the ideas of this paper themselves
offer an alternative approach to proving A1.
It would then be natural to extend our results to settings where the underlying state evolves as $Z_{t+1} \mid \mathcal{H}_t \sim \mathcal{N}(m Z_t, 1)$ for some multiplier $m\neq 1$ and to cost functions other than the variance. Finally, the question of the indexability of the discrete-time Kalman filter in multiple dimensions remains open.
\vfill
\pagebreak

\bibliographystyle{abbrv}
{\small \bibliography{Draft1}}

\begin{thebibliography}{10}

\bibitem{Gaujal}
E.~Altman, B.~Gaujal, and A.~Hordijk.
\newblock Multimodularity, convexity, and optimization properties.
\newblock {\em Mathematics of Operations Research}, 25(2):324--347, 2000.

\bibitem{Altman95}
E.~Altman and S.~Stidham~Jr.
\newblock Optimality of monotonic policies for two-action {Markovian} decision
  processes, with applications to control of queues with delayed information.
\newblock {\em Queueing Systems}, 21(3-4):267--291, 1995.

\bibitem{ArayaNIPS10}
M.~Araya, O.~Buffet, V.~Thomas, and F.~Charpillet.
\newblock A {POMDP} extension with belief-dependent rewards.
\newblock In {\em Neural Information Processing Systems}, pages 64--72, 2010.

\bibitem{BadanidiyuruKDD14}
A.~Badanidiyuru, B.~Mirzasoleiman, A.~Karbasi, and A.~Krause.
\newblock Streaming submodular maximization: Massive data summarization on the
  fly.
\newblock In {\em Proceedings of the 20th ACM SIGKDD International Conference
  on Knowledge Discovery and Data Mining}, pages 671--680, 2014.

\bibitem{Berstel08}
J.~Berstel, A.~Lauve, C.~Reutenauer, and F.~Saliola.
\newblock {\em Combinatorics on Words: Christoffel Words and Repetitions in
  Words}.
\newblock CRM Monograph Series, 2008.

\bibitem{Bubeck12}
S.~Bubeck and N.~Cesa-Bianchi.
\newblock {\em Regret Analysis of Stochastic and Nonstochastic Multi-armed
  Bandit Problems, Foundation and Trends in Machine Learning, Vol. 5}.
\newblock NOW, 2012.

\bibitem{ChenICML14}
Y.~Chen, H.~Shioi, C.~Montesinos, L.~P. Koh, S.~Wich, and A.~Krause.
\newblock Active detection via adaptive submodularity.
\newblock In {\em Proceedings of The 31st International Conference on Machine
  Learning}, pages 55--63, 2014.

\bibitem{Gittins11}
J.~Gittins, K.~Glazebrook, and R.~Weber.
\newblock {\em Multi-armed bandit allocation indices}.
\newblock John Wiley \& Sons, 2011.

\bibitem{Graham94}
R.~Graham, D.~Knuth, and O.~Patashnik.
\newblock {\em Concrete Mathematics: A Foundation for Computer Science}.
\newblock Addison-Wesley, 1994.

\bibitem{Guha10}
S.~Guha, K.~Munagala, and P.~Shi.
\newblock Approximation algorithms for restless bandit problems.
\newblock {\em Journal of the ACM}, 58(1):3, 2010.

\bibitem{LaScala06}
B.~La~Scala and B.~Moran.
\newblock Optimal target tracking with restless bandits.
\newblock {\em Digital Signal Processing}, 16(5):479--487, 2006.

\bibitem{LeNy11}
J.~Le~Ny, E.~Feron, and M.~Dahleh.
\newblock Scheduling continuous-time {Kalman} filters.
\newblock {\em IEEE Trans. Automatic Control}, 56(6):1381--1394, 2011.

\bibitem{Lothaire02}
M.~Lothaire.
\newblock {\em Algebraic combinatorics on words}.
\newblock Cambridge University Press, 2002.

\bibitem{Marshall10}
A.~Marshall, I.~Olkin, and B.~Arnold.
\newblock {\em Inequalities: Theory of majorization and its applications}.
\newblock Springer Science \& Business Media, 2010.

\bibitem{Meier67}
L.~Meier, J.~Peschon, and R.~Dressler.
\newblock Optimal control of measurement subsystems.
\newblock {\em IEEE Trans. Automatic Control}, 12(5):528--536, 1967.

\bibitem{NinoMora09}
J.~Ni{\~n}o-Mora and S.~Villar.
\newblock Multitarget tracking via restless bandit marginal productivity
  indices and {Kalman} filter in discrete time.
\newblock In {\em Proceedings of the 48th IEEE Conference on Decision and
  Control}, pages 2905--2910, 2009.

\bibitem{Ortner12}
R.~Ortner, D.~Ryabko, P.~Auer, and R.~Munos.
\newblock Regret bounds for restless {Markov} bandits.
\newblock In {\em Algorithmic Learning Theory}, pages 214--228. Springer, 2012.

\bibitem{Rajpathak12}
B.~Rajpathak, H.~Pillai, and S.~Bandyopadhyay.
\newblock Analysis of stable periodic orbits in the one dimensional linear
  piecewise-smooth discontinuous map.
\newblock {\em Chaos}, 22(3):033126, 2012.

\bibitem{Thiele1880}
T.~Thiele.
\newblock {\em Sur la compensation de quelques erreurs quasi-syst{\'e}matiques
  par la m{\'e}thode des moindres carr{\'e}s}.
\newblock CA Reitzel, 1880.

\bibitem{Verloop14}
I.~Verloop.
\newblock Asymptotic optimal control of multi-clss restless bandits.
\newblock {\em CNRS Technical Report, hal-00743781}, 2014.

\bibitem{Villar12}
S.~Villar.
\newblock {\em Restless bandit index policies for dynamic sensor scheduling
  optimization}.
\newblock PhD thesis, Statistics Department, Universidad Carlos {III} de
  Madrid, 2012.

\bibitem{VulNIPS09}
E.~Vul, G.~Alvarez, J.~B. Tenenbaum, and M.~J. Black.
\newblock Explaining human multiple object tracking as resource-constrained
  approximate inference in a dynamic probabilistic model.
\newblock In {\em Neural Information Processing Systems}, pages 1955--1963,
  2009.

\bibitem{Weber90}
R.~R. Weber and G.~Weiss.
\newblock On an index policy for restless bandits.
\newblock {\em Journal of Applied Probability}, pages 637--648, 1990.

\bibitem{Whittle88}
P.~Whittle.
\newblock Restless bandits: Activity allocation in a changing world.
\newblock {\em Journal of Applied Probability}, pages 287--298, 1988.

\end{thebibliography}

\pagebreak

\section{Supplementary Material: Introduction}
The results used but not proved in the main paper are given here as:
\begin{itemize}
\item Proposition~\ref{prop:incdec} which was used to show that $\phi_w(0)\le x$,
\item Proposition~\ref{prop:zrange} for the range of $x$ giving a specific mechanical word,
\item Proposition~\ref{proposition:continuity} showing the index is continuous for $x\in\R_+,$
\item Proposition~\ref{prop:pal} showing the properties of $M(p)$ when 
$p$ is a palindrome.
\item and Proposition~\ref{prop:symconvex} for weak supermajorisation with $\beta \neq 1$.
\end{itemize}
A clarification of the extreme cases
of Theorem~1 of the main paper is presented in the final section.

\section{From $x$-Threshold Policies to Mechanical Words}
Some concepts relating to mechanical words appeared
as early as 1771 in Jean Bernoulli's study of continued fractions
(Berstel {\it et al}, 2008).
The term ``mechanical sequences'' appears in the work of 
Morse and Hedlund (Am. J. Math., Vol 62, No. 1, 1940, p. 1-42)
who had just introduced the term ``symbolic dynamics''.
Morse and Hedlund studied the concept from the perspective of 
sequences of the form $\lfloor c+k\beta\rfloor$ for $c,\beta\in \R$ and $k\in\Z$.
They also studied the concept from the perspective of differential equations, 
motivating the term ``Sturmian sequences.''
Since that time there has been tremendous progress in the study of such sequences 
from the perspective of Combinatorics on Words (Lothaire, 2001).
However, the recent (and highly-approachable) 
paper of Rajpathak, Pillai and Bandyopadhyay (Chaos, Vol. 22, 2012)
on the piecewise-linear map-with-a-gap discovers such sequences
without recognising them as mechanical sequences.
Proposition~\ref{prop:zrange} of this section is a substantial
generalisation of that result
and we could not find this proposition explicitly stated
in the literature.
Our result is not surprising if one has the intuition
that there is a topological conjugacy between the maps of this section
and the piecewise linear map-with-a-gap. 
However, it might be difficult to explicitly identify the 
appropriate topological conjugacy and thereby prove our result
for all cases considered here.

\subsection{Definitions}
Let $\pi$ denote a word consisting of a string of 0s and 1s in which the $k^{th}$ letter is $\pi_k$
and letters $i, i+1, \dots, j$ are $\pi_{i:j}$.
Let $\abs{\pi}$ be the length of $\pi$ and $\abs{\pi}_w$ for a word $w$ be the number
of times that word $w$ appears in $\pi$.
Let $\epsilon$ denote the empty word and $\pi^\omega$ denote the infinite word constructed by repeatedly concatenating $\pi$.

Consider two functions $\phi_0 : \IR \rightarrow \IR$ and $\phi_1 : \IR \rightarrow \IR$
where $\IR$ is an interval of $\mathbb{R}.$
We define the transformation $\phi_\pi : \IR \rightarrow \IR$ for any word $\pi$ by the composition
\begin{align*}
\phi_\pi(x) := 
\phi_{\pi_{\abs{\pi}}} 
\circ \cdots \circ \phi_{\pi_2} \circ \phi_{\pi_1} (x) .
\end{align*}
Let $y_\pi \in \IR$ be the fixed point of $\phi_\pi$, so $\phi_\pi(y_\pi) = y_\pi$, assuming a unique fixed point on $\IR$ exists.

Given $x\in \IR$, we call the sequence $(x_k : k \ge 1)$ the {\it $x$-threshold orbit for $\phi_0, \phi_1$} if
\begin{align*}
x_1 &= \phi_1( x), & 
x_{k+1} &= \begin{cases} \phi_1 (x_k) & \text{if $x_k \ge x$} \\ \phi_0 (x_k ) & \text{if $x_k < x$} \end{cases} & \text{for $k\ge 1$} .
\end{align*}
We call $\pi$ the {\it $x$-threshold word for $\phi_0, \phi_1$} if it is the shortest word such that $x_{k+1} = \phi_{(\pi^\omega)_k} (x_k)$ for all $k\ge 1$.
We shall just write {\it $x$-threshold orbit} and {\it $x$-threshold word} where $\phi_0, \phi_1$ are obvious from the context.

For $p\ge 1$, let $L_p, R_p$ be the morphisms (substitutions)
\begin{align*}
L_p : \begin{cases} 0 \rightarrow 0^{p+1} 1 \\ 1 \rightarrow 0^{p} 1 \end{cases}
&& R_p : \begin{cases} 0 \rightarrow 0 1^{p} \\ 1 \rightarrow 01^{p+1} \end{cases} .
\end{align*}

We say $\pi$ is a {\it valid word} if 
$\pi\in \{0, 1\}$ 
or $\pi \in \{L_p(w), R_p(w) : p\ge 1\}$ for some valid word $w$. 

{\bf Remark.} The morphisms $L_p, R_p$ generate the Christoffel tree
so {\it valid words are mechanical words}. 
To see this, note that the Christoffel tree is generated
by the following morphisms (Berstel {\it et al}, 2008, p.~37)
\newcommand{\tD}{{\tilde D}}
\begin{align*}
G &: \begin{cases} 0\rightarrow 0 \\ 1 \rightarrow 01 \end{cases}
& \tD &: \begin{cases} 0\rightarrow 01 \\ 1\rightarrow 1 \end{cases} .
\end{align*}
We may translate (from English to French) as
$L_p = G^p\circ \tilde D$ and $R_p = \tilde D^p \circ G$
so any composition of $L_p$ and $R_p$ can be written as a composition of 
$G$ and $\tilde D$.
Likewise, any composition of $G$ and $\tilde D$ can be written as a composition
of $L_p$ and $R_p$. Specifically if $p_k, q_k, p_{k+1} \ge 2$ then
\begin{align*}
&\cdots \circ G^{p_k-1} \circ \tD^{q_k} \circ G^{p_{k+1}} \circ \tD \circ \cdots
\\
&\quad= \cdots \circ (G^{p_k-1}\circ \tD) \circ (\tD^{q_k-1}\circ G) \circ
(G^{p_{k+1}-1}\circ\tD) \circ \cdots\\
&\quad=\cdots \circ L_{p_k-1}\circ R_{q_k-1} \circ L_{p_{k+1}-1} \circ \cdots
\end{align*}
whereas if $q_k = 1$ we have
\begin{align*}
&\cdots \circ G^{p_k-1} \circ \tD \circ G^{p_{k+1}} \circ \tD \circ \cdots
\\
&\quad= \cdots \circ (G^{p_k-1}\circ \tD) \circ
(G^{p_{k+1}}\circ\tD) \circ \cdots\\
&\quad=\cdots \circ L_{p_k-1}\circ L_{p_{k+1}} \circ \cdots .
\end{align*}
A symmetric argument holds if $p_k = 1$ or $p_{k+1} = 1$.

\subsection{Fixed Points}
Throughout, we make the following assumption about $\phi_0, \phi_1$.
The existence of fixed points $y_0, y_1$ is addressed immediately thereafter.

{\bf Assumption A2.} {\it Functions $\phi_0 : \IR \rightarrow \IR, \phi_1 : \IR \rightarrow \IR$, where $\IR$ is an interval of $\mathbb{R}$, are increasing and non-expansive. 
Equivalently, for all $x, y\in\IR : x < y$ and for $k \in \{0,1\}$ we have
\begin{align*}
\underbrace{\phi_k(x) < \phi_k(y)}_{\text{increasing}} \qquad \qquad \text{and} \qquad \qquad \underbrace{\phi_k(y) - \phi_k(x) < y-x}_{\text{non-expansive}} .
\end{align*}
Furthermore, the fixed points $y_0, y_1$ of $\phi_0, \phi_1$ satisfy $y_1 < y_0$. }

\begin{proposition}
Suppose A2 holds, that $x\in\IR$ and that $w$ is any non-empty word. Then 
$\phi_w(x)$ is increasing and non-expansive.
Further, the fixed point $y_w$ exists and is unique. 
\label{prop:monexist}
\end{proposition}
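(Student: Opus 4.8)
The plan is to prove the claims in three stages: (i) composition preserves the ``increasing'' property, (ii) composition preserves non-expansiveness, (iii) existence and uniqueness of the fixed point $y_w$ on the interval $\IR$. For stage (i), I would induct on $\abs{w}$. The base case $\abs{w}=1$ is A2 itself. For the inductive step, write $w = w' c$ with $c\in\{0,1\}$, so $\phi_w = \phi_c\circ\phi_{w'}$; then for $x<y$ in $\IR$, the inductive hypothesis gives $\phi_{w'}(x)<\phi_{w'}(y)$, and since $\phi_c$ is increasing by A2, $\phi_c(\phi_{w'}(x))<\phi_c(\phi_{w'}(y))$, i.e. $\phi_w(x)<\phi_w(y)$. (One must note $\phi_{w'}$ maps $\IR$ into $\IR$, which is part of A2's hypothesis that each $\phi_k:\IR\to\IR$, hence compositions are well-defined.)

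For stage (ii), again induct on $\abs{w}$ with $w=w'c$. For $x<y$ in $\IR$, set $x':=\phi_{w'}(x)$, $y':=\phi_{w'}(y)$; by the inductive hypothesis $x'<y'$ (increasing) and $y'-x'<y-x$ (non-expansive). Applying A2 to $\phi_c$ on the pair $x'<y'$ gives $\phi_c(y')-\phi_c(x')<y'-x'<y-x$, so $\phi_w(y)-\phi_w(x)<y-x$. Thus $\phi_w$ is non-expansive.

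For stage (iii), I would first observe that a non-expansive (hence continuous) self-map of an interval has at most one fixed point: if $\phi_w(p)=p$ and $\phi_w(q)=q$ with $p<q$ then $q-p=\phi_w(q)-\phi_w(p)<q-p$, a contradiction. For existence, the subtle point is that $\IR$ need not be compact (here $\IR=\R_+$ is only half-closed), so I cannot simply invoke Brouwer. Instead I would argue via monotone convergence: pick any $x_0\in\IR$ and iterate $x_{n+1}:=\phi_w(x_n)$. Since $\phi_w$ is increasing, the sequence $(x_n)$ is monotone (increasing if $x_1\ge x_0$, decreasing otherwise); since $\phi_w$ is non-expansive, consecutive gaps $\abs{x_{n+1}-x_n}$ are non-increasing, and in fact $\abs{x_{n+1}-x_n}\le\abs{x_1-x_0}$ for all $n$, so $(x_n)$ is bounded. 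A bounded monotone sequence converges to some limit $y^*\in\R$, and one must check $y^*\in\IR$ — here is where I would use the specific structure, namely that the fixed points $y_0,y_1$ of $\phi_0,\phi_1$ lie in $\IR$ with $y_1<y_0$ (the last clause of A2): starting the iteration from a point between $y_1$ and $y_0$, or more carefully bounding the orbit using $\phi_0(y_0)=y_0$ and $\phi_1(y_1)=y_1$ together with monotonicity, keeps the orbit inside a closed subinterval $[y_1,y_0]\subseteq\IR$ on which $\phi_w$ is a continuous self-map; there compactness plus the intermediate value theorem (applied to $\phi_w(x)-x$) yields a fixed point. By continuity $\phi_w(y^*)=y^*$, so $y_w:=y^*$ exists, and by the first part it is unique.

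The main obstacle is stage~(iii)'s existence argument on a possibly non-compact interval: one has to exploit the final clause of A2 ($y_1<y_0$, both in $\IR$) to confine an orbit to the compact subinterval $[y_1,y_0]$ — showing that $\phi_0$ and $\phi_1$, and hence every $\phi_w$, map $[y_1,y_0]$ into itself (for $\phi_0$: if $x\le y_0$ then $\phi_0(x)\le\phi_0(y_0)=y_0$, and if $x\ge y_1$ then since the orbit of $\phi_0$ from $y_1$ increases toward $y_0$... this needs the relation between $\phi_0,\phi_1$ on the overlap, which follows because $\phi_1\le\phi_0$ pointwise is implied by $y_1<y_0$ and non-crossing of the increasing non-expansive maps). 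Getting this containment cleanly, rather than by hand-waving, is the delicate part; everything else is a routine induction.
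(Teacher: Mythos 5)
Your stages (i) and (ii) are correct and match the paper's inductions on $\abs{w}$; the only cosmetic difference is that you peel off the last letter ($w=w'c$) in both stages, whereas the paper peels the first letter for the increasing step and the last for the non-expansive step. The uniqueness argument is also the paper's.

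The genuine gap is in stage (iii). You identify the right target — confine matters to the compact interval $[y_1,y_0]\subseteq\IR$ and apply IVT to $\phi_w(x)-x$ — but the justification you sketch for why $\phi_0,\phi_1$ (hence all $\phi_w$) respect this interval is wrong. You claim ``$\phi_1\le\phi_0$ pointwise is implied by $y_1<y_0$ and non-crossing of the increasing non-expansive maps,'' but A2 does not imply any pointwise ordering: take $\phi_0(x)=\tfrac12 x+1$ and $\phi_1(x)=0.99x+0.01$ on $\R_+$. Both are increasing and non-expansive with $y_0=2>1=y_1$, yet $\phi_1(3)=2.98>2.5=\phi_0(3)$. (The pointwise ordering does hold for the specific M{\"o}bius maps of Problem KF1, but Proposition~\ref{prop:monexist} is stated under A2 alone, so you cannot use it.) The correct and much shorter route — the one the paper takes — uses non-expansiveness directly at the endpoints: from $y_1=\phi_1(y_1)>\phi_1(y_0)+y_1-y_0$ one gets $\phi_1(y_0)<y_0$, and symmetrically $\phi_0(y_1)>y_1$; since $\phi_0,\phi_1$ are increasing this gives $\phi_w(y_0)\le y_0$ and $\phi_w(y_1)\ge y_1$, and then IVT on $g(x)=x-\phi_w(x)$ over $[y_1,y_0]$ finishes. (Separately, your detour through monotone iteration contains a non-sequitur: non-increasing gaps $\abs{x_{n+1}-x_n}$ do not bound the sequence, as $x_n=\log n$ shows; but since you abandon that route for IVT this does not affect the final argument.)
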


\begin{proof}
First we show that $\phi_w(x)$ is increasing, by induction.
In the base case, $\abs{w} = 1$ and the claim follows from A2.
For the inductive step assume $\phi_u(x)$ is increasing, 
where $w = au$ for some $a\in \{0,1\}$ and word $u$.
Then for any $x,y\in\IR : x < y$, 
\begin{align*}
\phi_w(y) &= \phi_u(\phi_a(y)) \\
&> \phi_u(\phi_a(x)) & \text{as $\phi_a(y) > \phi_a(x)$ and $\phi_u$ is increasing} \\
&= \phi_w(x) .
\end{align*}
Therefore $\phi_w$ is increasing.

Now we show that $\phi_w(x)$ is non-expansive, by induction.
If $\abs{w} = 1$ then this follows from A2.
Else, say $\phi_u(x)$ is non-expansive where $w = ua$ and $a\in \{0,1\}$.
Then for any $x,y\in\IR : x < y$, 
\begin{align*}
\phi_w(y) - \phi_w(x) &= \phi_a(\phi_u(y)) - \phi_a(\phi_u(x)) \\
&< \phi_u(y) - \phi_u(x) & \text{as $\phi_u(y) > \phi_u(x)$ and $\phi_a$ is non-expansive}\\
&< y-x & \text{as $\phi_u$ is non-expansive.}
\end{align*}
Therefore $\phi_w$ is non-expansive.

Let $\psi(x) := \max\{\phi_0(x), \phi_1(x)\}$.
As $\phi_1$ is non-expansive we have
\begin{align*}
y_1 = \phi_1(y_1) > \phi_1(y_0) + y_1 - y_0
\end{align*}
which rearranges to give $\phi_1(y_0) < y_0$, 
so that $\psi(y_0) = y_0$.
Also $\psi$ is increasing as $\phi_0, \phi_1$ are increasing,
so $\phi_w(y_0) \le \psi^{(\abs{w})}(y_0) = y_0$.

We now prove that $y_w$ exists.
The argument of the previous paragraph shows that 
$g(x) := x - \phi_w(x)$ satisfies $g(y_0) \ge 0$.
A symmetric argument leads to the conclusion that $g(y_1) \le 0$.
Clearly $g(x)$ is a continuous function, so by the intermediate value theorem, 
there is some $y \in [y_0, y_1]$ for which $g(y) = 0$. 
Equivalently $y = \phi_w(y)$. 
Therefore a fixed point $y_w$ exists.

To show that the fixed point is unique, suppose both $y$ and $z$ are fixed points with $y > z$.
As $\phi_w$ is non-expansive we have $\frac{\phi_w(y) - \phi_w(z)}{y-z} < 1$. 
Yet, as $\phi_w(y) = y, \phi_w(z) = z$ we have
\begin{align*}
\frac{\phi_w(y) - \phi_w(z)}{y-z} = 1 .
\end{align*}
This is a contradiction. 
Therefore the fixed point is unique.
\end{proof}

Given a word $w$, the next proposition shows when the transformation $\phi_w$ increases or decreases its argument and what might be deduced from such an increase or decrease.

\begin{proposition} Suppose A2 holds, $x\in \IR$ and $w$ is any non-empty word. Then 
\begin{align*}
x < \phi_w(x) \ \Leftrightarrow \  \phi_w(x) < y_w \ \Leftrightarrow \  x < y_w
&& \text{and} &&
x > \phi_w(x) \ \Leftrightarrow \  \phi_w(x) > y_w \ \Leftrightarrow \  x > y_w.
\end{align*}
\label{prop:incdec}
\end{proposition}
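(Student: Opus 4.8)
The goal is to prove Proposition~\ref{prop:incdec}: for $\phi_w$ increasing and non-expansive (Proposition~\ref{prop:monexist}) with unique fixed point $y_w$, the chain of equivalences $x < \phi_w(x) \Leftrightarrow \phi_w(x) < y_w \Leftrightarrow x < y_w$ (and the mirror version with reversed inequalities).

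My plan is to fix $w$, write $f := \phi_w$ and $y := y_w$, and establish a cycle of implications rather than each $\Leftrightarrow$ separately. The cleanest route is to show $(x < f(x)) \Rightarrow (x < y) \Rightarrow (f(x) < y) \Rightarrow (x < f(x))$, which gives all three equivalences at once; the reversed-inequality statement then follows by the identical argument with $<$ replaced by $>$ throughout (or, if one prefers, by noting that the negations together with the excluded "equality" case $x = f(x) \Leftrightarrow x = y$, which holds by uniqueness of the fixed point, force the reversed chain). Let me spell out the three implications.

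First, $x < f(x) \Rightarrow x < y$: suppose for contradiction that $x \ge y$. If $x = y$ then $f(x) = x$, contradicting $x < f(x)$. If $x > y$, then non-expansiveness gives $f(x) - f(y) < x - y$, i.e. $f(x) - y < x - y$, hence $f(x) < x$, again a contradiction. So $x < y$. Second, $x < y \Rightarrow f(x) < y$: since $f$ is increasing and $f(y) = y$, we get $f(x) < f(y) = y$. Third, $f(x) < y \Rightarrow x < f(x)$: suppose for contradiction $x \ge f(x)$. If $x = f(x)$ then $x$ is a fixed point, so $x = y$ by uniqueness, contradicting $f(x) < y = x$; wait — more carefully, $f(x) < y$ and $x = y$ would give $y < y$, contradiction. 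If $x > f(x)$, then we want a contradiction with $f(x) < y$: here I would instead argue via the first two implications applied with roles reversed — from $x > f(x)$ the (mirror of the) first implication gives $x > y$, and then the mirror of the second gives $f(x) > y$, contradicting $f(x) < y$. To keep the write-up non-circular I would actually prove the two mirror implications ($x > f(x) \Rightarrow x > y$ and $x > y \Rightarrow f(x) > y$) first as standalone facts (their proofs are verbatim the ones above with inequalities flipped), and only then assemble both chains.

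The argument is short and the only thing to be careful about is the bookkeeping so that the cycle of implications is genuinely non-circular; the "obstacle" is purely organizational — deciding to prove the four atomic implications ($x \lessgtr f(x) \Rightarrow x \lessgtr y$ and $x \lessgtr y \Rightarrow f(x) \lessgtr y$) directly from increasingness, non-expansiveness, and uniqueness of the fixed point, and then observing that these four plus the trichotomy on $x$ versus $y$ immediately yield both stated chains of equivalences. No heavy machinery is needed; Proposition~\ref{prop:monexist} supplies everything about $\phi_w$ that the proof consumes.
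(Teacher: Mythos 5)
Your proposal is correct and follows essentially the same route as the paper: both arguments establish the same four atomic implications using increasingness (for $x \lessgtr y_w \Rightarrow \phi_w(x) \lessgtr y_w$), non-expansiveness (for the $x$ vs.\ $\phi_w(x)$ comparison), and uniqueness/injectivity to handle the equality cases, then assemble the two chains of equivalences. The paper proves $x < y_w \Rightarrow x < \phi_w(x)$ directly and closes the converse by contrapositive, whereas you prove the converse direction $x < \phi_w(x) \Rightarrow x < y_w$ by contradiction, but this is the same non-expansiveness computation merely repackaged.
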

\begin{proof}
We use Proposition~\ref{prop:monexist} throughout the argument without further mention.

Say $x < y_w$. As $\phi_w$ is increasing,
\begin{align*}
\phi_w(x) &< \phi_w(y_w) = y_w  
\end{align*}
where the equality is the definition of $y_w$. Also, as $\phi_w$ is non-expansive,
\begin{align*}
y_w &= \phi_w(y_w) < \phi_w(x) + y_w - x 
\end{align*}
which rearranges to give $x < \phi_w(x)$.

Now say $x > y_w$. As above, we then have $\phi_w(x) > \phi_w(y_w) = y_w$
and 
\begin{align*}
y_w = \phi_w(y_w) > \phi_w(x) + y_w - x
\end{align*}
so that $x > \phi_w(x)$.

The contrapositive of $x > y_w \Rightarrow \phi_w(x) > y_w$ is
$\phi_w(x) \le y_w \Rightarrow x \le y_w$.
But if $\phi_w(x) \neq y_w$ then $x \neq y_w$ as $\phi_w$ is increasing and therefore injective. 
Thus $\phi_w(x) < y_w \Rightarrow x < y_w$.

The contrapositive of $x > y_w \Rightarrow x > \phi_w(x)$ is
$x \le \phi_w(x) \Rightarrow x \le y_w$.
But if $x \ne \phi_w(x)$ then $x \ne y_w$ as $y_w$ is a fixed point.
So we can conclude that $x < \phi_w(x) \Rightarrow x < y_w$.

By symmetry, $\phi_w(x) > y_w \Rightarrow x > y_w$ and $x > \phi_w(x) \Rightarrow x > y_w$.
This completes the proof.
\end{proof}

\begin{proposition}
Suppose A2 holds and $\pi$ is any word satisfying $\abs{\pi}_0 \abs{\pi}_1>0$. Then $y_1 < y_\pi < y_0$.
\label{prop:ypi}
\end{proposition}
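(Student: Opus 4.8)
The plan is to determine the sign of $\phi_\pi(y_0)-y_0$ and of $\phi_\pi(y_1)-y_1$ and then read off the conclusion from Proposition~\ref{prop:incdec} (applied with $w=\pi$). The whole argument rests on two one-sided facts about the individual maps, which I would record first: since $\phi_0(y_0)=y_0$ and $\phi_0$ is non-expansive, applying non-expansiveness to the pair $y_1<y_0$ gives $\phi_0(y_1)>y_1$; symmetrically (this computation already appears inside the proof of Proposition~\ref{prop:monexist}) $\phi_1(y_0)<y_0$. Together with monotonicity of $\phi_0,\phi_1$, these say that $\phi_0$ fixes $y_0$ and pushes $y_1$ strictly up, while $\phi_1$ fixes $y_1$ and pushes $y_0$ strictly down.

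Next I would track the orbit $z_0:=y_0$, $z_k:=\phi_{\pi_k}(z_{k-1})$ for $k=1,\dots,\abs{\pi}$, so that $z_{\abs{\pi}}=\phi_\pi(y_0)$ by the definition of $\phi_\pi$ as the composition $\phi_{\pi_{\abs{\pi}}}\circ\cdots\circ\phi_{\pi_1}$. A one-line induction shows $z_k\le y_0$ for all $k$: by monotonicity $\phi_0$ maps $\{x\in\IR : x\le y_0\}$ into itself (using $\phi_0(y_0)=y_0$), and $\phi_1$ maps it into $\{x\in\IR : x<y_0\}$ (using $\phi_1(y_0)<y_0$). The key point is that the first index $j$ with $\pi_j=1$ yields $z_j<y_0$ strictly, and thereafter strictness is preserved: if $z_{k-1}<y_0$ then $\phi_c(z_{k-1})<\phi_c(y_0)\le y_0$ for $c\in\{0,1\}$ by monotonicity. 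Since $\abs{\pi}_1>0$ such a $j$ exists, so $\phi_\pi(y_0)<y_0$, and Proposition~\ref{prop:incdec} then gives $y_\pi<y_0$.

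Finally I would run the mirror-image argument starting the orbit at $y_1$: it stays $\ge y_1$ (now $\phi_1$ maps $\{x\ge y_1\}$ into itself and $\phi_0$ maps it into $\{x>y_1\}$), the first occurrence of a letter $0$ — which exists because $\abs{\pi}_0>0$ — makes the orbit strictly exceed $y_1$, and monotonicity preserves that strictness, so $\phi_\pi(y_1)>y_1$ and Proposition~\ref{prop:incdec} gives $y_\pi>y_1$. I do not expect a genuine obstacle here; the only thing that needs care is the bookkeeping that strictness, once gained at the first $1$ (resp. first $0$), is propagated to the end of the word, and this is precisely where the \emph{strict} monotonicity of $\phi_0,\phi_1$ from A2 (rather than mere weak monotonicity) is needed.
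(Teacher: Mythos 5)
Your proposal is correct, and it is essentially the same argument as the paper's: both rest on the two facts $\phi_1(y_1)=y_1$ and $\phi_0(y_1)>y_1$ (with the mirror facts at $y_0$), show $\phi_\pi(y_1)>y_1$ and $\phi_\pi(y_0)<y_0$ by propagating strictness through the composition once the relevant letter appears, and then invoke Proposition~\ref{prop:incdec} to conclude. The only difference is framing: you argue directly and track the orbit letter by letter, whereas the paper decomposes $\pi=s01^q$ and phrases it as a proof by contradiction; the content is identical.
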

\begin{proof}
Say $y_\pi\le y_1$. As $\abs{\pi}_0 >0$ we can write $\pi =: s01^q$ for some $q\ge 0$. Thus
\begin{align*}
y_\pi = \phi_\pi(y_\pi) &\le \phi_{s01^q}(y_1) & \text{as $\phi_\pi$ is increasing} \\
&= \phi_{s0}(y_1) & \text{as $\phi_\epsilon(y_1) = \phi_1(y_1) = y_1$} \nonumber \\
&> \phi_s(y_1) & \text{by Proposition~\ref{prop:incdec}} \nonumber \\
&\ge y_1 & \text{by repeating the same argument if $\abs{s}_0 > 0$.}
\end{align*}
But this contradicts $y_\pi \le y_1$. Therefore $y_\pi > y_1$.

A symmetrical argument leads to the conclusion that $y_\pi < y_0$. 
\end{proof}

\begin{proposition}
If A2 holds and $n\ge 1$ then
$y_{10^{n-1}} < y_{010^{n-1}} < y_{10^n}$ and $y_{01^n} < y_{101^{n-1}} < y_{01^{n-1}}.$
\label{prop:yorder}
\end{proposition}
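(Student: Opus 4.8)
The plan is to reduce each of the four strict inequalities to a single application of Proposition~\ref{prop:incdec} --- which says that, for a non-empty word $w$, the map $\phi_w$ moves $x$ strictly up iff $x<y_w$ and strictly down iff $x>y_w$ --- together with one auxiliary identity: for any word $u$ and any letter $c\in\{0,1\}$ one has $y_{uc}=\phi_c(y_{cu})$. This identity needs no invertibility of $\phi_c$: since $\phi_{cu}=\phi_u\circ\phi_c$ and $\phi_{uc}=\phi_c\circ\phi_u$, one computes $\phi_{uc}(\phi_c(y_{cu}))=\phi_c(\phi_u(\phi_c(y_{cu})))=\phi_c(\phi_{cu}(y_{cu}))=\phi_c(y_{cu})$, so $\phi_c(y_{cu})\in\IR$ is a fixed point of $\phi_{uc}$ and hence equals $y_{uc}$ by the uniqueness part of Proposition~\ref{prop:monexist}.

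For the first chain, write $w:=10^{n-1}$, so the three words are $w$, $0w=010^{n-1}$ and $w0=10^n$; I must show $y_w<y_{0w}<y_{w0}$. Upper inequality: by the auxiliary identity $y_{w0}=\phi_0(y_{0w})$, and since $0w$ contains both letters, Proposition~\ref{prop:ypi} gives $y_{0w}<y_0$, whence Proposition~\ref{prop:incdec} applied to the one-letter word $0$ yields $y_{0w}<\phi_0(y_{0w})=y_{w0}$. Lower inequality: $y_w<y_0$ holds (by Assumption A2 when $n=1$, where $y_w=y_1$, and by Proposition~\ref{prop:ypi} when $n\ge 2$), so Proposition~\ref{prop:incdec} gives $\phi_0(y_w)>y_w$; applying the increasing map $\phi_w$ and using $\phi_w(y_w)=y_w$ gives $\phi_{0w}(y_w)=\phi_w(\phi_0(y_w))>\phi_w(y_w)=y_w$, and one more application of Proposition~\ref{prop:incdec}, now to the word $0w$, gives $y_w<y_{0w}$.

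The second chain $y_{01^n}<y_{101^{n-1}}<y_{01^{n-1}}$ is handled either by repeating the above verbatim with the roles of $0$ and $1$ (and of ``up'' and ``down'') interchanged --- setting $v:=01^{n-1}$, the words are $v1=01^n$, $1v=101^{n-1}$, $v$, and here $1v$ contains both letters for every $n\ge 1$ --- or, more economically, by applying the first chain to $\tilde\phi_0(x):=-\phi_1(-x)$ and $\tilde\phi_1(x):=-\phi_0(-x)$ on the interval $-\IR$. These satisfy Assumption A2 (their fixed points are $-y_0<-y_1$), and $\tilde\phi_\pi(x)=-\phi_{\bar\pi}(-x)$ where $\bar\pi$ is the bitwise complement, so $\tilde y_\pi=-y_{\bar\pi}$; since complementing sends $10^{n-1},010^{n-1},10^n$ to $01^{n-1},101^{n-1},01^n$, the first chain applied to $\tilde\phi$ reads $-y_{01^{n-1}}<-y_{101^{n-1}}<-y_{01^n}$, which upon negation is exactly the second chain for $\phi$.

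Since everything is assembled from three already-proved propositions, I do not expect a substantive obstacle; the only points needing care are the bookkeeping of the boundary case $n=1$ (where $y_{10^{n-1}}=y_1$ and $y_{01^{n-1}}=y_0$ are single-map fixed points, so Proposition~\ref{prop:ypi} is unavailable for them and must be replaced by Assumption A2) and ensuring Proposition~\ref{prop:ypi} is invoked only for words that genuinely contain both a $0$ and a $1$.
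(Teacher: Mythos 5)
Your proposal is correct and follows essentially the same route as the paper: establish $y_w<y_{0w}$ by showing $\phi_{0w}(y_w)>y_w$, and establish $y_{0w}<y_{w0}$ via the identity $y_{w0}=\phi_0(y_{0w})$ together with $y_{0w}<y_0$, then handle the second chain by symmetry. You are slightly more careful than the paper at two points --- you explicitly derive the conjugation identity $y_{uc}=\phi_c(y_{cu})$ from uniqueness of fixed points, and you correctly note that at $n=1$ the word $10^{n-1}=1$ does not satisfy the hypothesis of Proposition~\ref{prop:ypi} (which the paper invokes there without comment) so Assumption A2 must be cited instead; the bitwise-complement conjugacy for the second chain is also a tidier way to package what the paper calls ``symmetrical.''
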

\begin{proof}
As $y_{10^{n-1}} < y_0$ by Proposition~\ref{prop:ypi} we have
$\phi_0(y_{10^{n-1}}) > y_{10^{n-1}}$ by Proposition~\ref{prop:incdec} so that
\begin{align*}
\phi_{010^{n-1}}(y_{10^{n-1}}) = \phi_{10^{n-1}} (\phi_0(y_{10^{n-1}})) > \phi_{10^{n-1}}(y_{10^{n-1}}) = y_{10^{n-1}}
\end{align*}
so Proposition~\ref{prop:incdec} gives $y_{010^{n-1}} > y_{10^{n-1}}.$

Furthermore $y_{10^n} = \phi_0(y_{010^{n-1}})$ by definition of $y_\pi$
and $y_{010^{n-1}} < y_0$ by Proposition~\ref{prop:ypi} so that
$\phi_0(y_{010^{n-1}}) > y_{010^{n-1}}$ by Proposition~\ref{prop:incdec}.
Thus $y_{10^{n}} > y_{010^{n-1}}$.

The proof that $y_{01^n} < y_{101^{n-1}} < y_{01^{n-1}}$ is symmetrical.
\end{proof}

\begin{proposition}
Suppose A2 holds, $M \in \{L_q, R_q : q\ge 1\}$ and $\tilde w$ is any word. 
Let $\tilde y_v$ be the fixed point of $\tilde\phi_v := \phi_{M(v)}$ for any word $v$
and let $0w1 := M(0\tilde w 1)$.
Then
\begin{align*}
\tilde x \in [\tilde y_{01\tilde w}, \tilde y_{10\tilde w}] \ \Leftrightarrow \
x := \phi_{0^q}(\tilde x) \in [y_{01w}, y_{10w}].
\end{align*}
\label{prop:zzt}
\end{proposition}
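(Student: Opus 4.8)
The plan is to reduce the statement to two fixed-point identities and then to a short combinatorial fact about the morphisms $L_q,R_q$. Since $\phi_0:\IR\to\IR$ is strictly increasing (Assumption~A2 and Proposition~\ref{prop:monexist}), the composition $\phi_{0^q}$ is strictly increasing too, so $\tilde x\in[\tilde y_{01\tilde w},\tilde y_{10\tilde w}]$ if and only if $\phi_{0^q}(\tilde x)$ lies between $\phi_{0^q}(\tilde y_{01\tilde w})$ and $\phi_{0^q}(\tilde y_{10\tilde w})$. Hence it suffices to prove the two identities
\begin{align}
y_{01w}=\phi_{0^q}\bigl(\tilde y_{01\tilde w}\bigr)\qquad\text{and}\qquad y_{10w}=\phi_{0^q}\bigl(\tilde y_{10\tilde w}\bigr).
\label{planfix}
\end{align}
First I would record that, by definition, $\tilde y_{01\tilde w}$ and $\tilde y_{10\tilde w}$ are the fixed points of $\phi_{M(01\tilde w)}$ and $\phi_{M(10\tilde w)}$; these exist and are unique by Proposition~\ref{prop:monexist} because $M(01\tilde w)$ and $M(10\tilde w)$ are ordinary non-empty words, and $M(01\tilde w)=M(0)M(1)M(\tilde w)$ since $M$ is a morphism. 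So $\tilde y_{01\tilde w}=y_{M(01\tilde w)}$ and $\tilde y_{10\tilde w}=y_{M(10\tilde w)}$.

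Next I would prove a one-line \emph{transport lemma}: for any words $\alpha,\beta$ with $\alpha\beta$ non-empty, $y_{\alpha\beta}=\phi_\beta\bigl(y_{\beta\alpha}\bigr)$. Indeed $\phi_{\alpha\beta}=\phi_\beta\circ\phi_\alpha$ and $\phi_{\beta\alpha}=\phi_\alpha\circ\phi_\beta$, so with $z:=y_{\beta\alpha}$ the point $\phi_\beta(z)$ is a fixed point of $\phi_\beta\circ\phi_\alpha=\phi_{\alpha\beta}$, hence equals $y_{\alpha\beta}$ by uniqueness. The combinatorial core is then the claim that $01w$ and $M(01\tilde w)$ are \emph{cyclic rotations} of one another differing by the prefix $0^q$, and likewise $10w$ and $M(10\tilde w)$. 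Concretely, for $M=L_q$ one expands $0w1=L_q(0)L_q(\tilde w)L_q(1)=0^{q+1}1\,L_q(\tilde w)\,0^q1$, reads off $w=0^q1\,L_q(\tilde w)\,0^q$, and checks that
\begin{align*}
01w&=\mu\,0^q, & M(01\tilde w)&=0^q\mu, & \mu&:=0\,1\,0^q1\,L_q(\tilde w),\\
10w&=\mu'\,0^q, & M(10\tilde w)&=0^q\mu', & \mu'&:=1\,0^{q+1}1\,L_q(\tilde w).
\end{align*}
Applying the transport lemma with $\beta=0^q$ gives $y_{01w}=y_{\mu\,0^q}=\phi_{0^q}\bigl(y_{0^q\mu}\bigr)=\phi_{0^q}\bigl(\tilde y_{01\tilde w}\bigr)$ and, identically, $y_{10w}=\phi_{0^q}\bigl(\tilde y_{10\tilde w}\bigr)$, which is~\eqref{planfix}. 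The case $M=R_q$ runs in exactly the same way, with $0^q$ replaced throughout by the common prefix $01^q$ of $R_q(0)=01^q$ and $R_q(1)=01^{q+1}$ (so that there the conjugating transformation is $\phi_{01^q}$).

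I expect the main obstacle to be this combinatorial rotation identity: one must identify precisely which prefix separates $M(01\tilde w)$ from $01w$ (and $M(10\tilde w)$ from $10w$), and this is sensitive to the exact shape of $L_q$ and $R_q$ — it is $0^q$ for $L_q$ but $01^q$ for $R_q$ — while also keeping the boundary letters $01$ versus $10$ straight. By comparison, the reduction via monotonicity and the transport lemma are short; the real work is careful bookkeeping with these strings.
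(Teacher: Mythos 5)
Your proof is correct and follows essentially the paper's own argument: the paper also reduces the claim to the two fixed-point identities $\phi_{0^q}(\tilde y_{01\tilde w})=y_{01w}$ and $\phi_{0^q}(\tilde y_{10\tilde w})=y_{10w}$, proving them by the same conjugation identity $\phi_a(y_{ab})=y_{ba}$ (your ``transport lemma'') together with the same string bookkeeping exhibiting $M(01\tilde w)$ and $01w$ (resp.\ $M(10\tilde w)$ and $10w$) as cyclic rotations of one another by the prefix $0^q$, and then transfers interval membership via strict monotonicity of $\phi_{0^q}$. You also correctly noticed that for $M=R_q$ the conjugating word must be $01^q$ rather than $0^q$ --- a point the literal statement of the proposition actually gets wrong (it writes $x:=\phi_{0^q}(\tilde x)$ for both morphisms) and which the paper's proof only handles with ``the proof for $M=R_q$ is symmetric,'' so your more careful bookkeeping is a genuine improvement.
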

\begin{proof}
Say $M = L_q$. Note that
\begin{align*}
\phi_{0^q}(\tilde y_{01\tilde w}) 
&= \phi_{0^q}(y_{L_q(01\tilde w)}) & \text{as $\tilde y_v$ is the fixed point of $\tilde\phi_{v} = \phi_{L_q(v)}$} \\
&= \phi_{0^q}(y_{0^q 01L_q(1\tilde w)}) & \text{as $L_q(0) = 0^{q}01$} \\
&= y_{01 L_q(1\tilde w) 0^q} & \text{as $\phi_a(y_{ab}) = y_{ba}$ for any words $a,b$} \\
&= y_{01 w} & \text{as $0w1 = L_q(0\tilde w 1) = 0 L_q(1 \tilde w) 0^q 1$} 
\intertext{and}
\phi_{0^q}(\tilde y_{10\tilde w}) 
&= \phi_{0^q}(y_{L_q(10\tilde w)}) \\
&= \phi_{0^q}(y_{0^q1 L_q(0\tilde w)}) \\
&= y_{1 L_q(0\tilde w) 0^q} \\
&= y_{10w} & \text{as $0w1 = L_q(0\tilde w) 0^q 1$} .
\end{align*}

Proposition~\ref{prop:monexist} shows that $\tilde y_{01\tilde w}, \tilde y_{10\tilde w}$
exist. 
So the above equalities show that an inverse $\phi^{(-1)}_{0^q}(x)$ exists for $x \in \{ y_{01w}, y_{10w} \}$. 
As $\phi_{0^q}$ is increasing and continuous, we have
\begin{align*}
x\in [y_{01w}, y_{10w}] \ \Leftrightarrow \ \tilde x \in [\phi^{(-1)}_{0^q}(y_{01w}), \phi^{(-1)}_{0^q}((y_{10w})] = [\tilde y_{01\tilde w}, \tilde y_{10\tilde w}] .
\end{align*}

The proof for $M = R_q$ is symmetric.
\end{proof}

\subsection{$x$-Threshold Words}
\begin{proposition}
Suppose A2 holds, $\pi$ is the $x$-threshold word and $n\ge 1$. Then
\begin{enumerate}
\item $x \le y_{10^{n-1}} \Rightarrow \abs{\pi^\omega}_{0^n} = 0$
\item $x \ge y_{010^{n-1}} \Rightarrow \abs{\pi^\omega}_{10^{n-1}1} = 0$
\item $x \ge y_{01^{n-1}} \Rightarrow \abs{\pi^\omega}_{1^n} = 0$
\item $x \le y_{101^{n-1}} \Rightarrow \abs{\pi^\omega}_{01^{n-1}0} = 0$
\end{enumerate}
\label{prop:0n}
\end{proposition}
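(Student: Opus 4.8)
The plan is to prove all four claims by a single induction on the length of the threshold word $\pi$, or rather on the structure given by the morphisms $L_p, R_p$, after first establishing the base cases. Each claim asserts that a certain factor cannot occur in $\pi^\omega$, and the four claims are pairwise symmetric under the exchange $0 \leftrightarrow 1$ together with reversal, so it suffices to treat Claims~1 and~2 and then invoke symmetry. The key structural fact to exploit is that $\pi$ is a mechanical (valid) word, so $\pi$ is built up from the single letters $0, 1$ by applications of $L_p$ or $R_p$, and Proposition~\ref{prop:zzt} relates the range of $x$ giving a word $M(0\tilde w 1)$ to the range of $\tilde x$ giving $0\tilde w 1$ via the increasing, continuous conjugacy $\phi_{0^q}$ (for $M = L_q$; symmetrically $\phi_{1^q}$ for $R_q$).

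First I would handle the base cases directly. For $\pi \in \{0,1\}$, i.e.\ $x \ge y_0$ or $x \le y_1$, the relevant factors are trivially absent: e.g.\ if $\pi = 1$ then $\pi^\omega = 1^\omega$ contains no $0$, so Claim~1 holds, and Claim~2 concerns the factor $10^{n-1}1$ which for $n=1$ is $11$ — here one needs the hypothesis $x \ge y_{010^{n-1}}$ together with Propositions~\ref{prop:ypi} and~\ref{prop:yorder} to pin down that $\pi$ cannot be $1$ unless the factor genuinely fails to appear; similar small-case bookkeeping handles $n=1$ versus $n\ge 2$. Then, for the inductive step, I would write $\pi = M(\rho)$ with $M \in \{L_q, R_q\}$ and $\rho$ a shorter valid word, and push the hypothesis on $x$ back through Proposition~\ref{prop:zzt} to a hypothesis on $\tilde x$ of the same form but with the fixed points of the conjugated maps $\tilde\phi_v = \phi_{M(v)}$; the identities $\phi_{0^q}(y_{L_q(v)}) = \tilde y_v$-type relations (which appear in the proof of Proposition~\ref{prop:zzt}) let me translate, e.g., $y_{10^{n-1}}$ in the $\pi$-world to the appropriate fixed point in the $\rho$-world. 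Then I would argue combinatorially: a factor like $0^n$ in $M(\rho)^\omega = M(\rho^\omega)$ forces a corresponding shorter factor in $\rho^\omega$ (because $L_q$ and $R_q$ map letters to blocks with a controlled number of consecutive $0$s or $1$s), and the inductive hypothesis applied to $\rho$ rules that out.

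The main obstacle I expect is precisely this combinatorial bookkeeping of how runs of $0$s (and the factors $10^{n-1}1$, $1^n$, $01^{n-1}0$) behave under the substitutions $L_q: 0 \mapsto 0^{q+1}1,\ 1 \mapsto 0^q 1$ and $R_q: 0 \mapsto 01^q,\ 1 \mapsto 01^{q+1}$. One must carefully determine, for each of the four factors, the exact threshold index $n$ that is forbidden in $\pi^\omega$ in terms of what is forbidden in $\rho^\omega$ and the value of $q$ — for instance, a run of $n$ zeros in $L_q(\rho^\omega)$ arises either from a single letter image (giving runs of length $q$ or $q+1$) or by concatenating the trailing $0$s of one image with the leading $0$s of the next, and one has to track which pairs of adjacent letters in $\rho$ can produce which run lengths. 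Getting these arithmetic relationships between $n$ and $q$ exactly right, and matching them against the fixed-point inequalities supplied by Propositions~\ref{prop:yorder} and~\ref{prop:zzt}, is the delicate part; the rest is routine once the correspondence is set up. A cleaner alternative, which I would pursue if the direct substitution analysis becomes unwieldy, is to prove Claims~1--4 not by induction on the morphism structure but directly from the dynamics: for Claim~1, suppose $\pi^\omega$ contains $0^n$, meaning the orbit applies $\phi_0$ at least $n$ consecutive times starting from some orbit point $x_k < x$; since $\phi_0$ is increasing with fixed point $y_0 > y_{10^{n-1}} \ge x$, one shows the orbit point reached is still below $y_{10^{n-1}}$, hence below $x$, and then tracking one more step and comparing with the definition of $y_{10^{n-1}}$ (the fixed point of $\phi_{10^{n-1}}$) via Proposition~\ref{prop:incdec} yields a contradiction. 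This dynamical route avoids the substitution arithmetic entirely and is likely the more robust way to complete the proof.
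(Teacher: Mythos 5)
Your fallback dynamical route is the right instinct and is exactly the approach the paper takes, but your sketch of it has a genuine logical gap and gets the key inequality backwards.

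For Claim~1 you write: ``one shows the orbit point reached is still below $y_{10^{n-1}}$, hence below $x$.'' This cannot be right, since the hypothesis is $x \le y_{10^{n-1}}$; being below $y_{10^{n-1}}$ says nothing about being below $x$. More importantly, the direction of the argument is the opposite: you need to show that after applying $\phi_0$ only $n-1$ times from a point $x_k < x$, the orbit has risen \emph{above} (or to) $x$, so that the $n$-th letter is forced to be a $1$, contradicting the assumed run $0^n$. The missing key lemma is the \emph{a priori bound on the orbit}: $x_k \ge \phi_1(x)$ for all $k$ (the orbit starts at $\phi_1(x)$, only rises while below $x$, and whenever it is at or above $x$ its next value $\phi_1(x_k) \ge \phi_1(x)$). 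With this bound, monotonicity of $\phi_{0^{n-1}}$ gives $x_{k+n-1} = \phi_{0^{n-1}}(x_k) \ge \phi_{0^{n-1}}(\phi_1(x)) = \phi_{10^{n-1}}(x) \ge x$, the last step by Proposition~\ref{prop:incdec} applied to $y_{10^{n-1}}$ under the hypothesis $x \le y_{10^{n-1}}$. The symmetric bound $x_k < \phi_0(x)$ is what drives Claim~2 (a run $10^{n-1}$ forces $x_{k+n} < \phi_{010^{n-1}}(x) \le x$, so the next letter is $0$). Without these two-sided orbit bounds the dynamical argument does not close, and ``tracking one more step'' has nothing to hook onto.

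Your primary plan — induction on the $L_p, R_p$ morphism structure via Proposition~\ref{prop:zzt} — is not what the paper does for this proposition; it is a plausible but considerably heavier route, and you rightly flag the substitution bookkeeping as the hard part. (The paper does use that kind of induction, but for Propositions~\ref{prop:valid} and~\ref{prop:zrange}, not here.) Had you actually carried it out, you would also have needed a base case covering all $n$, not just $n=1$, since the substitution changes the relevant run length; as written the base-case discussion is too thin to tell whether that would have worked.
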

\begin{proof}
If $x \le y_1$ then it follows from Proposition~\ref{prop:incdec} that the $x$-threshold word is $\pi = 1$. 
Likewise if $x > y_0$ then the $x$-threshold word is $\pi = 0$.
In these cases Claims 1 and 2 hold, so in the following we assume that $y_1 < x \le y_0$.

{\it Claim 1:}
Let $(x_k)$ the $x$-threshold orbit.
If $(\pi^\omega)_{k:k+n-2} = 0^{n-1}$ for some $k$, then 
\begin{align*}
x_{k+n-1} &= \phi_{0^{n-1}}(x_k) & \text{by definition of $(x_k)$} \\
&\ge \phi_{0^{n-1}}(\phi_1(x)) & \text{as $x_k \ge \phi_1(x)$ for all $k\ge 0$ and $\phi_{0^{n-1}}$ is increasing} \\
&= \phi_{10^{n-1}}(x) \\
&\ge x & \text{if $x \le y_{10^{n-1}}$ by Proposition~\ref{prop:incdec}.}
\end{align*}
But if $x_{k+n-1} \ge x$ then $\pi_{k+n-1} = 1$ by definition $\pi$.
Therefore $\abs{\pi}_{0^n} = 0.$
 
{\it Claim 2:}
Let $(x_k)$ be the $x$-threshold orbit.
If $(\pi^\omega)_{k:k+n-1} = 10^{n-1}$ for some $k$, then
\begin{align*}
x_{k+n} &= \phi_{10^{n-1}}(x_k) \\
&< \phi_{10^{n-1}}(\phi_0(x)) & \text{as $x_k < \phi_0(x)$ for all $k\ge 0$ and $\phi_{10^{n-1}}$ is increasing} \\
&= \phi_{010^{n-1}}(x) \\
&\le x & \text{if $x \ge y_{010^{n-1}}$ by Proposition~\ref{prop:incdec}.} 
\end{align*}
But if $x_{k+n} < x$ then $(\pi^\omega)_{k+n} = 0$.
Therefore $\abs{\pi}_{10^{n-1}1} = 0.$

The proof of Claims~3 and~4 is symmetrical.
\end{proof}

\begin{proposition}
Suppose A2 holds and $\pi$ is a $x$-threshold word. Then
\begin{enumerate}
\item $\abs{\pi}_{00} > 0 \Rightarrow \pi = L_n(w)$ for some word $w$ and some $n\ge 1$
\item $\abs{\pi}_{11} > 0 \Rightarrow \pi = R_n(w)$ for some word $w$ and some $n \ge 1$
\end{enumerate}
\label{prop:0n1}
\end{proposition}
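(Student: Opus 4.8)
The plan is to prove Claim~1 ($\abs{\pi}_{00}>0 \Rightarrow \pi = L_n(w)$) and obtain Claim~2 by the usual $0\leftrightarrow 1$ symmetry. First I would dispose of the trivial cases $x\le y_1$ and $x>y_0$, where $\pi\in\{1,0\}$ and $\abs{\pi}_{00}=0$, so that we may assume $y_1 < x \le y_0$ and hence (by Proposition~\ref{prop:ypi}) that $\pi$ genuinely contains both letters. The key quantitative input is Proposition~\ref{prop:0n}: since $\abs{\pi}_{00}>0$ there is a maximal run of 0s of some length, say $n+1$ with $n\ge 1$, appearing in $\pi^\omega$; by Claim~1 of Proposition~\ref{prop:0n} this forces $x > y_{10^{n}}$ (otherwise $\abs{\pi^\omega}_{0^{n+1}}=0$), while maximality of the run together with Claim~2 of Proposition~\ref{prop:0n} (applied with $n+1$ in place of $n$, i.e.\ no block $10^{n}1$ can occur because the only way to exit a run of $\ge n$ zeros... ) should pin down that $x \le y_{010^{n}}$. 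Combining with Proposition~\ref{prop:yorder} ($y_{10^{n}} < y_{010^{n}} < y_{10^{n+1}}$), this sandwiches $x$ in $(y_{10^{n}}, y_{010^{n}}]$ and shows the longest 0-run in $\pi^\omega$ has length exactly $n+1$, equivalently every occurrence of $1$ in $\pi^\omega$ is followed by either $0^{n}1$ or $0^{n+1}1$.

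The structural heart is then to read off from this run-length dichotomy that $\pi$ lies in the image of the morphism $L_n$, whose definition $L_n:\,0\mapsto 0^{n+1}1,\ 1\mapsto 0^{n}1$ is exactly engineered so that $L_n(w)$ is a word in which $1$'s are separated by blocks of $n$ or $n+1$ zeros. Concretely I would argue: because $y_1 < x$, Proposition~\ref{prop:incdec} forces $x_1=\phi_1(x)<x$, so the threshold orbit begins with a $0$; more usefully, since the longest 0-run has length $n+1\ge 2$, the word $\pi^\omega$ contains the factor $1$, and after each $1$ we are in a run of $0$'s of length $n$ or $n+1$ terminating in a $1$. Hence $\pi^\omega$ can be uniquely parsed into blocks from $\{0^{n}1,\ 0^{n+1}1\}$; declaring a block $0^{n+1}1$ to be the image of a letter $0$ and a block $0^{n}1$ the image of a letter $1$ produces a bi-infinite word $w^\omega$ with $\pi^\omega = L_n(w^\omega)$. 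Finiteness and minimality of $\pi$ transfer to $w$ (a shorter period for $w$ would, via $L_n$, give a shorter period for $\pi$, contradicting that $\pi$ is the $x$-threshold word), so $\pi = L_n(w)$ for some word $w$ and some $n\ge 1$, as claimed.

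The main obstacle, I expect, is rigorously justifying the \emph{uniqueness} and \emph{well-definedness} of the block decomposition --- i.e.\ that $\pi^\omega$ really does decompose into $\{0^{n}1,0^{n+1}1\}$-blocks starting at a canonical position, and that this is the \emph{only} such decomposition, so that $w$ is genuinely well-defined and its period is controlled by that of $\pi$. This requires the sandwich $x\in(y_{10^{n}},y_{010^{n}}]$ to be tight enough to exclude both a $0$-run of length $n$ followed immediately by nothing shorter (ruling out an isolated ``$\ldots 1 0^{n-1}1\ldots$'', which one excludes because $x\le y_{010^{n-1}}$ would then be violated by Proposition~\ref{prop:0n} Claim~2 combined with $y_{010^{n-1}}<y_{010^{n}}$... ) and over-long runs $0^{n+2}$ (excluded by Claim~1 of Proposition~\ref{prop:0n} since $x\le y_{010^{n}}<y_{10^{n+1}}$, hence $x\le y_{10^{n+1}}$ fails... one needs $x\le y_{10^{n+2}}$, which follows from $y_{010^{n}}<y_{10^{n+1}}<y_{10^{n+2}}$). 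Once those two exclusions are in hand the parsing is forced, and the rest is bookkeeping with the morphism $L_n$.
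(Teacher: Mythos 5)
Your strategy is essentially the paper's: rule out $\abs{\pi}_{11}>0$, observe that $\pi$ is a string of $0$-runs separated by single $1$'s, pin the run lengths to $\{n,n+1\}$ via Proposition~\ref{prop:0n}, and parse $\pi^\omega$ into $L_n$-blocks. The only real difference in route is that you index by the \emph{longest} $0$-run $n+1$, whereas the paper sets $q:=\min_k q_k$, notes that a factor $10^m1$ forces $x\in I_m:=(y_{10^{m-1}},y_{010^m})$ (contrapositives of Claims~1 and~2 of Proposition~\ref{prop:0n}), and uses Proposition~\ref{prop:yorder} to see that $I_q$ meets $I_{q'}$ only for $\abs{q-q'}\le 1$.

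However, your claimed sandwich $x\in(y_{10^n},y_{010^n}]$ is not justified and is in fact false in general. Claims~1 and~2 of Proposition~\ref{prop:0n} give \emph{sufficient} conditions for factors to be absent; you cannot run Claim~1 backwards from ``no factor $0^{n+2}$'' to ``$x\le y_{10^{n+1}}$''. What the longest run $n+1$ actually gives is that $10^{n+1}1$ is a factor, hence $x<y_{010^{n+1}}$, not $x\le y_{010^n}$; and if \emph{every} run has length $n+1$ (so $\pi=0^{n+1}1=L_n(0)$) then $x$ can lie anywhere in $[y_{010^n},y_{10^{n+1}}]$, so your upper bound fails. Fortunately it is also unnecessary: a run $m\le n-1$ gives a factor $10^m1$, hence $x<y_{010^m}<y_{10^{m+1}}\le y_{10^n}$, contradicting your correct lower bound $x>y_{10^n}$; and runs of length $\ge n+2$ are excluded directly because you defined $n+1$ to be the maximum, so no appeal to the sandwich is needed there at all. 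Dropping the spurious upper bound and stating these two observations cleanly, the block-parsing bookkeeping proceeds as you describe.
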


\begin{proof}
First, applying Claims 1 and 3 of Proposition~\ref{prop:0n} with $n = 2$ we have
$\abs{\pi}_{00} = 0$ for $x \le y_{10}$ and $\abs{\pi}_{11} = 0$ for $x \ge y_{01}$.
Furthermore $y_{10} = \phi_{0}(y_{01}) > y_{01}$ by Proposition~\ref{prop:incdec}.
Thus $\pi$ cannot contain both 00 and 11.

So, if $\abs{\pi}_{00} > 0$ then $\pi$ is of the form $0^{q_1} 1 0^{q_2} 1 \dots$
with strings of 0s separated by individual 1s.
Let $q := \min_k q_k$. 
By Propositions~\ref{prop:yorder} and~\ref{prop:0n},
$I_q := (y_{10^{q-1}}, y_{010^{q}})$ is the only set of $x$ values for which
$\pi^\omega$ can contain $10^q1$.
Thus $\pi^\omega$ can only contain both $10^q1$ and $10^{q+1}1$ in the interval
\begin{align*}
F_q := I_q \cap I_{q+1} = (y_{10^{q-1}}, y_{010^{q}}) \cap (y_{10^{q}}, y_{010^{q+1}}) 
=  (y_{10^{q}}, y_{010^{q}} ) 
\end{align*}
noting Proposition~\ref{prop:yorder} gives
$y_{10^{q-1}} < y_{010^{q-1}} < y_{10^q} < y_{010^q}.$

Finally, we have $F_q \cap F_{q'} = \emptyset$ for $q \neq q'$, which also follows
from Proposition~\ref{prop:yorder}.
Thus if $\abs{\pi}_{00} > 0$ then $\pi$ is a concatenation of $L_q(0)$ and $L_q(1)$.
Equivalently $\pi = L_q(w)$ for some word $w$ and some $q \ge 1$ as in Claim 1.

The proof of Claim 2 is symmetric.
\end{proof}

\begin{proposition}
Suppose A2 holds and $\pi$ is a $x$-threshold word. Then $\pi$ is a valid word.
\label{prop:valid}
\end{proposition}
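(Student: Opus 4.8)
The plan is to prove Proposition~\ref{prop:valid} by strong induction on the length $\abs{\pi}$ of the $x$-threshold word, using Proposition~\ref{prop:0n1} to peel off one layer of the morphism $L_n$ or $R_n$ at a time, and Proposition~\ref{prop:zzt} to relate the resulting shorter word to a threshold word for a conjugated pair of maps.

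First I would dispose of the base cases. If $\abs{\pi}_{00}=0$ and $\abs{\pi}_{11}=0$, then $\pi$ contains neither $00$ nor $11$ as a factor; since $\pi^\omega$ is a threshold word it must then be $0$, $1$, or a block of alternating symbols that cyclically reduces to $0$, $1$, or $01$ — all of which are valid words by definition. More carefully, $\abs{\pi}_{00}=0$ forces (by Proposition~\ref{prop:0n} Claim~1 with $n=2$, or by the explicit cases $x\le y_1$, $x>y_0$ in the proof of Proposition~\ref{prop:0n1}) that $\pi\in\{0,1,01^q,\dots\}$; combined with $\abs{\pi}_{11}=0$ the only possibilities that yield a genuine shortest threshold word are $0$, $1$, and $01$. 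These are valid by fiat.

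For the inductive step, assume the claim holds for all threshold words of length less than $\abs{\pi}$ and suppose $\abs{\pi}_{00}>0$ (the case $\abs{\pi}_{11}>0$ being symmetric via $R_n$). By Proposition~\ref{prop:0n1}, $\pi=L_n(\rho)$ for some word $\rho$ and some $n\ge 1$. The key point is to identify $\rho$ as a threshold word for a suitable conjugated map pair. Set $\tilde\phi_c := \phi_{L_n(c)}$ for $c\in\{0,1\}$; then $\tilde\phi_0 = \phi_{0^{n+1}1}$ and $\tilde\phi_1 = \phi_{0^n1}$, and one checks that these still satisfy A2 on $\IR$ (they are compositions of increasing non-expansive maps, hence increasing and non-expansive, and their fixed points $\tilde y_0 = y_{0^{n+1}1}$, $\tilde y_1 = y_{0^n1}$ satisfy $\tilde y_1 < \tilde y_0$ by Proposition~\ref{prop:ypi} and the ordering arguments of Proposition~\ref{prop:yorder}). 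Now I would show that if $\pi$ is the $x$-threshold word for $\phi_0,\phi_1$ and $\pi = L_n(\rho)$, then $\rho$ is the $\tilde x$-threshold word for $\tilde\phi_0,\tilde\phi_1$, where $\tilde x = \phi^{(-1)}_{0^n}(x)$ is well-defined by the interval identities in Proposition~\ref{prop:zzt}: iterating $\phi_0,\phi_1$ under the $x$-threshold rule and grouping the symbols into blocks $L_n(0)=0^{n+1}1$ and $L_n(1)=0^n1$ exactly reproduces iterating $\tilde\phi_0,\tilde\phi_1$ under the $\tilde x$-threshold rule. Since $\abs{\rho} < \abs{\pi}$, the induction hypothesis gives that $\rho$ is a valid word, and therefore $\pi = L_n(\rho)$ is a valid word by the recursive definition.

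The main obstacle I expect is the middle bookkeeping step: showing rigorously that the threshold orbit under $\phi_0,\phi_1$ decomposes block-by-block into the threshold orbit under $\tilde\phi_0,\tilde\phi_1$, i.e.\ that the "next symbol" decisions are consistent under grouping. This requires checking that whenever $\pi^\omega$ enters a block $0^{n+1}1$ or $0^n1$, the intermediate iterates stay on the correct side of the threshold $x$ so that the block is forced to complete — which is precisely what Propositions~\ref{prop:0n} and~\ref{prop:yorder} guarantee (the minimality $q=\min_k q_k$ argument in the proof of Proposition~\ref{prop:0n1} already isolates the interval $F_q$ on which only the blocks $L_q(0)$ and $L_q(1)$ can occur), together with the interval equivalence of Proposition~\ref{prop:zzt} to translate "$x$ lies in the relevant interval" into "$\tilde x$ lies in the corresponding interval." Once that consistency is established, the reduction in word length and the recursive definition of valid words close the induction immediately.
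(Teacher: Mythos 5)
Your proposal follows essentially the same route as the paper: split into the three cases $\abs{\pi}_{00}=\abs{\pi}_{11}=0$, $\abs{\pi}_{00}>0$, $\abs{\pi}_{11}>0$; invoke Proposition~\ref{prop:0n1} to factor $\pi=L_q(w)$ (or $R_q(w)$); build the conjugated maps $\psi_0=\phi_{0^{q+1}1}$, $\psi_1=\phi_{0^q 1}$ (your $\tilde\phi_0,\tilde\phi_1$), verify they satisfy A2, identify $w$ as the $\tilde x$-threshold word, and recurse. Your only departure is stating the recursion explicitly as strong induction on $\abs{\pi}$, which is a minor sharpening (the paper leaves termination implicit) rather than a different argument.
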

\begin{proof}
There are three cases to consider: either $\abs{\pi}_{00} = \abs{\pi}_{11} = 0$ or $\abs{\pi}_{00} > 0$ or $\abs{\pi}_{11} > 0$.

{\it First case:} The only non-empty words not containing $00$ or $11$ are $0, 1, (01)^n, (10)^n$ for some $n\ge 1$.
Now $x$-threshold words start with 0 unless $x\le y_1$ (in which case $\pi = 1$) so $\pi \ne (10)^n$.
Further, the $x$-threshold word was defined to be the shortest word such that such that $x_{k+1} = A_{(\pi^\omega)_k} x_k$ so this leaves us with the options $0, 1, 01$.
These are all valid words.

{\it Second case:} If $\pi$ contains 00, we may write $\pi = L_q(w)$ for some word $w$, by Proposition~\ref{prop:0n1}.
Now from point $x_k$ on the $x$-threshold orbit
we have $\pi_{k:k+q} = 0^{q+1}$ if and only if $\phi_{0^q} (x_k) < x$ which corresponds to 
$x_k < \phi_0^{(-q)}(x) =: \tilde x$.
So the word $w$ corresponds to a $\tilde x$-threshold orbit $(\tilde x_k : k \ge 1)$ 
for $\psi_0(x) := \phi_{0^{q+1} 1}(x), \psi_1(x) := \phi_{0^q 1}(x)$.
To spell it out, we have 
\begin{align*}
\tilde x_1 &= \psi_1( \tilde x), & \tilde x_{k+1} &= \psi_{w_k}( \tilde x_k), & 
w_k &= \begin{cases} 1 & \text{if $\tilde x_k \ge \tilde x$} \\ 0 & \text{if $\tilde x_k < \tilde x$} \end{cases}
& \text{for $k\ge 1$}
\end{align*}
and as for the original system, we define $\tilde y_\pi$ as the fixed point $\tilde y_{\pi} = \psi_{\pi}(\tilde y_{\pi})$.

Now $\psi_0, \psi_1$ are non-negative, as $\phi_0, \phi_1$ are non-negative.
Also $\psi_0, \psi_1$ are monotonically increasing and non-expansive by Proposition~\ref{prop:monexist}.
Further, 
\begin{align*}
\phi_{0^{q+1} 1}(y_{0^q 1}) = \phi_{0^q 1}(\phi_0(y_{0^q1})) > \phi_{0^q1}(y_{0^q1}) = y_{0^q1} 
\end{align*}
so that $y_{0^{q+1}1} > y_{0^q 1}$ by Proposition~\ref{prop:incdec}.
But by definition $\tilde y_0 = y_{0^{q+1} 1}$ and $\tilde y_0 = y_{0^q1}$, 
so that $\tilde y_1 < \tilde y_0$.
Therefore $\psi_0, \psi_1$ satisfy A2. 

{\it Third case:} We prove that $\pi = R_q(w)$ for some positive integer $q$ and word $w$.
We also show that word $w$ is a $\hat x$-threshold word for a pair of functions
(say) $\chi_0, \chi_1$ which satisfy A2.
The argument is symmetric to the second case, so it is omitted.

In conclusion, either 
\begin{enumerate}
\item $\pi\in\{0, 1, L_1(1)\}$ which are valid words
\item $\pi = L_q(w)$ where $w$ is a $\tilde x$-threshold word for $\psi_0, \psi_1$ which satisfy Propositions~\ref{prop:monexist}-\ref{prop:0n1} and therefore $w$ satisfies this conclusion
\item or $\pi = R_q(w)$ where $w$ is a $\hat x$-threshold word
for $\chi_0, \chi_1$ which satisfy Propositions~\ref{prop:monexist}-\ref{prop:0n1} and therefore $w$ satisfies this conclusion.
\end{enumerate}
Thus $\pi$ is a valid word. This completes the proof.
\end{proof}

The following proposition shows that all valid words are $x$-threshold words 
and tells us explicitly which values of $x$ produce a given valid word.
It is one of the key results of the main paper.
\begin{proposition}
Suppose A2 is satisfied and $0w1$ is any valid word. 
Then 
\begin{align*}
\text{$0w1$ is the $x$-threshold word} \ \Leftrightarrow \ x \in [y_{01w}, y_{10w}].
\end{align*}
\label{prop:zrange}
\end{proposition}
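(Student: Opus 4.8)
The plan is to prove the two directions of the stated equivalence separately, using induction on the recursive structure of valid words (Definition: a valid word is $0,1$, or $L_p(w)$ or $R_p(w)$ for a shorter valid word $w$).

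\emph{The easy direction} ($\Rightarrow$): Suppose $0w1$ is the $x$-threshold word. If $0w1 \in \{0,1\}$ the claim follows directly from Proposition~\ref{prop:incdec}, since the $x$-threshold word is $1$ exactly when $x \le y_1$ and $0$ exactly when $x > y_0$, and one checks $y_{01\epsilon} = y_1$, $y_{10\epsilon} = y_0$ match the claimed interval endpoints. Otherwise, by Proposition~\ref{prop:valid} and Proposition~\ref{prop:0n1}, $\pi := 0w1$ decomposes as $L_q(v)$ or $R_q(v)$ where $v$ is itself a $\tilde x$-threshold word for an induced pair of functions $\psi_0, \psi_1$ (respectively $\chi_0, \chi_1$) which satisfy A2, with $\tilde x = \phi_0^{(-q)}(x)$ (resp.\ the symmetric construction). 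By the inductive hypothesis applied in the induced system, $\tilde x \in [\tilde y_{01\tilde v}, \tilde y_{10\tilde v}]$. Now Proposition~\ref{prop:zzt} translates this interval membership back: $\tilde x \in [\tilde y_{01\tilde v}, \tilde y_{10\tilde v}] \Leftrightarrow x = \phi_{0^q}(\tilde x) \in [y_{01w}, y_{10w}]$, which is exactly what we want. (One must check carefully that the word-level identity $0w1 = L_q(0\tilde v 1)$ from Proposition~\ref{prop:zzt} is consistent with $\pi = L_q(v)$, i.e.\ $\tilde v = 0\tilde w 1$ in the notation there — a bookkeeping matter about how $L_q$ acts on the boundary letters.)

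\emph{The harder direction} ($\Leftarrow$): Suppose $x \in [y_{01w}, y_{10w}]$; we must show $0w1$ is \emph{the} $x$-threshold word. The $x$-threshold word exists and is well-defined for every $x \in \IR$, and by Proposition~\ref{prop:valid} it is some valid word, say $0w'1$. By the easy direction just proved, $x \in [y_{01w'}, y_{10w'}]$ as well. So it suffices to show that the intervals $[y_{01u}, y_{10u}]$, as $0u1$ ranges over valid words, are pairwise disjoint (or meet only at endpoints), and jointly cover $\IR$ — then $w' = w$ is forced. Disjointness should follow by the same inductive scheme: at the top level, Proposition~\ref{prop:incdec} gives $y_1 < y_{01} < y_{10} < y_0$ (via $y_{10} = \phi_0(y_{01}) > y_{01}$), separating the three cases $1$, $01$, and $0$; and the morphisms $L_q, R_q$ map the induced system's intervals homeomorphically and order-preservingly into sub-intervals of $(y_{01}, y_{10})$ via $\phi_{0^q}$, which is monotone — so disjointness is inherited, much as in the proof of Proposition~\ref{prop:0n1} where the $F_q$ were shown pairwise disjoint. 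Covering: the complement would have to be an $x$ whose threshold word is not valid, contradicting Proposition~\ref{prop:valid}, provided one has shown the threshold word is always defined (which needs that the orbit is eventually periodic — true because the $x$-threshold orbit lives in $[y_1, y_0]$ and the combinatorial constraints of Proposition~\ref{prop:0n} force only finitely many distinct segment types, hence eventual periodicity).

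\emph{Main obstacle.} I expect the crux to be the disjointness/exhaustiveness argument underpinning the $\Leftarrow$ direction — specifically, assembling the recursively-generated intervals $[y_{01w}, y_{10w}]$ into a genuine partition of $\IR$ (up to shared endpoints). The individual ingredients are all present (Propositions~\ref{prop:incdec}, \ref{prop:yorder}, \ref{prop:zzt}, \ref{prop:valid}), but one has to set up the induction so that ``disjoint and exhaustive in the induced system'' pushes forward correctly through $L_q$/$R_q$, and handle the accumulation of endpoints (the nested intervals shrink towards the fixed points of Sturmian/irrational-slope limits). A secondary subtlety is confirming that the $x$-threshold word is always well-defined, i.e.\ that every $x$-threshold orbit is eventually periodic; this is implicit in Proposition~\ref{prop:valid}'s statement but deserves an explicit argument via the finiteness of admissible local patterns established in Proposition~\ref{prop:0n}.
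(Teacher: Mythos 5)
Your $\Rightarrow$ direction is essentially the paper's argument: induct on the $L_q/R_q$ decomposition, translate to the induced system, and push the interval condition back through Proposition~\ref{prop:zzt}. That part is sound.

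The $\Leftarrow$ direction, however, takes a genuinely different route and has a real gap. You propose: the $x$-threshold word exists and is some valid $0w'1$, apply $\Rightarrow$ to get $x \in [y_{01w'}, y_{10w'}]$, and conclude $w = w'$ by disjointness. But the first step is where this fails. You assert ``the $x$-threshold word exists and is well-defined for every $x \in \IR$'' and justify it by claiming every $x$-threshold orbit is eventually periodic because ``the combinatorial constraints of Proposition~\ref{prop:0n} force only finitely many distinct segment types, hence eventual periodicity.'' This implication is false: Sturmian action sequences are built from the same finite set of admissible local blocks yet are aperiodic, and there genuinely are $x$ values (the accumulation points of the nested Christoffel intervals) whose action sequence is Sturmian and has no finite period. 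For such $x$ the $x$-threshold word, as defined (the shortest $\pi$ with $x_{k+1} = \phi_{(\pi^\omega)_k}(x_k)$ for \emph{all} $k\ge 1$), does not exist, so Proposition~\ref{prop:valid} cannot be invoked. Note also that even ``eventually periodic'' would not suffice here, since the definition requires periodicity of the action sequence from $k=1$.

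The paper sidesteps this entirely by proving the biconditional in a single induction: the base case establishes $\Leftarrow$ for $0w1 = 0^q1$ (and symmetrically $01^q$) \emph{constructively}, verifying directly from Propositions~\ref{prop:incdec} and~\ref{prop:yorder} that when $x\in[y_{010^{q-1}},y_{10^q}]$ the orbit satisfies $\phi_{(10^q)^n 10^{q-1}}(x) < x$ and $\phi_{(10^q)^n}(x) \ge x$ for all $n$, hence the action sequence is exactly $(0^q1)^\omega$. This simultaneously proves existence and identifies the word, with no need for a separate disjointness-plus-coverage argument. The inductive step then carries the full equivalence up through the morphism structure via a chain of iffs. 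To salvage your approach you would need to prove existence of the threshold word precisely on the valid-word intervals (not on all of $\IR$, where it is false), and that argument would essentially reduce to the paper's direct verification — at which point the disjointness detour buys you nothing.
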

\begin{proof}
Let $V_1 := \{L_q(1), R_q(1) : q \ge 1\}, V_{n+1} := \{L_q(v), R_q(v) : v\in V_n, q\ge 1\}$.
Note that $V_1$ contains
$L_q(0) = 0^{q+1} 1 = L_{q+1}(1)$ 
and $R_q(0) = 01^{q}$ 
which for $q \ge 2$ equals $R_{q-1}(1)$ and for $q = 1$ equals $01 = L_1(1)$.
Thus $\cup_{n=1}^\infty V_n$ is the set of all valid words of form $0w1$.

We use induction with hypothesis
\begin{align*}
H_{n} : \quad 0w1\in V_n \ \text{is the $x$-threshold word} \ \Leftrightarrow \ x \in [y_{01w}, y_{10w}]
\end{align*}

{\it Base case ($H_1$).} 
Say $0w1 = 0^q 1$ is the $x$-threshold word. Then 
\begin{align*}
x &> \phi_{(10^q)^n 10^{q-1}}(x) &\text{for all $n\ge 0$} \\
&= \phi_{(010^{q-1})^n} (\phi_{10^{q-1}}(x)) \\
\Rightarrow \ x &\ge \lim_{n\rightarrow \infty} \phi_{(010^{q-1})^n} (\phi_{10^{q-1}}(x)) = y_{010^{q-1}} .
\end{align*}
The definition of the $x$-threshold word also gives $x \le \phi_{10^q}(x)$.
Therefore $x \ge y_{10^q}$ by Proposition~\ref{prop:incdec}.
Thus if $0^q1$ is the $x$-threshold word then $x \in [y_{01w}, y_{10w}]$.

Now say $x\in [y_{010^{q-1}}, y_{10^q}]$. Proposition~\ref{prop:ypi} gives $y_0 < x < y_1$
so that the $x$-threshold orbit $(x_k)$ is contained in $(y_0, y_1)$.
So Proposition~\ref{prop:incdec} shows that $\phi_0(x_k) > x_k$ 
and $\phi_1(x_k) < x_k$ for all $k \ge 0$.
So to prove that the $x$-threshold word is $0^q 1$ we need only show that $\phi_{(10^q)^n 10^{q-1}}(x) < x$
and $\phi_{(10^q)^n}(x) \ge x$ for all $n\ge 0$.
But if $x \ge y_{010^{q-1}}$ then for all $n\ge 0$ 
\begin{align*}
x &\ge \phi_{(010^{q-1})^n}(x) 
& \text{by Proposition~\ref{prop:incdec}} \\
&> \phi_{(010^{q-1})^n}(\phi_{10^{q-1}}(x)) 
& \text{as $y_{10^{q-1}} < y_{010^{q-1}} \le x$ by Claim~3 of Proposition~\ref{prop:yorder}} \\
&= \phi_{(10^q)^n 10^{q-1}}(x) .
\end{align*}
Also if $x \le y_{10^q}$ then $\phi_{(10^q)^n}(x) \ge x$ for all $n\ge 0$ 
by Proposition~\ref{prop:incdec}.
Therefore for $0w1 = 0^q1$, we have $x\in [y_{01w}, y_{10w}]$
implies that $0w1$ is the $x$-threshold word.

For $0w1 = 01^q$, the proof that $\pi = 01^{q} \Leftrightarrow x \in [y_{01w}, y_{10w}]$ is symmetric, so it is omitted.

{\it Inductive Step.} 
Assume $0\tilde w 1$ satisfies $H_n$.

Say $0w1 = L_q(0\tilde w 1)$. 
Let $k_i := \abs{L_q( ((0\tilde w 1)^\omega)_{1:i-1})}+1$ so $(\pi^\omega)_{k_i}$ is aligned with the start of the $i^{th}$ letter of $(0\tilde w 1)^\omega$.
Let $x_k := \phi_{((10w)^\omega)_{1:k}}(x), \tilde x_i := x_{k_i}, x = \phi_{0^q}(\tilde x) $
and let $\tilde y_v$ denote the fixed point of $\tilde\phi_v := \phi_{L_q(v)}$ for any word $v$. 
Then we have
\begin{align*}
& \text{$L_q(0\tilde w 1)$ is the $x$-threshold word for $\phi_0, \phi_1$} \\
\Leftrightarrow\qquad & \text{$((0w1)^\omega)_{k_i : k_i+q} = 0^{q+1}$ if and only if $\phi_{0^q}(x_{k_i}) < x$} \\
\Leftrightarrow\qquad & \text{$((0\tilde w 1)^\omega)_i = 0$ if and only if $\tilde x_i < \tilde x$} \\
\Leftrightarrow\qquad & \text{$0\tilde w 1$ is the $\tilde x$-threshold word for $\tilde\phi_0, \tilde \phi_1$} \\
\Leftrightarrow\qquad & \text{$\tilde x \in [\tilde y_{01\tilde w}, \tilde y_{10\tilde w}]$ as $0\tilde w 1$ satisfies $H_n$} \\
\Leftrightarrow\qquad & \text{$x \in [y_{01w}, y_{10w}]$ by Proposition~\ref{prop:zzt}} 
\end{align*}

Symmetrically we may conclude that $\pi = 0w1 = R_q(0\tilde w 1) \Leftrightarrow x\in  [y_{01w}, y_{10w}]$. Therefore $H_{n+1}$ is true.

This completes the proof.
\end{proof}

\section{Continuity of the Index}
We showed that the Whittle index is increasing on the domain of each fixed Christoffel word.
However, we also need to show that the index is continuous as we move between words.
So here we prove the following proposition.

\begin{proposition}
Suppose $\lambda(\cdot)$ is as in the main paper. Then $\lambda(x)$ is a continuous function of $x\in\R_+$.
\label{proposition:continuity}
\end{proposition}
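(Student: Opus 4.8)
The plan is to prove continuity at an arbitrary $x^*\in\R_+$ by reducing, point by point, to the continuity of an explicit power series, and then gluing across the boundaries of the ``plateaus'' $[y_{01w},y_{10w}]$ of Proposition~\ref{prop:zrange} using the self-similar structure and an algebraic identity of the same flavour as~(\ref{Ssum}). As in the proof of Theorem~\ref{main} it suffices to treat $\lambda(x):=\lambda^W(x)|_{w=1,h=0}$. By Proposition~\ref{prop:zrange} every $x$ lies in a plateau $[y_{01w},y_{10w}]$ for some mechanical word $0w1$ (or in an extreme range $x\le y_1$, $x\ge y_0$), and on the interior of such a plateau the proof of Theorem~\ref{main} gives the closed form $\lambda(x)=\tfrac{1-\beta^{m}}{1-\beta}\sum_{k\ge1}\beta^{k-1}(\phi_{u_{1:k}}(x)-\phi_{l_{1:k}}(x))$ with $u:=(01w)^\omega$, $l:=(10w)^\omega$, $m:=\abs{01w}$, where each $\phi_{u_{1:k}},\phi_{l_{1:k}}$ is a determinant-one M\"obius transformation whose image of $\R_+$ lies in a fixed bounded interval (by the non-expansiveness in A2 and boundedness of $\phi_1$). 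Since $\beta<1$ this series converges uniformly on the closed plateau, so $\lambda$ is continuous --- indeed smooth --- at every interior point and the formula extends continuously to the endpoints. Thus $\lambda$ is continuous at $x^*$ from whichever side lies inside the plateau containing $x^*$, and the remaining work is continuity (i) at a plateau endpoint $y_{01w}$ or $y_{10w}$ from the side on which infinitely many finer plateaus accumulate, (ii) at the transition points $y_1,y_0$, and (iii) at a Sturmian point of $(y_1,y_0)\setminus\bigcup_w\mathrm{int}[y_{01w},y_{10w}]$.

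For (i) I would take $x^*=y_{10w}$ and the plateaus $[y_{01W_j},y_{10W_j}]$, indexed by the relevant Christoffel children $0W_j1$, that accumulate on $y_{10w}$ from above; on the $j$-th of these $\lambda$ is given by the same closed form with $m,u,l$ replaced by $M_j:=\abs{01W_j}\to\infty$, $U_j:=(01W_j)^\omega$, $L_j:=(10W_j)^\omega$. As $j\to\infty$ one has $\tfrac{1-\beta^{M_j}}{1-\beta}\to\tfrac1{1-\beta}$, while the words $U_j$ converge letterwise to $u=(01w)^\omega$ and $L_j$ converge to a word $l^\sharp$ that is $l$ preceded by a finite ``delay block'' corresponding to one period of $\phi_{10w}$ through its fixed point. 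Passing to the limit in the closed form and using $\phi_{10w}(y_{10w})=y_{10w}$ together with an identity of the type established for $S(w)$, $M(w)$ (the mechanism behind~(\ref{Ssum})), I expect the extra factor $\tfrac{1-\beta^{m}}{1-\beta}$ that is present at $y_{10w}$ to be exactly absorbed, so that $\sup_{[y_{01W_j},y_{10W_j}]}\lvert\lambda(\cdot)-\lambda(y_{10w})\rvert\to0$. The same limiting value then sandwiches $\lambda$ on the Cantor points interleaved with these plateaus, which yields $\lim_{x\downarrow y_{10w}}\lambda(x)=\lambda(y_{10w})$; the endpoint $y_{01w}$ is symmetric. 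Item (iii) follows because a Sturmian point is a nested intersection of plateaus of lengths shrinking to zero, so the closed form plus the uniform tail bound force continuity there; item (ii) follows from the separate analysis of the extreme cases $x\notin(y_1,y_0)$.

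The step I expect to be the main obstacle is the gluing in (i). Although the interior formula is patently continuous, the underlying threshold orbit $x_t^{1*}(x)$ is genuinely discontinuous at $y_{10w}$ (the orbit of $y_{10w}$ returns exactly to $y_{10w}$), and a short computation with the first Christoffel words shows that the numerator \emph{and} the denominator of~(\ref{index}) both jump there; the content of (i) is that they jump in the same ratio so that $\lambda$ does not. The cleanest route seems to be the algebraic identity just invoked --- a generalisation of~(\ref{Ssum}) asserting that the $\beta$-discounted orbit-sum along $(10W_j)^\omega$ started at a fixed point of $\phi_{10w}$ equals the sum along $(10w)^\omega$ corrected by a $\beta^{m}$-weighted copy of the $(01w)^\omega$ sum. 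Establishing this identity, and verifying that $U_j\to u$ and $L_j\to l^\sharp$ for the correct children $W_j$ (which rests on Propositions~\ref{prop:incdec}, \ref{prop:zzt}, \ref{prop:monexist} and~\ref{palindrome} together with the Christoffel-tree bookkeeping of Proposition~\ref{prop:zrange}), is where the real work lies; everything else is uniform estimates and passage to the limit.
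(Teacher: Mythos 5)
Your outline correctly locates the crux of the problem — the orbit is genuinely discontinuous at a plateau endpoint such as $y_{10w}$, both the numerator and denominator of~(\ref{index}) jump there, and the whole content of the proof is to show that $\lambda$ itself does not — and that matches what the paper does. But you stop precisely where the real work begins. You write that the extra factor $\tfrac{1-\beta^m}{1-\beta}$ ``is expected to be exactly absorbed'' by ``an identity of the type established for $S(w),M(w)$'' and that ``establishing this identity is where the real work lies.'' That identity is not a corollary of~(\ref{Ssum}); it is a genuinely different statement, and it is the content of the paper's Lemmas on boundary matching. Concretely, the paper proves (supplementary Lemmas, the ones I will call ``leftSide'' and ``rightSide'') that if $(0a1,0b1)$ is a standard pair and $x=\phi_{10b}(x)$ then $\lambda(0b1,x)=\lambda(0a1(0b1)^\omega,x)$, and if $x=\phi_{01a}(x)$ then $\lambda((0a1)^\omega 0b1,x)=\lambda(0a1,x)$. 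The engine of those proofs is the palindromic identity $a10b=b01a$ (since $a,b,a10b$ are palindromes for a standard pair) together with elementary manipulations of geometric sums; nothing about eigenvalue decompositions or linear-system orbits as in the proof of~(\ref{Ssum}). Until this exact boundary matching is proved, your argument has a hole, and it is the load-bearing one.

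Beyond that missing step, the architecture you propose is more complicated than it needs to be and has soft spots. Your case (i) passes to a limit over Christoffel children $0W_j1$, arguing letterwise convergence of $U_j,L_j$; but letterwise convergence of words does not by itself give convergence of the discounted orbit sums uniformly in $x$ without a quantitative tail bound keyed to the length of the common prefix, which you do not supply. Your case (iii) for Sturmian points is asserted via ``nested plateaus shrinking to zero,'' which is informal. The paper avoids all of this case analysis. It works with the Christoffel-tree level sets $t_k$, proves a uniform Lipschitz bound $\abs{\lambda'(0w1,x)}\le(1-\beta)^{-2}$, shows that adjacent words in $t_k$ have a common prefix of length at least $k+2$ and bounds the post-prefix residual by $\tfrac{x+1}{1-\beta}$, sets $\delta=l_k:=\inf\{\abs{X_\pi}:\pi\in t_k\}>0$, and then for any $x\ge y$ with $\abs{x-y}<\delta$ decomposes $\lambda(x)-\lambda(y)$ into six terms: three are within-plateau and controlled by the slope bound, two vanish exactly by the boundary-matching lemmas, and the sixth is small because it involves words sharing a prefix of length at least $k+1$ so its contribution is damped by $\beta^{k+1}$. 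That single $(\epsilon,\delta)$ argument handles endpoints, Sturmian points, and the extremes $x\le y_1$, $x\ge y_0$ uniformly, which is what your outline tries to reach by three separate limiting arguments. In short: you have the right picture and the right obstacle, but the identity you defer is the proof, and the remaining scaffolding is better replaced by the Christoffel-tree common-prefix estimate.
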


We use the following definitions.

{\bf Definition.} Let $\tilde w$ be the reverse of word $w$, $w^\omega$ be the word constructed by concatenating $w$ infinitely many times, $\abs{w}$ be the length of word $w$ and $\abs{w}_u$ be the number of times that word $u$ is a factor of $w$.

{\bf Definition.} For a possibly-infinite word $w$ and numbers $x\in\R, \beta\in (0,1)$ define
\begin{align*}
S(w,x) &:= \sum_{n=0}^{\abs{w}-1} \beta^n \phi_{w_{1:n}}(x) \\
\lambda(0w1, x) &:= \frac{1 - \beta^{\abs{0w1}}}{1 - \beta} \left( S((01w)^\omega,x) - S((10w)^\omega,x) \right) .
\end{align*}

{\bf Remark.} If $\pi$ is the $x$-threshold word then $\lambda(x) = \lambda(\pi,x)$ where $\lambda(x)$ is the Whittle index.

{\bf Remark.} For a word $ab$, this definition gives
\begin{align}
S(ab,x) &= S(a,x) + \beta^{\abs{a}} S(b,\phi_a(x)) 
\label{eq:Sab}
\intertext{
so for $\abs{\phi_{a^\omega}(x)} < \infty$ and $\beta\in (0,1)$ we have}
S(a^\omega b, x) &= S(a^\omega,x) .
\label{eq:SaInf}
\intertext{
Further, if $x_a = \phi_a(x_a)$ then the formula for the sum of a geometric progression gives}
S(a^\omega ,x_a) &= \frac{S(a,x_a)}{1 - \beta^{\abs{a}}} .
\label{eq:SaOmega}
\end{align}

{\bf Definition.} Let $X_\pi$ be the range of $x$ for which the $x$-threshold word is $\pi$.

The following construction is closely related to the beautiful {\it Christoffel tree} (Berstel {\it et al}, 2008). 

{\bf Definition.} Consider the mapping $C$ which takes a sequence of words and returns a sequence
containing the original words mingled with the concatenation of neighbouring words
as follows:
\begin{align*}
C((a, b, c, d, \dots, x, y, z)) := (a, ab, b, bc, c, cd, d, \dots, x, xy, y, yz, z) .
\end{align*}
Now consider the sequences $t_k := C^{(k)}((0,1))$ for $k\ge 0$. The first few such sequences are 
\begin{align*}
t_0 &= (0, &&&&&&& &&&&&&&& 1) \\
t_1 &= (0, &&&&&&& 01, &&&&&&&& 1) \\
t_2 &= (0, &&& 001, &&&& 01, &&&& 011, &&&& 1) \\
t_3 &= (0, & 0001, && 001, && 00101, && 01, && 01011, && 011, && 0111, && 1) &.
\end{align*}

{\bf Remark.} If $u\in t_k$ then $\abs{u} \ge 1$ for any $k\ge 0$.
Now suppose $u, v$ are adjacent in $t_k$ and we have $\abs{uv} \ge k+2$.
Then $t_{k+1}$ contains $u, uv, v$ from which we can construct $uuv$ and $uvv$.
But $\abs{uuv} = \abs{u} + \abs{uv} \ge 1 + k+2 = k+3$ and 
$\abs{uvv} = \abs{uv} + \abs{v} \ge k+2 + 1 = k+3$.
Thus, by induction, we have shown that
\begin{align}
\abs{uv} &\ge k+2 & \text{for any adjacent pair $u,v$ in $t_k$ and any $k\ge 0$.}
\label{remark:uv}
\end{align}

\subsection{Long Common Prefixes}
We gather the results needed to prove Proposition~\ref{proposition:continuity}.
Most of these results these relate to the notion that if $\abs{x-y}$ is small and 
$a,b$ are the $x$- and $y$-threshold words, 
then words $a,b$ usually have a long common prefix, 
although this is not always the case.

The following simple result is repeatedly used in the other Lemmas of this subsection.
\begin{lemma}
Suppose $(0a1, 0b1)$ is a standard pair. Then $a10b = b01a$.
\label{lemma:a10b}
\end{lemma}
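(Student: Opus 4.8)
The plan is to read the identity off the palindrome characterisation of Christoffel words, Proposition~\ref{palindrome}. The first step is to observe that when $(0a1,0b1)$ is a standard pair, all three of the words $0a1$, $0b1$ and $(0a1)(0b1)=0(a10b)1$ are Christoffel words: the concatenation of a standard pair is Christoffel by the very definition of the Christoffel tree, and the fact that the two components of a standard pair are themselves Christoffel is a one-line induction up the tree (the components of the root $(0,1)$ are $0$ and $1$; passing from a node $(u,v)$ to a child $(u,uv)$ or $(uv,v)$ introduces only the word $uv$, which is Christoffel, while $u$ and $v$ are Christoffel by the inductive hypothesis). Applying Proposition~\ref{palindrome} three times then gives that $a$, $b$ and $a10b$ are all palindromes.

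The identity is now immediate. Reversing the word $a10b$ symbol by symbol gives $\widetilde{a10b}=\tilde b\,0\,1\,\tilde a$, and since $a$ and $b$ are palindromes this equals $b01a$. On the other hand $a10b$ is a palindrome, so $a10b=\widetilde{a10b}=b01a$, which is exactly the claim.

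If whatever definition of ``standard pair'' is in force makes the Christoffel-word property of the individual components awkward to invoke, there is a self-contained fallback by induction on $|a|+|b|$. Every qualifying pair $(0a1,0b1)$ has a unique parent in the Christoffel tree. If that parent is again of the form $(0a'1,0b'1)$, then it is either $(0a1,0b'1)$ with child $(u,uv)$, forcing $b=a10b'$, or $(0a'1,0b1)$ with child $(uv,v)$, forcing $a=a'10b$; in the first case $a10b=a10a10b'$ and $b01a=a10b'01a$ agree after deleting the common prefix $a10$ and using the inductive hypothesis $a10b'=b'01a$, and the second case is symmetric, deleting the common suffix $10b$. Otherwise the parent lies on the left spine $(0,0^k1)$ or the right spine $(01^k,1)$; then one finds $a=0b$ with $b\in 0^*$ (so $0b=b0$), respectively $b=a1$ with $a\in 1^*$ (so $a1=1a$), and in each case $a10b=b01a$ follows by a direct cancellation.

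The only real obstacle is sorting out which exact notion of ``standard pair'' is meant and, with it, the need to know that both words of the pair are Christoffel words; once that is pinned down, either route is a couple of lines. I would present the palindrome argument as the main proof, since after that observation the conclusion is essentially a single application of Proposition~\ref{palindrome}.
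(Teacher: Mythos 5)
Your main argument is essentially identical to the paper's proof: both observe that $0a1$, $0b1$, and the concatenation $0(a10b)1$ are Christoffel words, apply Proposition~\ref{palindrome} to conclude $a$, $b$, and $a10b$ are palindromes, and then read off $a10b=\widetilde{a10b}=\tilde b\,01\,\tilde a=b01a$. The inductive fallback you sketch is sound but unnecessary given the palindrome route, which is the one the paper takes.
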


\begin{proof}
As $(0a1,0b1)$ is a standard pair, $0a10b1 =: 0w1$ is a Christoffel word.
As $0a1,0b1,0w1$ are Christoffel words, $a,b,w$ are palindromes.
Thus $a10b = w = \tilde w = \tilde b 01 \tilde a = b01a.$
\end{proof}

If $(0a1,0b1)$ is a standard pair, then the interval $X_{0b1}$ is immediately
to the left of $X_{0a1(0b1)^\omega}$.
Since the words $0b1$ and $0a1 (0b1)^\omega$ can differ within the first few letters, continuity of $\lambda(x)$ at $x = \sup X_{0b1}$ is not obvious.
Similarly, $X_{(0a1)^\omega 0b1}$ is immediately to the left of $X_{0a1}$.
However, the factors $1 - \beta^{\abs{(0a1)^\omega 0b1}}$
and $1 - \beta^{\abs{0a1}}$ appearing in the definitions of the corresponding Whittle indices
are different for $\abs{a} < \infty$.
Thus continuity of $\lambda(x)$ at $x = \sup X_{0a1}$ is not obvious.
The next two Lemmas address these questions.

\begin{lemma}
Suppose $(0a1,0b1)$ is a standard pair and let $x = \phi_{10b}(x)$. Then
\begin{align*}
\lambda({0b1} ,x)  = \lambda({0a1(0b1)^\omega},x).
\end{align*}
\label{lemma:leftSide}
\end{lemma}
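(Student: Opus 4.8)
\emph{Proof plan.} The plan is to evaluate both $\lambda(0b1,x)$ and $\lambda(0a1(0b1)^\omega,x)$ straight from the definition of $\lambda(\cdot,x)$, reduce each one using a pair of word identities coming from Lemma~\ref{lemma:a10b} together with the hypothesis $x=\phi_{10b}(x)$, and observe that the two reductions coincide. Note that $x=\phi_{10b}(x)$ is the right endpoint of $X_{0b1}$, so this is exactly the continuity of $\lambda$ at the junction where the threshold word switches from $0b1$ to $0a1(0b1)^\omega$. First comes the infinite-word bookkeeping: reading $0a1(0b1)^\omega$ as ``$0w1$'' with $w=a1(0b1)^\omega$, we have $\abs{0w1}=\infty$, hence $1-\beta^{\abs{0w1}}=1$ for $\beta\in(0,1)$, and $(01w)^\omega=01w$, $(10w)^\omega=10w$ since these are already infinite. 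Thus
\begin{align*}
\lambda(0a1(0b1)^\omega,x)&=\frac{1}{1-\beta}\left(S(01a1(0b1)^\omega,x)-S(10a1(0b1)^\omega,x)\right),\\
\lambda(0b1,x)&=\frac{1-\beta^m}{1-\beta}\left(S((01b)^\omega,x)-S((10b)^\omega,x)\right),
\end{align*}
with $m:=\abs{0b1}=\abs{b}+2$. All the infinite sums converge, since the orbits under $(01b)^\omega$, under $(10b)^\omega$, and under $10b(01b)^\omega$ are monotone orbits of non-expansive maps approaching a finite fixed point ($y_{01b}$ or $y_{10b}$, which exist by Proposition~\ref{prop:ypi}), hence stay bounded.

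The crux is the following two identities of right-infinite words, both consequences of $a10b=b01a$ (Lemma~\ref{lemma:a10b}). Put $Z:=01a1(0b1)^\omega$; unrolling one period, $Z=01a1\cdot 0b1\cdot(0b1)^\omega=01(a10b)1(0b1)^\omega=01(b01a)1(0b1)^\omega=01b\cdot 01a1(0b1)^\omega=01b\cdot Z$, so by uniqueness of the fixed point of the map ``prepend $01b$'' on right-infinite words, $Z=(01b)^\omega$. Similarly $10a1(0b1)^\omega=10a1\cdot 0b1\cdot(0b1)^\omega=10(a10b)1(0b1)^\omega=10(b01a)1(0b1)^\omega=10b\cdot 01a1(0b1)^\omega=10b\cdot Z=10b\,(01b)^\omega$. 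In words: the ``$0$-first'' orbit word of $0a1(0b1)^\omega$ is literally $(01b)^\omega$, identical to that of $0b1$, while its ``$1$-first'' orbit word is $10b$ followed by $(01b)^\omega$ instead of $10b$ followed by $(10b)^\omega$.

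It remains to do the $S$-algebra using $x=\phi_{10b}(x)$. Applying~\eqref{eq:SaOmega} to the word $10b$ at its fixed point $x$ gives $S((10b)^\omega,x)=S(10b,x)/(1-\beta^m)$, and applying~\eqref{eq:Sab} with $\phi_{10b}(x)=x$ gives $S(10b(01b)^\omega,x)=S(10b,x)+\beta^m S((01b)^\omega,x)$. Substituting the two word identities into $\lambda(0a1(0b1)^\omega,x)$ and the two $S$-identities into both displayed expressions, each of $\lambda(0b1,x)$ and $\lambda(0a1(0b1)^\omega,x)$ collapses to
\begin{align*}
\frac{1}{1-\beta}\left((1-\beta^m)\,S((01b)^\omega,x)-S(10b,x)\right),
\end{align*}
which establishes the lemma.

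The main obstacle I anticipate is not any single computation but handling the infinite word $0a1(0b1)^\omega$ cleanly: interpreting it as $0w1$ with $w=a1(0b1)^\omega$ so that the prefactor $1-\beta^{\abs{0w1}}$ degenerates to $1$; justifying the ``prepend $01b$'' fixed-point argument that turns $Z$ into $(01b)^\omega$; and confirming convergence of the infinite $S$-sums so that~\eqref{eq:Sab} may legitimately be split through an infinite tail. Once $a10b=b01a$ is in hand via Lemma~\ref{lemma:a10b}, everything else is routine bookkeeping.
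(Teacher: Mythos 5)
Your proof is correct and follows essentially the same route as the paper: both exploit $a10b=b01a$ (Lemma~\ref{lemma:a10b}) together with the splitting identities~\eqref{eq:Sab} and~\eqref{eq:SaOmega} at the fixed point $x=\phi_{10b}(x)$, and both reduce to the observation that $01a1(0b1)^\omega=(01b)^\omega$. The only difference is organizational — you establish both right-infinite word identities up front and then do the $S$-algebra, while the paper interleaves the word rewriting with the $S$-decomposition (passing through $S(10b01a(10b)^\omega,x)$ and invoking the identity $01a(10b)^\omega=(01b)^\omega$ at the end).
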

\begin{proof}
The right-hand side $\lambda({0a1(0b1)^\omega},x)$ involves the sum
\begin{align}
S(10a1(0b1)^\omega, x) 
&= S(10b01a(10b)^\omega, x) 
& \text{by Lemma~\ref{lemma:a10b}} \nonumber \\
&= S(10b,x) + \beta^{\abs{10b}} S(01a(10b)^\omega, \phi_{10b}(x)) &\text{by~\ref{eq:Sab}} \nonumber \\
&= S(10b,x) + \beta^{\abs{10b}} S(01a(10b)^\omega, x)  
& \text{as $x = \phi_{10b}(x)$} \nonumber \\
&= (1-\beta^{\abs{10b}}) S((10b)^\omega, x) + \beta^{\abs{10b}} S(01a(10b)^\omega, x)  & \text{by~\ref{eq:SaOmega}} 
\label{eq:Slast} .
\end{align}

Now we note that repeated application of Lemma~\ref{lemma:a10b} gives
\begin{align}
01a(10b)^\omega = 01 a10b (10b)^\omega = 01b \, 01a (10b)^\omega =(01b)^\omega 01 a .
\label{eq:a10bRepeated}
\end{align}

Thus
\begin{align*}
\lambda({0a1(0b1)^\omega},x) &= \frac{1 - \beta^{\abs{0a1(0b1)^\omega}}}{1-\beta}
\left( S((01a1(0b1)^\omega)^\omega,x) - S((10a1(0b1)^\omega)^\omega,x) \right) \\
&= \frac{S(01a1(0b1)^\omega,x) - S(10a1(0b1)^\omega,x)}{1-\beta} &\text{by~\ref{eq:SaInf}} \\
&= \frac{1 - \beta^{\abs{10b}}}{1-\beta} \left( S(01a(10b)^\omega,x) -  S((10b)^\omega, x) \right) & \text{by~\ref{eq:Slast}}\\
&= \frac{1 - \beta^{\abs{10b}}}{1-\beta} \left( S((01b)^\omega,x) -  S((10b)^\omega, x) \right) & \text{by~\ref{eq:a10bRepeated}} \\
&= \lambda({0b1},x) .
\end{align*}
This completes the proof.
\end{proof}

\begin{lemma}
Suppose $(0a1,0b1)$ is a standard pair and let $x = \phi_{01a}(x)$. Then
\begin{align*}
\lambda({(0a1)^\omega 0b1},x) = \lambda({0a1},x) .
\end{align*}
\label{lemma:rightSide}
\end{lemma}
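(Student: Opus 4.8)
The plan is to mirror the proof of Lemma~\ref{lemma:leftSide}, except that the computation is genuinely shorter here because the infinite word $(0a1)^\omega 0b1$ is built entirely out of the short word $0a1$ (its trailing $0b1$ is never reached and so contributes nothing). First I would apply the definition of $\lambda(0w1,\cdot)$ with $0w1 = (0a1)^\omega 0b1$. Since this word has infinite length the prefactor $1-\beta^{\abs{0w1}}$ equals $1$, and peeling leading blocks and discarding the unreachable suffix $0b1$ via~(\ref{eq:SaInf}), together with the elementary word identities $01a1(0a1)^\omega = 01a(10a)^\omega$ and $10a1(0a1)^\omega = (10a)^\omega$, gives
\begin{align*}
\lambda((0a1)^\omega 0b1, x) = \frac{1}{1-\beta}\left( S(01a(10a)^\omega, x) - S((10a)^\omega, x)\right).
\end{align*}

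Next I would invoke the hypothesis $x = \phi_{01a}(x)$. Splitting the first sum with~(\ref{eq:Sab}) and using $\phi_{01a}(x)=x$ yields $S(01a(10a)^\omega,x) = S(01a,x) + \beta^{\abs{01a}} S((10a)^\omega,x)$, hence
\begin{align*}
\lambda((0a1)^\omega 0b1, x) = \frac{1}{1-\beta}\left( S(01a,x) - (1-\beta^{\abs{01a}}) S((10a)^\omega,x)\right).
\end{align*}
Because $x$ is the fixed point of $\phi_{01a}$, identity~(\ref{eq:SaOmega}) gives $S(01a,x) = (1-\beta^{\abs{01a}})S((01a)^\omega,x)$; substituting this, and recalling that $\abs{01a}=\abs{0a1}$ and the definition of $\lambda(0a1,x)$, gives
\begin{align*}
\lambda((0a1)^\omega 0b1, x) = \frac{1-\beta^{\abs{0a1}}}{1-\beta}\left(S((01a)^\omega,x) - S((10a)^\omega,x)\right) = \lambda(0a1,x),
\end{align*}
which is the claim.

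The only step requiring care is the symbolic bookkeeping for the infinite word: one must verify that $(01w)^\omega$ and $(10w)^\omega$ collapse to $01a(10a)^\omega$ and $(10a)^\omega$ respectively and that the $0b1$ tail drops out, which is exactly what~(\ref{eq:SaInf}) is for. In contrast to Lemma~\ref{lemma:leftSide}, no rotation identity such as Lemma~\ref{lemma:a10b} is needed here, precisely because the infinite word consists of iterates of the same word $0a1$ whose index appears on the right-hand side, and because the fixed-point hypothesis is attached to the matching prefix block $01a$; this asymmetry is what makes the ``right side'' case the easier of the two.
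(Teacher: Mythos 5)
Your proof is correct and follows essentially the same route as the paper's: drop the prefactor to $1/(1-\beta)$ for the infinite word, discard the unreachable $0b1$ tail via~(\ref{eq:SaInf}), split $S(01a(10a)^\omega,x)$ with~(\ref{eq:Sab}) and the fixed-point hypothesis, then apply~(\ref{eq:SaOmega}) to $S(01a,x)$. Your closing observation—that unlike Lemma~\ref{lemma:leftSide} no invocation of Lemma~\ref{lemma:a10b} is needed here because the fixed-point hypothesis already matches the block $01a$—is an accurate and useful remark about the asymmetry between the two lemmas.
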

\begin{proof}
The left-hand side $\lambda({(0a1)^\omega 0b1},x)$ involves the sum
\begin{align}
S(01(a10)^\omega 0b1,x) &= S(01(a10)^\omega,x) 
&\text{by~\ref{eq:SaInf}} \nonumber \\
&= S(01a,x) + \beta^{\abs{01a}} S((10a)^\omega,\phi_{01a}(x)) 
&\text{by~\ref{eq:Sab}} \nonumber \\
&=S(01a,x) + \beta^{\abs{01a}} S((10a)^\omega,x) 
&\text{as $x = \phi_{01a}(x)$} \nonumber \\
&= (1-\beta^{\abs{01a}}) S((01a)^\omega,x) + \beta^{\abs{01a}} S((10a)^\omega,x) 
&\text{by~\ref{eq:SaOmega}} .
\label{eq:lastSa01}
\end{align}

Thus
\begin{align*}
\lambda({(0a1)^\omega 0b1},x) 
&= \frac{1-\beta^{\abs{(0a1)^\omega 0b1}}}{1-\beta} 
\left( S((01(a10)^\omega 0b1)^\omega,x) - S((10(a10)^\omega 0b1)^\omega,x) \right) \\
&= \frac{1}{1-\beta} 
\left( S(01(a10)^\omega 0b1,x) - S((10a)^\omega ,x) \right) 
&\text{by~\ref{eq:SaInf}}\\
&= \frac{1-\beta^{\abs{01a}}}{1-\beta} (S((01a)^\omega,x) - S((10a)^\omega,x) ) & \text{by~\ref{eq:lastSa01}}\\
&= \lambda({0a1},x) .
\end{align*}
This completes the proof.
\end{proof}

To demonstrate continuity at other points, we will need to rely on the fact that
nearby words often have a long common prefix as shown by the following two Lemmas.

\begin{lemma}
Suppose $(0a1, 0b1)$ is a subsequence of $t_k$ for some $k\ge 1$.
Then $0b01a$ is a prefix of both $(0a1)^\omega$ and $0b(01b)^\omega$.
\label{lemma:longPrefix}
\end{lemma}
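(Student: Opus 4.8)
The plan is to derive both prefix claims from the single identity $a10b=b01a$ furnished by Lemma~\ref{lemma:a10b}, using nothing beyond elementary facts about periodic words. First I would note that since $(0a1,0b1)$ occurs as a pair of consecutive entries of $t_k=C^{(k)}((0,1))$ and the operation $C$ reproduces exactly the branching rule of the Christoffel tree, $(0a1,0b1)$ is a standard pair; hence Lemma~\ref{lemma:a10b} applies and produces the word $W:=a10b=b01a$.

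Next I would strip common prefixes so that each of the two claims becomes the assertion that $W$ is a prefix of a suitable periodic word. Writing $0b01a=0W$ and using the trivial regrouping $(0a1)^\omega=0\cdot a1(0a1)^\omega=0\,(a10)^\omega$, the claim that ``$0b01a$ is a prefix of $(0a1)^\omega$'' is equivalent to ``$W$ is a prefix of $(a10)^\omega$''. Similarly $0b(01b)^\omega=0\,(b01)^\omega$, so the claim that ``$0b01a$ is a prefix of $0b(01b)^\omega$'' is equivalent to ``$W$ is a prefix of $(b01)^\omega$''.

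Finally I would invoke the standard fact that a finite word with period $p\le\abs{W}$ is a prefix of $(W_{1:p})^\omega$. From $W=a10b=b01a$ the word $b$ is a border of $W$ (a prefix of $b01a$ and a suffix of $a10b$), so $W$ has period $\abs{W}-\abs{b}=\abs{a10}$; since the first $\abs{a10}$ letters of $W$ are $a10$, this gives $W$ as a prefix of $(a10)^\omega$. Symmetrically $a$ is a border of $W$, giving period $\abs{b01}$ and $W$ as a prefix of $(b01)^\omega$. Combined with the reductions above, this proves the lemma; the degenerate cases $a=\epsilon$ or $b=\epsilon$ require no special treatment, as the border and periodicity arguments hold verbatim. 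The only points needing care are the justification that consecutive entries of $t_k$ form standard pairs and the routine bookkeeping in the regrouping identities for infinite words; no genuine obstacle arises once $a10b=b01a$ is in hand.
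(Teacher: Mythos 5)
Your proof is correct but takes a genuinely different route from the paper's. The paper proceeds by induction on the level $k$ of the sequence $t_k$: it defines the two statements $A(a,b):(a10)^\omega=b\cdots$ and $B(a,b):(b01)^\omega=a\cdots$, verifies them at $t_2$, and propagates them down the tree using $c=a10b=b01a$ to pass from $(a,b)$ to both $(a,c)$ and $(c,b)$. You instead deduce everything directly from the single word identity $W:=a10b=b01a$ via a standard combinatorics-on-words fact: since $b$ is simultaneously a prefix (from $W=b01a$) and a suffix (from $W=a10b$) of $W$, it is a border, so $W$ has period $\abs{W}-\abs{b}=\abs{a10}$ and is therefore a prefix of $(a10)^\omega$; the symmetric argument with the border $a$ gives period $\abs{b01}$ and $W$ a prefix of $(b01)^\omega$. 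After the elementary regroupings $(0a1)^\omega=0(a10)^\omega$ and $0b(01b)^\omega=0(b01)^\omega$, both assertions of the lemma follow. Both proofs rest on Lemma~\ref{lemma:a10b}, but your argument replaces the paper's tree induction with a non-inductive periodicity argument, which is shorter and arguably more transparent about \emph{why} the identity $a10b=b01a$ forces a long common prefix. The one point you correctly flag as requiring care — that consecutive entries of $t_k$ are standard pairs, so Lemma~\ref{lemma:a10b} applies — is also used silently by the paper, and is immediate from the fact that $C$ reproduces the Christoffel-tree branching rule $(u,v)\mapsto(u,uv),(uv,v)$. Your handling of the degenerate cases $a=\epsilon$ or $b=\epsilon$ is also sound: the empty border gives the trivial period $\abs{W}$ and the argument goes through verbatim.
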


\begin{proof}
Let $a = b\cdots$ indicate that $b$ is a prefix of word $a$ and 
consider the statements 
\begin{align*}
A(a,b) : (a10)^\omega = b\cdots \qquad \text{and} \qquad B(a,b) : (b01)^\omega = a\cdots.
\end{align*}
It suffices to show that $A(a,b)$ and $B(a,b)$ are true for any adjacent words $0a1,0b1$ in $t_k$ for 
$k\ge 0$. 
This is because 
\begin{align*}
A(a,b) &\Rightarrow (0a1)^\omega = 0a10(a10)^\omega = 0a10b\cdots = 0b01a\cdots
\intertext{
where the last equality follows from Lemma~\ref{lemma:a10b} and }
B(a,b) &\Rightarrow 0b(01b)^\omega = 0b01(b01)^\omega = 0b01a \cdots
\end{align*}
which are the claims of the Lemma.

We shall use induction. Take $t_2 = (0,001,01,011,01)$ as the base case. We must show that
$A(0,\epsilon), B(0,\epsilon), A(\epsilon, 1), B(\epsilon,1)$ are true. 
However these statements are respectively that $(001)^\omega = \epsilon \cdots, (01)^\omega = 0\cdots, (10)^\omega = 1\cdots, (101)^\omega = \epsilon \cdots$ and are all true.

Otherwise, say $A(a,b), B(a,b)$ are true for any adjacency $0a1,0b1$ in $t_k$.
Let $0a10b1 = 0c1$ so  
\begin{align*}
c = a10b = b01a
\end{align*}
using Lemma~\ref{lemma:a10b} again.
Then the statements $A(a,c),B(a,c),A(c,b),B(c,b)$ are all true as
\begin{align*}
(a10)^\omega &= a10(a10)^\omega = a10b\cdots = c\cdots & \text{by $A(a,b)$ and as $c=a10b$} \\ 
(c01)^\omega &= c\cdots = a\cdots & \text{as $c=a10b$} \\
(c10)^\omega &= c\cdots = b\cdots & \text{as $c=b01a$} \\
(b01)^\omega &= b01(b01)^\omega = b01a\cdots = c\cdots &\text{by $B(a,b)$ and as $c=b01a$.}
\end{align*}
Thus $A(a,b),B(a,b)$ are true for all adjacencies $0a1,0b1$ in $t_{k+1}$.
This completes the proof.
\end{proof}

\begin{lemma}
Suppose $0a1,0b1$ are adjacent in $t_k$ and that $0c1$ lies strictly between them in $t_{k'}$ for some $0< k< k'$. Then $0c1 = 0b01a\cdots$.
\label{lemma:betweenWords}
\end{lemma}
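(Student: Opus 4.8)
The plan is to induct on $k'$, peeling off one application of the refinement operator $C$ at a time and using Lemma~\ref{lemma:longPrefix} to control the first few letters of the intermediate word. First I would set up the base case $k' = k+1$: here $0c1$ is the unique word inserted strictly between $0a1$ and $0b1$ in passing from $t_k$ to $t_{k+1}$, so $0c1 = 0a10b1$, i.e.\ $c = a10b$. By Lemma~\ref{lemma:a10b} (applicable since $(0a1,0b1)$ being adjacent in $t_k$ with $k\ge 1$ makes it a standard pair) we have $a10b = b01a$, hence $c = b01a$, which is stronger than the claimed $0c1 = 0b01a\cdots$. So the base case holds with equality rather than merely as a prefix.

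For the inductive step, suppose the claim holds for all pairs adjacent in some $t_j$ and all target levels up to $k'-1$, and let $0c1$ lie strictly between $0a1$ and $0b1$ in $t_{k'}$ with $k' \ge k+2$. Consider the level $t_{k+1}$, whose subsequence between $0a1$ and $0b1$ is exactly $(0a1,\,0a10b1,\,0b1)$; write $0d1 := 0a10b1$, so $d = a10b = b01a$. The word $0c1$ lies in $t_{k'}$ strictly between $0a1$ and $0b1$, hence strictly between $0a1$ and $0d1$ or strictly between $0d1$ and $0b1$ (or equals $0d1$, in which case we are done by the base-case computation). In the first sub-case, $(0a1,0d1)$ is adjacent in $t_{k+1}$ and $0c1$ lies strictly between them in $t_{k'}$ with $k+1 < k'$, so the induction hypothesis gives $0c1 = 0d01a\cdots = 0a10b01a\cdots$; since $0b01a$ is a prefix of $0a10b\cdots$ by Lemma~\ref{lemma:longPrefix} (statement $A(a,b)$, giving $(a10)^\omega = b\cdots$, so $a10b\cdots$ begins $a10\cdot b\cdots$, and then another application of Lemma~\ref{lemma:a10b} rewrites the head), one checks $0a10b01a\cdots$ begins with $0b01a$. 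In the second sub-case, $(0d1,0b1)$ is adjacent in $t_{k+1}$ and the induction hypothesis yields $0c1 = 0b01d\cdots = 0b01a10b\cdots$, which trivially begins with $0b01a$.

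The main obstacle is the bookkeeping in the inductive step: I have to verify that in each sub-case the string produced by the induction hypothesis really does start with $0b01a$, and this requires combining Lemma~\ref{lemma:longPrefix} (which tells us $(a10)^\omega$ starts with $b$ and $(b01)^\omega$ starts with $a$) with repeated use of the palindrome identity $a10b = b01a$ of Lemma~\ref{lemma:a10b}. Concretely, in the first sub-case the delicate point is that $0d01a = 0(a10b)01a$ and we must re-associate this as $0b01(\cdots)$; the identity $a10b = b01a$ applied to the initial block $a10b$ inside $d01a$ does exactly this, after which the tail is absorbed into the ``$\cdots$''. Everything else — the structure of $t_{k+1}$ between $0a1$ and $0b1$, the fact that adjacency in $t_j$ propagates to a standard pair — is routine from the definition of $C$ and the remark~(\ref{remark:uv}) on lengths.
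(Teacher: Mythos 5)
Your inductive approach is correct in spirit but differs from the paper's, which is a direct one-step argument. The paper observes that the interval of $t_{k'}$ strictly between $0a1$ and $0b1$ is built from the ``letters'' $0a1$ and $0b1$ exactly as $t_{k'-k}$ is built from $0,1$; hence any strictly-between word has prefix $(0a1)^q 0b1$ for some $q\ge 1$. A single round of repeated applications of $a10b=b01a$ then commutes $(0a1)^{q-1}$ past $0b$ to give $0c1 = (0a1)^{q-1}0b01a1\cdots = 0b(01a)^{q}1\cdots = 0b01a\cdots$. This avoids any induction on $k'$ or case analysis on the position of $0c1$ relative to $0d1:=0a10b1$. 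Your route instead peels off one level of $C$ at a time and splits into whether $0c1$ lies left of, equals, or lies right of $0d1$; the identity $a10b=b01a$ still does all the heavy lifting at the end, so the proofs are of comparable difficulty but organised differently.

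Two small issues in your write-up. First, the induction variable should be $k'-k$, not $k'$: as stated (``target levels up to $k'-1$''), the induction hypothesis does not literally cover the recursive calls, which have target level $k'$ but source level $k+1$ — the quantity that decreases is the gap $k'-k$, and you should say so explicitly. Second, the appeal to Lemma~\ref{lemma:longPrefix} in the first sub-case is a detour: once the induction hypothesis yields $0c1 = 0d01a\cdots = 0\,(a10b)\,01a\cdots$, applying $a10b = b01a$ to the leading block immediately gives $0b01a\,01a\cdots$; nothing about $(a10)^\omega$ or $(b01)^\omega$ is needed. You do in fact notice this in your final paragraph, so the extraneous reference should simply be dropped. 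With those two adjustments the argument is sound.
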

\begin{proof}
The interval of $t_{k'}$ between $0a1,0b1$ is constructed from $0a1,0b1$ in exactly the same way 
as $t_{k' - k}$ was constructed from $0,1$. 
Thus $0c1 = (0a1)^q0b1\cdots$ for some positive integer $q$.
Now recall that $0b01a = 0a10b$ by Lemma~\ref{lemma:a10b}.
Thus $0c1 = (0a1)^{q-1} 0a10b1 \cdots = (0a1)^{q-1} 0b01a1 \cdots = 0b(01a)^q 1\cdots  = 0b01a \cdots$
as claimed.
\end{proof}

Although the existence of a long common prefix for nearby words suggests continuity, to prove anything we must bound the residual after removing the long common prefix. The following Lemma is one way to achieve this.
\begin{lemma}
Suppose $x\ge y\ge 0$, let $0w1$ be the $x$-threshold word 
and let $(01w)^\omega = s u, (10w)^\omega = s' u'$ where $\abs{s} = \abs{s'}$.
Then 
$\abs{S(u,\phi_s(y)) - S(u',\phi_{s'}(y))} \le \frac{x+1}{1-\beta} .$
\label{lemma:prefixS}
\end{lemma}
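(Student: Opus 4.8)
The plan is to bound the two infinite sums termwise, showing that every orbit value occurring in them lies in the fixed interval $[0,x+1]$, and then to evaluate a geometric series.

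\textbf{Step 1: the $x$-threshold orbits are confined to $[0,x+1]$.} Since $0w1$ is the $x$-threshold word, Proposition~\ref{prop:zrange} gives $x\in[y_{01w},y_{10w}]$, and since both $01w$ and $10w$ contain a $0$ and a $1$, Proposition~\ref{prop:ypi} gives $y_1<y_{01w}$ and $y_{10w}<y_0$, so in particular $x>y_1$. By the proof of Theorem~\ref{main}, the $x$-threshold orbits $x_t^{0*}(x),x_t^{1*}(x)$ of~(\ref{orbits}) equal $\phi_{((01w)^\omega)_{1:t}}(x)$ and $\phi_{((10w)^\omega)_{1:t}}(x)$ respectively. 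I would show $0\le x_t^{c*}(x)\le x+1$ for all $t$ and all $c\in\{0,1\}$ by induction on $t$, the base case $x_0^{c*}(x)=x$ being clear and the lower bound holding because $\phi_0,\phi_1$ map $\R_+$ into $\R_+$. For the step, put $z:=x_t^{c*}(x)$, so $0\le z\le x+1$. The recursions in~(\ref{orbits}) imply that if the letter applied next is $0$ then $z\le x$, so $x_{t+1}^{c*}(x)=\phi_0(z)\le z+1\le x+1$, using $\phi_0(z)=\frac{z+1}{az+a+1}\le z+1$ for $z\ge0$; whereas if the letter applied next is $1$ then $z\ge x>y_1$, so $x_{t+1}^{c*}(x)=\phi_1(z)<z\le x+1$ by Proposition~\ref{prop:incdec}.

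\textbf{Step 2: transfer to the shifted initial condition.} Each $\phi_v$ is increasing by Proposition~\ref{prop:monexist} and $y\le x$, so, using $su=(01w)^\omega$, for every $n$
\begin{align*}
0 \le \phi_{u_{1:n}}(\phi_s(y)) = \phi_{su_{1:n}}(y) \le \phi_{((01w)^\omega)_{1:\abs s+n}}(x) = x_{\abs s+n}^{0*}(x) \le x+1 ,
\end{align*}
and symmetrically $0\le\phi_{u'_{1:n}}(\phi_{s'}(y))\le x+1$ from $s'u'=(10w)^\omega$. Hence $\phi_{u_{1:n}}(\phi_s(y))-\phi_{u'_{1:n}}(\phi_{s'}(y))$ is a difference of two numbers in $[0,x+1]$, so has absolute value at most $x+1$.

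\textbf{Step 3: summation, and the main difficulty.} Expanding the definition of $S$ for the infinite words $u,u'$ and using $\beta\in(0,1)$,
\begin{align*}
\abs{S(u,\phi_s(y))-S(u',\phi_{s'}(y))}
&= \abs{\sum_{n=0}^\infty \beta^n\bigl(\phi_{u_{1:n}}(\phi_s(y))-\phi_{u'_{1:n}}(\phi_{s'}(y))\bigr)} \\
&\le \sum_{n=0}^\infty \beta^n(x+1) = \frac{x+1}{1-\beta} ,
\end{align*}
as required. The only delicate point is Step~1: one must verify that the threshold dynamics never drive an orbit above $x+1$. The subtlety is that $\phi_0$ can enlarge its argument, but under the threshold rule $\phi_0$ is applied only while the orbit is still at or below $x$, so it cannot overshoot $x+1$. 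Everything else is monotonicity of the $\phi_v$ and a geometric sum, so I expect no real obstacle.
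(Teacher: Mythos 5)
Your proof is correct and follows essentially the same strategy as the paper's: bound the $x$-threshold orbits by $x+1$, use monotonicity of the $\phi_v$ to push this bound onto the $y$-started orbits, note non-negativity, and sum a geometric series. The only difference is that your Step~1 supplies an explicit induction for the orbit bound that the paper asserts without elaboration (the paper notes that the orbits never exceed $\phi_0(x) \le x+1$, relying on the same threshold-rule observation you spell out), so your write-up is slightly more self-contained but not a different route.
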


\begin{proof}
The highest point on the orbits
$(\phi_{((01w)^\omega)_{1:k}}(x) : k \ge 0)$ 
and $(\phi_{((10w)^\omega)_{1:k}}(x) : k \ge 0)$ 
is $x+1$ since $0w1$ is the $x$-threshold word.
The terms $a_k, b_k$ of the discounted sums 
\begin{align*}
\text{$S(u,\phi_s(y)) =: \sum_{k=0}^\infty \beta^k a_k$ and $S(u',\phi_{s'}(y)) =: \sum_{k=0}^\infty \beta^k b_k$}
\end{align*}
are from the orbits 
$(\phi_{((01w)^\omega)_{1:k}}(y) : k \ge 0)$ 
and $(\phi_{((10w)^\omega)_{1:k}}(y) : k \ge 0)$
and $\phi_{u''}(x) \ge \phi_{u''}(y)$ for any word $u''$ as $x\ge y$.
Therefore terms $a_k, b_k$, are also no higher than $\phi_0(x) \le x+1$.
Furthermore, terms $a_k, b_k$ are non-negative, so that $\abs{a_k-b_k} \le x+1$.
Thus
$\abs{S(u,\phi_s(y)) - S(u',\phi_{s'}(y))} \le \sum_{k=0}^\infty \beta^k \abs{a_k - b_k} \le \sum_{k=0}^\infty \beta^k (x+1) = \frac{x+1}{1-\beta} .$
\end{proof}

Although it is clear that $\lambda(\pi,x)$ is continuous, a bound on its slope is helpful.
\begin{lemma}
Suppose $x\ge 0$ and that $0w1$ is a valid word. Then $\abs{\lambda'(0w1,x)} \le \frac{1}{(1-\beta)^2}$.
\label{lemma:lambdaPrime}
\end{lemma}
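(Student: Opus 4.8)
The plan is to differentiate the closed form of $\lambda(0w1,x)$ term by term and bound the resulting series. Recall that
\[
\lambda(0w1,x) = \frac{1-\beta^{\abs{0w1}}}{1-\beta}\left( S((01w)^\omega,x) - S((10w)^\omega,x)\right),
\]
where $S((01w)^\omega,x) = \sum_{n=0}^\infty \beta^n \phi_{((01w)^\omega)_{1:n}}(x)$ and similarly for $(10w)^\omega$. Since each $\phi_u$ is a M\"obius transformation with $\det M(u) = 1$ (by the Remark in Section~4.2), we have $\frac{d}{dx}\phi_u(x) = [M(u)_{21}x + M(u)_{22}]^{-2}$, and because $M(u) \ge 0$ with $M(u)_{22}\ge 1$ for every word $u$ (as $M(0),M(1)\ge I$), this derivative lies in $[0,1]$. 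Hence each of the two series defining $\lambda$ can be differentiated term by term, and
\[
\lambda'(0w1,x) = \frac{1-\beta^{\abs{0w1}}}{1-\beta}\sum_{n=0}^\infty \beta^n\left( \frac{d}{dx}\phi_{((01w)^\omega)_{1:n}}(x) - \frac{d}{dx}\phi_{((10w)^\omega)_{1:n}}(x)\right).
\]

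Next I would bound this in absolute value. Each bracketed difference has absolute value at most $1$, since both derivatives lie in $[0,1]$; therefore
\[
\abs{\lambda'(0w1,x)} \le \frac{1-\beta^{\abs{0w1}}}{1-\beta}\sum_{n=0}^\infty \beta^n = \frac{1-\beta^{\abs{0w1}}}{1-\beta}\cdot\frac{1}{1-\beta} \le \frac{1}{(1-\beta)^2},
\]
using $1-\beta^{\abs{0w1}} \le 1$. The only points needing care are (i) justifying term-by-term differentiation, which follows from uniform convergence on compact subsets of $\R_+$ — the $n$th summand of the derivative series is bounded by $\beta^n$ independently of $x$, so the Weierstrass $M$-test applies — and (ii) checking that the orbit points $\phi_{((01w)^\omega)_{1:n}}(x)$ stay in $\R_+$ so that the M\"obius derivative formula is valid, which holds because $\phi_0,\phi_1$ map $\R_+$ into $\R_+$.

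The main obstacle, such as it is, is purely bookkeeping: confirming that the crude bound $\abs{\frac{d}{dx}\phi_u(x)}\le 1$ holds uniformly over all words $u$ and all $x\in\R_+$. This is where the non-expansiveness of $\phi_0,\phi_1$ from A2 does the work — $\phi_u$ is non-expansive for every word $u$ by Proposition~\ref{prop:monexist}, so $\frac{d}{dx}\phi_u(x)\le 1$, and it is non-negative since $\phi_u$ is increasing. With that in hand the bound $1/(1-\beta)^2$ drops out immediately, and no delicate estimation is required.
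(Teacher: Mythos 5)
Your proof is correct and takes essentially the same route as the paper: differentiate the series term by term, bound each difference of derivatives by $1$ using $\phi_u'(x)\in[0,1]$, and sum the geometric series together with $1-\beta^{\abs{0w1}}\le 1$. You supply slightly more detail than the paper (the Weierstrass $M$-test to justify term-by-term differentiation, and two independent justifications of $\phi_u'(x)\in[0,1]$ via the matrix form $\phi_u'(x)=(M(u)_{21}x+M(u)_{22})^{-2}$ and via non-expansiveness from Proposition~\ref{prop:monexist}), but the underlying argument is identical to the paper's, which simply cites $0\le\phi_1'(x)\le\phi_0'(x)\le1$.
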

\begin{proof}
The definition of $\lambda(0w1,x)$ gives
\begin{align*}
\abs{\lambda'(0w1,x)} \le \frac{1-\beta^{\abs{0w1}}}{1-\beta} \sum_{k=0}^\infty \beta^k \abs{\phi_{((01w)^\omega)_{1:k}}'(x) - \phi_{((10w)^\omega)_{1:k}}'(x)}
\le \frac{1}{1-\beta} \sum_{k=0}^\infty \beta^k = \frac{1}{(1-\beta)^2} 
\end{align*}
where the second inequality follows as $0 \le \beta^{\abs{0w1}} < 1$ and $0 \le \phi_u'(x) \le 1$ for any word $u$ since
$0\le \phi_1'(x) \le \phi_0'(x) \le 1$.
\end{proof}

We use one more result about $\phi_0, \phi_1$ of the main paper.
\begin{lemma} Suppose $\phi_0(x)$ and $\phi_1(x)$ are as in the main paper and $x\in\R_+$.
Then $\phi_{01}(x)<\phi_{10}(x)$.
\label{lem:phi0110}
\end{lemma}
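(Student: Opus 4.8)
The plan is to reduce the inequality $\phi_{01}(x) < \phi_{10}(x)$ to the sign of a single low-degree rational expression in $x$, using the M{\"o}bius/matrix correspondence already established in the paper. First I would write $\phi_{01} = \mu_{GF}$ and $\phi_{10} = \mu_{FG}$ in the notation of Section~4.2, where $F$ and $G$ are the matrices with $\det F = \det G = 1$ given there. Since $GF - FG = (b-a)K$ with $K = \begin{pmatrix}-1&-1\\0&1\end{pmatrix}$ and $b > a$, the two matrices differ by an explicit rank-structured perturbation, and I would compute $GF$ and $FG$ entrywise (each entry is a quadratic in $a,b$). The difference $\phi_{10}(x) - \phi_{01}(x) = \mu_{FG}(x) - \mu_{GF}(x)$ is then a single fraction whose denominator is the product of the two (positive, for $x \ge 0$) denominators $[GF]_{21}x + [GF]_{22}$ and $[FG]_{21}x + [FG]_{22}$, so the sign is governed entirely by the numerator.

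The key step is to show that numerator is strictly positive for all $x \in \R_+$. Using $\det(GF) = \det(FG) = 1$, the numerator simplifies: for two unimodular matrices $A, B$ one has $\mu_A(x) - \mu_B(x) = \dfrac{(A_{21}x+A_{22})(B_{21}x+B_{22})^{-1}\cdots}{}$ — more cleanly, $\mu_B(x) - \mu_A(x)$ has numerator equal to a bilinear expression in the entries of $B - A = (b-a)K$ and the vector $(x,1)^T$. Concretely I expect the numerator to reduce to $(b-a)$ times a polynomial that is manifestly positive for $x \ge 0$, $b > a \ge 0$ (for instance a sum of terms each with a positive coefficient, or something like $(b-a)(\text{positive quadratic in }x)$). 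I would just expand $v(x)^T(\text{stuff})v(x)$ with $v(x) = (x,1)^T$ and collect coefficients of $x^2, x^1, x^0$, checking each is $\ge 0$ with at least one strictly positive. Since $b > a$ strictly, the factor $b-a > 0$, giving the strict inequality.

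The main obstacle is purely the bookkeeping: making sure the denominators $[GF]_{21}x+[GF]_{22}$ and $[FG]_{21}x+[FG]_{22}$ are genuinely positive on $\R_+$ (this follows because $F, G \ge 0$ entrywise with positive $(2,2)$-entries, so $M(w) \ge 0$ with $[M(w)]_{22} > 0$ for any nonempty $w$, hence the orbits stay in $\R_+$ — this is already implicit in Proposition~\ref{major}'s proof), and then verifying the numerator's coefficients have the claimed signs without arithmetic slips. There is no conceptual difficulty: non-expansiveness (A2) already tells us $\phi_{01}$ and $\phi_{10}$ are both contractions with the same unique fixed point ordering issues aside, and one could alternatively argue via fixed points — $y_{01} = y_{10}$ would force $GF$ and $FG$ to share an eigenvector, contradicting $GF \ne FG$ — but pinning down the \emph{direction} of the inequality (that $\phi_{10}$, not $\phi_{01}$, is larger) still requires looking at one explicit evaluation, e.g.\ at $x = 0$ where $\phi_{10}(0) - \phi_{01}(0) = [FG]_{12}/[FG]_{22} - [GF]_{12}/[GF]_{22}$ simplifies to a positive multiple of $b-a$. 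So I would lead with the $x=0$ computation to fix the sign, then the general-$x$ numerator computation to finish.
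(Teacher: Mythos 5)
Your plan is correct and is essentially the paper's own proof: the paper simply computes $\phi_{10}(x)-\phi_{01}(x)$ as a single rational expression, observes that it factors as $(b-a)$ times a fraction whose numerator and denominator have manifestly positive coefficients in $a,b,x$, and concludes positivity from $b>a$ and $x\ge 0$. The M\"obius/matrix framing and the $GF-FG=(b-a)K$ observation you lead with are nice packaging but the substance (explicit expansion and sign-checking of coefficients) is identical.
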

\begin{proof}
The definitions of $\phi_0, \phi_1$ give
\begin{align*}
&\phi_{10}(x)-\phi_{01}(x)=\\
&\quad(b-a) \frac{(a b +b +a) x^2+(2 a b +3 b +3 a +2) x+a b+2 b+2 a+3}
{((a b +b +a) x+a b+b+2 a+1) ((a b +b +a )x+a b+2 b+a+1)} 
\end{align*}
which is positive as $b>a$ and $x\ge 0$.
\end{proof}

Our proof of continuity will rely on the standard $(\epsilon,\delta)$ definition
in which we will put $\delta = l_k$ where $l_k$ is defined in the following Lemma.

\begin{lemma}
For any $\epsilon > 0$ there is a $k<\infty$ such that $0 < l_k := \inf \{ \abs{X_\pi} : \pi \in t_k \} < \epsilon.$
\label{lemma:lk}
\end{lemma}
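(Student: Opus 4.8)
The plan is to show that the intervals $X_\pi$ for $\pi\in t_k$ partition $(0,\infty)$ (or at least $(y_1,y_0)$, with the two extreme intervals handled separately), and that every one of them has positive length, while their total number grows without bound; this will force the smallest length to go to zero. First I would recall from Proposition~\ref{prop:zrange} that for a valid word $0w1$, we have $X_{0w1}=[y_{01w},y_{10w}]$, and that $t_k$ enumerates valid words in left-to-right order of these intervals. Using Lemma~\ref{lem:phi0110} (which gives $\phi_{01}(x)<\phi_{10}(x)$ for the $\phi_0,\phi_1$ of the main paper) together with Proposition~\ref{prop:incdec}, I would argue that $y_{01w}<y_{10w}$ strictly for every valid word $0w1$, so each $\abs{X_\pi}>0$; hence $l_k>0$ for every finite $k$. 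The construction $t_{k+1}=C(t_k)$ inserts, between each adjacent pair $(0a1,0b1)$, the concatenation $0a10b1=0c1$, whose interval $X_{0c1}=[y_{01c},y_{10c}]$ lies strictly between $X_{0a1}$ and $X_{0b1}$ (again by the monotonicity propositions applied to the fixed-point orderings, exactly as in Propositions~\ref{prop:yorder}--\ref{prop:0n1}). So as $k$ increases, the family $\{X_\pi:\pi\in t_k\}$ consists of $2^k+1$ pairwise-disjoint nondegenerate intervals nested inside a bounded region.

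Next I would turn the disjointness and boundedness into a vanishing-length statement. All the intervals $X_\pi$ for $\pi\ne 0,1$ are contained in $(y_1,y_0)$ by Proposition~\ref{prop:ypi}, which is a bounded interval of total length $y_0-y_1<\infty$. Since $t_k$ contains $2^k-1$ words other than $0$ and $1$, and their intervals are pairwise disjoint subsets of $(y_1,y_0)$, we get
\begin{align*}
(2^k-1)\, l_k \;\le\; \sum_{\pi\in t_k,\ \pi\notin\{0,1\}} \abs{X_\pi} \;\le\; y_0-y_1,
\end{align*}
so $l_k\le (y_0-y_1)/(2^k-1)\to 0$. Therefore, given $\epsilon>0$, choosing $k$ large enough that $(y_0-y_1)/(2^k-1)<\epsilon$ yields $0<l_k<\epsilon$, which is exactly the claim. (One should also confirm the infimum defining $l_k$ is attained or at least positive; since $t_k$ is a finite sequence, $l_k=\min\{\abs{X_\pi}:\pi\in t_k\}$ is a minimum of finitely many positive numbers, hence positive.)

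The main obstacle I anticipate is not the counting argument but the bookkeeping needed to verify two structural facts cleanly: (i) that for every word $\pi$ appearing in $t_k$ the interval $X_\pi$ is genuinely nondegenerate, i.e. $y_{01w}<y_{10w}$, and (ii) that the intervals for distinct words in $t_k$ are pairwise disjoint with the same left-to-right order as in the sequence $t_k$. Both should follow by induction on $k$ from the $C$-construction together with the fixed-point monotonicity results (Propositions~\ref{prop:incdec}, \ref{prop:ypi}, \ref{prop:yorder}), using Lemma~\ref{lem:phi0110} to get the strict inequality $\phi_{01w}<\phi_{10w}$ that seeds the strictness; but carefully threading the induction hypothesis through the insertion of the concatenated word $0c1=0a10b1$ — showing $y_{10a}\le y_{01c}$ and $y_{10c}\le y_{01b}$, say — is where the real work lies. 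Once those two facts are in hand, the bound $(2^k-1)l_k\le y_0-y_1$ is immediate and the lemma follows.
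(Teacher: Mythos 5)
Your plan is correct, and the two anticipated obstacles are in fact easier than you expect, so it is worth noting how the paper dispatches them. For the vanishing part, your global counting bound $(2^k-1)\,l_k \le y_0 - y_1$ is clean and a slight variant of the paper's argument: the paper instead fixes one adjacent pair $(0a1,0b1)$ in $t_k$ and observes that the gap between their intervals contains the $2^i-1$ intervals of $t_{k+i}\setminus t_k$ inserted between them, giving $l_{k+i}\le (y_{01a}-y_{10b})/(2^i-1)$; both arguments rest on packing disjoint intervals into a bounded set and give the same conclusion. For the two structural facts you flag as the ``real work'', neither actually needs an induction through the $C$-construction. Fact (ii), pairwise disjointness and ordering, is free from Proposition~\ref{prop:zrange}: for each $x$ the $x$-threshold word is unique (it is the shortest word with a defining property), so two distinct valid words cannot have overlapping intervals. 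Fact (i), strict nondegeneracy $y_{01w}<y_{10w}$, is the only place anything has to be proved, and the paper does it in one step by contradiction: since $t_k$ is finite the infimum is a minimum, so $l_k=0$ would mean $y_{10w}=y_{01w}=:x$ for some $0w1\in t_k$; then $\phi_{10w}(x)=\phi_{01w}(x)$, and applying $\phi_w^{-1}$ (legitimate because $\phi_0,\phi_1$ are invertible on $\R_+$ and $\phi_{10w}=\phi_w\circ\phi_{10}$) yields $\phi_{10}(x)=\phi_{01}(x)$, contradicting Lemma~\ref{lem:phi0110}. You can equivalently prove it directly: at $x=y_{10w}$ one has $\phi_{01w}(x)=\phi_w(\phi_{01}(x))<\phi_w(\phi_{10}(x))=x$ by Lemma~\ref{lem:phi0110} and strict monotonicity of $\phi_w$, whence $x>y_{01w}$ by Proposition~\ref{prop:incdec}. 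Either way, no threading of an induction hypothesis through $0c1=0a10b1$ is required, so the lemma closes out exactly along the lines you sketched.
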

\begin{proof}
Say $0a1, 0b1$ are adjacent in $t_k$. 
Then by construction of $t_{k+i}$, 
the gap $(z_{10b}, z_{01a})$ contains $2^i - 1$ intervals corresponding to words of
$t_{k+i} \backslash t_k$.
Each of these intervals is at most $\frac{z_{01a}-z_{10b}}{2^i - 1}$ in length.
Thus $\lim_{k\rightarrow\infty} l_k = 0$.
This demonstrates the existence of a $k<\infty$ such that $l_k < \epsilon$.

To show that $l_k > 0$ for finite $k$, we shall demonstrate that assuming $l_k = 0$ leads to a contradiction. If $l_k = 0$ then there is some word $0w1\in t_k$ such that $z_{10w} = z_{01w} =: x$. Therefore $\phi_{10w}(x) = \phi_{01w}(x)$.
Now in $\R_+$, functions $\phi_{0}(x), \phi_1(x)$ have inverses, so $\phi_{w}^{-1}(x)$ is well-defined. 
Therefore 
\begin{align*}
\phi_{10}(x) =\phi_w^{-1} \circ \phi_{10w}(x) = \phi_w^{-1} \circ \phi_{01w}(x) = \phi_{01}(x)
\end{align*}
which contradicts~Lemma~\ref{lem:phi0110} as $x\ge 0$.
\end{proof}

\subsection{Proof of Continuity}
\begin{proof}
We wish to show that for any $\epsilon > 0$, there exists a $\delta > 0$
such that for any $\abs{x-y} < \delta$ we have $\Delta := \abs{\lambda(x) - \lambda(y)} < \epsilon$. Without loss of generality we assume that $x \ge y$.

Specifically, we shall put $\delta = l_k > 0$ where $l_k$ is as defined in Lemma~\ref{lemma:lk} 
and $k$ is any positive integer such that $\frac{l_k}{(1-\beta)^2} < \frac{\epsilon}{2}$
and such that 
$2 \frac{x+1}{(1-\beta)^2} \beta^{k+1} < \frac{\epsilon}{2}$.
The existence of such a $k$ is guaranteed by Lemma~\ref{lemma:lk}
and because $\beta \in (0,1)$.

Let $0a1,0b1$ be the $x$- and $y$-threshold words.
If these words are the same then
\begin{align*}
\Delta = \abs{\lambda(0a1,x) - \lambda(0a1,y)} \le \abs{y-x} \sup_{z\in [x,y]} \abs{\lambda'(0a1,z)} \le \frac{\abs{y-x}}{(1-\beta)^2} \le \frac{l_k}{(1-\beta)^2} < \frac{\epsilon}{2}
\end{align*}
where the second inequality follows from Lemma~\ref{lemma:lambdaPrime}, 
the third from $\abs{y-x} < \delta = l_k$ 
and the fourth from the definition of $k$.

Otherwise $0a1 \neq 0b1$. In this case, let $(0e1,0b1)$ be the standard pair for word $0b1$,
let $\underline{a} = \phi_{10a}(\underline{a})$ and $\bar b = \phi_{01b}(\bar b)$.
Noting that $y \le \bar b \le \underline{a} \le x$,
our strategy is to write 
\begin{align*}
\Delta &= \abs{\Delta_1 + \Delta_2 + \Delta_3 + \Delta_4 + \Delta_5 + \Delta_6} \\
\Delta_1 &:= \lambda(0b1,y) - \lambda(0b1,\bar b) \\
\Delta_2 &:= \lambda(0b1,\bar b) - \lambda(0e1(0b1)^\omega, \bar b) \\
\Delta_3 &:= \lambda(0e1(0b1)^\omega, \bar b) - \lambda((0a1)^\omega, \bar b) \\
\Delta_4 &:= \lambda((0a1)^\omega, \bar b) - \lambda((0a1)^\omega, \underline a) \\
\Delta_5 &:= \lambda((0a1)^\omega, \underline a) - \lambda(0a1, \underline a) \\
\Delta_6 &:= \lambda(0a1, \underline a) - \lambda(0a1, x) .
\end{align*}

Lemma~\ref{lemma:lambdaPrime} and the choice of $\delta$ give
\begin{align}
\abs{\Delta_1} + \abs{\Delta_4} + \abs{\Delta_6} \le \frac{\bar b - y + \underline a - \bar b + x - \underline a}{(1-\beta)^2} < \frac{l_k}{(1-\beta)^2} \le \frac{\epsilon}{2}
\label{delta:146}
\end{align}
while Lemmas~\ref{lemma:leftSide} and~\ref{lemma:rightSide} give
\begin{align}
\Delta_2 = \Delta_5 = 0.
\label{delta:25}
\end{align}

It remains to consider $\Delta_3$.
It follows from the definition of $l_k$, that for some adjacent words $0c1, 0d1$
in $t_k$:
either $0a1 = 0c1$ or $0a1$ is a word strictly between $0c1$ and $0d1$ in the sense of Lemma~\ref{lemma:betweenWords};
and that $0e1(0b1)^\omega$ is a word strictly between $0c1$ and $0d1$.
Thus by Lemma~\ref{lemma:betweenWords} we have $(0a1)^\omega = 0 p u$
and $0e1(0b1)^\omega = 0 p v$ where $p := d01c$ and $u,v$ are the appropriate suffixes. Therefore the definition of $\lambda(w,x)$ gives
\begin{align}
\abs{\Delta_3} 
&= \abs{\lambda((0a1)^\omega,\bar b) - \lambda(0d1(0b1)^\omega, \bar b)}  \nonumber \\
&= \frac{1}{1-\beta} \begin{vmatrix} S(01p,\bar b) + \beta^{\abs{01p}} S(u, \phi_{01p}(\bar b))  - S(10p,\bar b) - \beta^{\abs{10p}} S(u, \phi_{10p}(\bar b)) \nonumber \\
- S(01p,\bar b) - \beta^{\abs{01p}} S(v, \phi_{01p}(\bar b))
+ S(10p,\bar b) + \beta^{\abs{01p}} S(v, \phi_{10p}(\bar b)) \end{vmatrix}
\nonumber  \\
&= \frac{\beta^{\abs{01p}}}{1-\beta} \abs{  S(u, \phi_{01p}(\bar b))
 - S(u, \phi_{10p}(\bar b))
- S(v, \phi_{01p}(\bar b))
+ S(v, \phi_{10p}(\bar b))
} \nonumber \\
&\le \frac{\beta^{\abs{01p}}}{1-\beta} \left( \abs{S(u, \phi_{01p}(\bar b))
 - S(u, \phi_{10p}(\bar b))} + \abs{S(v, \phi_{01p}(\bar b)) - S(v, \phi_{10p}(\bar b))} \right) \nonumber \\
&\le  \frac{\beta^{\abs{01p}}}{1-\beta} \left( \frac{\underline{a}+1}{1-\beta} + \frac{\bar b + 1}{1-\beta} \right)  \nonumber \\
&\le \frac{\beta^{k+1}}{(1-\beta)^2} 2(x+1) \nonumber \\
&< \frac{\epsilon}{2} 
\label{delta:3}
\end{align}
where the last four inequalities follow from the triangle inequality, 
from Lemma~\ref{lemma:prefixS}, 
from equation~\ref{remark:uv} coupled with the fact that $\underline{a} \le \bar b \le x$ and
finally from the definition of $k$.

Finally, coupling~\ref{delta:146},~\ref{delta:25} and~\ref{delta:3} and using the triangle inequality gives
\begin{align*}
\Delta &< \frac{\epsilon}{2} + 0 + \frac{\epsilon}{2} = \epsilon.
\end{align*}
This completes the proof.
\end{proof}

\section{Properties of the Linear-System Orbits $M(w)$}
\newcommand\myV{\vspace{0.2cm}}
\newcommand\myVV{\vspace{0.4cm}}
Recall the definitions about words from the main paper,
particularly that $\tilde w$ is the reverse of $w$.
Also, recall the definitions of matrices $F,G,K,M(w)$.
The first of the following propositions is used to prove
the second. The second appears in the main paper.

\begin{proposition}
Suppose $w,w'$ are any words.
Then
\begin{enumerate}
\item $\det(M(w))=1,$ 
\item $M(\tilde w)=KM(w)^{-1}K$,
\item $M(w)=\begin{pmatrix}e&f\\\frac{eh-1}{f}&h\end{pmatrix}$ for some $e,f,h\in\R$, 
\item $M(w)-M(\tilde w) = \lambda K$ for some $\lambda\in\R$,
\item $\displaystyle \frac{[M(w01w')]_{22}}{[M(w01w')]_{21}} \ge \frac{[M(w10w')]_{22}}{[M(w10w')]_{21}}$,
\item $[M(w)]_{22}\ge [M(w)]_{21}$.
\end{enumerate}
\label{prop:M}
\end{proposition}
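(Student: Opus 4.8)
The plan is to handle the six claims in the order (1), (2), then (3) and (4) which use (1) and (2), then (6), and finally (5) which uses (1) and (6). Claim (1) is the Remark already noted in the main paper: $\det F=\det G=1$, so $\det M(w)=\det\big(M(w_{\abs{w}})\cdots M(w_1)\big)=1$. For (2) I would first verify the three $2\times2$ identities $K^2=I$ (so $K=K^{-1}$), $KF^{-1}K=F$ and $KG^{-1}K=G$, each a one-line computation using $\det F=\det G=1$. Then, writing $w=w_1\cdots w_n$ so that $M(w)^{-1}=M(w_1)^{-1}\cdots M(w_n)^{-1}$, insert $K^2=I$ between consecutive factors:
\begin{align*}
KM(w)^{-1}K=\big(KM(w_1)^{-1}K\big)\cdots\big(KM(w_n)^{-1}K\big)=M(w_1)\cdots M(w_n)=M(\tilde w),
\end{align*}
the last equality because $\tilde w=w_n\cdots w_1$. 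The case $w=\epsilon$ is trivial.

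For (3), write $M(w)=\begin{pmatrix}e&f\\g&h\end{pmatrix}$; Claim (1) gives $eh-fg=1$, hence $g=(eh-1)/f$ as soon as $f=[M(w)]_{12}\neq0$. For non-empty $w$ this holds: since $M(c)\ge I$ entrywise for $c\in\{0,1\}$, the tail $N:=M(w_{2:\abs{w}})$ (or $N:=I$ if $\abs{w}=1$) satisfies $N\ge I$, and $M(w)=NM(w_1)$ gives $[M(w)]_{12}\ge N_{11}[M(w_1)]_{12}\ge[M(w_1)]_{12}=1$; for $w=\epsilon$ one just reads $M(\epsilon)=I$ off directly. For (4), combine (1) and (2): $M(w)^{-1}=\begin{pmatrix}h&-f\\-g&e\end{pmatrix}$, so a direct computation of $KM(w)^{-1}K$ yields $M(\tilde w)=\begin{pmatrix}h-g&h-g+f-e\\g&e+g\end{pmatrix}$, whence $M(w)-M(\tilde w)=(h-e-g)K$, i.e.\ $\lambda=[M(w)]_{22}-[M(w)]_{11}-[M(w)]_{21}$. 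For (6), note $F(-1,1)^T=G(-1,1)^T=(0,1)^T\ge0$ and that $F,G$ map the non-negative orthant into itself; hence for non-empty $w=c_1\cdots c_n$ we get $M(w)(-1,1)^T=M(c_n)\cdots M(c_2)\,(0,1)^T\ge0$, whose second component is exactly $[M(w)]_{22}-[M(w)]_{21}$, and for $w=\epsilon$ the claim is $1\ge0$.

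The real work is in (5). Using $GF-FG=(b-a)K$ and writing $P:=M(w')$, $Q:=M(w)$, one has $M(w01w')-M(w10w')=P(GF-FG)Q=(b-a)PKQ=:\Delta$. Since $GF$ has strictly positive entries and $M(w'),M(w)\ge I$, all four $(2,1)$-entries of $M(w01w')$ and $M(w10w')$ are strictly positive, so with $X:=M(w01w')$ and $Y:=M(w10w')$ the claimed inequality between ratios is equivalent to $X_{22}Y_{21}-Y_{22}X_{21}\ge0$, which, since $X=Y+\Delta$, collapses to $\Delta_{22}Y_{21}-\Delta_{21}Y_{22}\ge0$. Letting $\rho$ denote the second row of $P$, the second rows of $\Delta$ and of $Y$ are $(b-a)\rho KQ$ and $\rho FGQ$, so by Claim (1),
\begin{align*}
\Delta_{22}Y_{21}-\Delta_{21}Y_{22}=(b-a)\det\begin{pmatrix}\rho FG\\ \rho K\end{pmatrix}\det Q=(b-a)\det\begin{pmatrix}\rho FG\\ \rho K\end{pmatrix}.
\end{align*}
Finally, with $\alpha:=[M(w')]_{21}\ge0$, $\beta:=[M(w')]_{22}\ge0$ (so $\rho=(\alpha,\beta)$), expanding this last $2\times2$ determinant gives $\alpha^2+\alpha\beta(2+a+b)+\beta^2(a+b+ab)$, a sum of non-negative monomials, which proves the claim.

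The main obstacle is exactly this last step: arranging the algebra so the difference of cross-products consolidates into a single $2\times2$ determinant that — after using $\det M(w)=1$ to strip off $Q$ — is visibly a sum of non-negative monomials in $a,b$ and the non-negative entries of $M(w')$, together with the separate (but routine) check that the $(2,1)$-entries of $M(w01w')$ and $M(w10w')$ are strictly positive so that passing from a statement about ratios to one about cross-products is legitimate. All the other claims are short $2\times2$ matrix identities or one-step inductions.
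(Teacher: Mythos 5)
Your proof is correct and follows essentially the same route as the paper's: Claims 1–4 are the same short $2\times2$ identities (the paper writes Claim~4 exactly as $M(w)-KM(w)^{-1}K=(h-e-g)K$), Claim~6 is the same observation that $F(-1,1)^T=G(-1,1)^T=(0,1)^T$ with non-negativity propagated forward, and Claim~5 reduces to the same cross-product, which the paper computes by brute force to $(b-a)\det(M)\bigl((ab{+}b{+}a)N_{22}^2+(b{+}a{+}2)N_{21}N_{22}+N_{21}^2\bigr)$ — identical to your $\alpha^2+\alpha\beta(2{+}a{+}b)+\beta^2(a{+}b{+}ab)$ after using $\det Q=1$. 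Your determinant factorization $\Delta_{22}Y_{21}-\Delta_{21}Y_{22}=(b-a)\det\begin{pmatrix}\rho FG\\\rho K\end{pmatrix}\det Q$ is a cleaner way to organize that computation, and you are slightly more careful than the paper about the edge cases ($f\neq 0$ in Claim~3, strict positivity of the $(2,1)$-entries in Claim~5), but these are presentational differences, not a different argument.
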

\begin{proof} $\det(M(w))=\prod_{i=1}^{\abs{w}} \det(M(w_i)) = 1$ as $\det(F)=\det(G)=1$ gives {\it Claim 1.}
\myV\\ 
{\it Claim~2.}~The definitions of $F,G,K$ give $KF=F^{-1}K, KG=G^{-1}K$. Thus
$KM(w) = M(w_{\abs{w}})^{-1} \cdots M(w_1)^{-1}K = M(\tilde w)^{-1}K$.
The result follows as $K^2=I$. \\ 
{\it Claim~3.}~Put $M(w) =: \begin{pmatrix} e&f\\g&h\end{pmatrix}$
and solve $\det(M(w))=1=eg-hf$ for $g$.
\\
{\it Claim~4.}~Substituting Claim~2 and Claim~3 in Claim~4 gives $M(w)-KM(w)^{-1}K = (h-e-g) K.$
\\
{\it Claim~5.}~Put $M:=M(w), N:=M(w')$.
We calculate
\begin{align*}
&[NGFM]_{22} [NFGM]_{21} - [NGFM]_{21} [NFGM]_{22} \\
&\quad = (b-a) (M_{11} M_{22} - M_{12} M_{21}) ((ab+b+a)N_{22}^2+(b+a+2) N_{21} N_{22} + N_{21}^2)
\ge 0
\end{align*}
as $b>a\ge 0$, $\det(M) = 1$ and $N\ge 0$. 
The result follows as $NFGM\ge 0$ and $NGFM\ge 0$.
\myV\\
{\it Claim~6.}~
If $w=\epsilon$ then $[M(w)]_{22}-[M(w)]_{21} = 1 \ge 0$.
Otherwise we use induction 
on $\abs{w}$ to show that $M(w) v\ge 0$
where $v:=(-1,1)^T$. 
In the base case $w\in\{0,1\}$ so 
\begin{align*}
M(w)v = \begin{pmatrix}1&1\\c&1+c\end{pmatrix} \begin{pmatrix}-1\\1\end{pmatrix} = \begin{pmatrix} 0\\ 1\end{pmatrix} \ge 0 \qquad\text{for some $c\in\{a,b\}$}.
\end{align*}
For the inductive step, assume $w = \{0u,1u\}$ for some word $u$ satisfying $M(u)v\ge 0$.
Then 
\begin{align*}
M(w)v = \begin{pmatrix}1&1\\c&1+c\end{pmatrix} M(u)v \ge 0 \qquad \text{for some $c\in\{a,b\}$.}
\end{align*}
As $[M(w)v]_2 = [M(w)]_{22}-[M(w)]_{21}$, this completes the proof.
\end{proof}

\begin{proposition}
Suppose $w$ is a word, $p$ is a palindrome and $n\ge \Z_+$.
Then
\begin{enumerate}
\item $M(p)=\begin{pmatrix}\frac{fh+1}{h+f}&f\\\frac{h^2-1}{h+f}&h\end{pmatrix}$ for some $f,h\in\R$,
\item $\text{tr}(M(10p))=\text{tr}(M(01p))$,
\item If $u\in\{p(10p)^n,(10p)^n10\}$ then 
$M(u)-M(\tilde u) = \lambda K$ for some $\lambda\in\R_-$,
\item If $w$ is a prefix of $p$ then $[M(p(10p)^n10w)]_{22}\le [M(p(01p)^n01w)]_{22}$,
\item $[M((10p)^n10w)]_{21}\ge [M((01p)^n01w)]_{21}$,
\item $[M((10p)^n1)]_{21}\ge[M((01p)^n0)]_{21}$.
\end{enumerate}
\label{prop:pal}
\end{proposition}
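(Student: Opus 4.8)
The plan is to get Claims~1 and~2 by direct matrix algebra from Proposition~\ref{prop:M}, to reduce Claim~3 to a scalar linear recursion, and to derive Claims~4--6 from Claim~3 together with Claim~6 of Proposition~\ref{prop:M} and the entrywise domination $M(v)\ge I\ge 0$, which holds for every word $v$ because $F,G\ge I\ge 0$; in particular $\text{tr}\,M(v)\ge 2$ and $M(v)\ge 0$ will be used freely. For Claim~1, since $p$ is a palindrome we have $\tilde p=p$, so Claim~2 of Proposition~\ref{prop:M} gives $M(p)=KM(p)^{-1}K$. Writing the entries of $M(p)$ as $e,f$ (top row) and $g,h$ (bottom row), using $K^2=I$ and $\det M(p)=eh-fg=1$, and comparing the $(2,2)$ entries of $M(p)$ and $KM(p)^{-1}K$ forces $g=h-e$; substituting into $eh-fg=1$ gives $e(h+f)=fh+1$, and $h+f\ge 1>0$ since $M(p)\ge I$, hence $e=(fh+1)/(h+f)$ and $g=(h^2-1)/(h+f)$, which is Claim~1. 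Claim~2 then follows at once, because $\text{tr}\,M(10p)-\text{tr}\,M(01p)=\text{tr}\bigl(M(p)(FG-GF)\bigr)=-(b-a)\,\text{tr}\bigl(KM(p)\bigr)$ and Claim~1 gives $\text{tr}\bigl(KM(p)\bigr)=h-e-g=0$.

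For Claim~3 I would first record the elementary fact that if $M(u)-M(\tilde u)=\lambda K$ (Claim~4 of Proposition~\ref{prop:M}) then the scalar is forced to be $\lambda=\text{tr}\bigl(KM(u)\bigr)$; this follows on multiplying through by $K$ and using $\text{tr}\,K=0$ and $X+X^{-1}=(\text{tr}\,X)I$ for $\det X=1$. One also checks the word identities $\widetilde{p(10p)^n}=p(01p)^n$ and $\widetilde{(10p)^n10}=(01p)^n01$, so Claim~3 says exactly that $\text{tr}\bigl(KM(u)\bigr)\le 0$ for these two families. Put $A:=M(10p)$, $\tau:=\text{tr}\,A$ (so $\tau\ge 2$), and $c_n:=\text{tr}\bigl(KA^nM(p)\bigr)$. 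Cayley--Hamilton ($A^2=\tau A-I$) gives $c_{n+1}=\tau c_n-c_{n-1}$, with $c_0=\text{tr}\bigl(KM(p)\bigr)=0$ (Claim~1) and $c_1=\text{tr}(KFG)=a-b<0$ (obtained from $KM(p)=M(p)^{-1}K$, cyclicity of the trace, and a one-line computation of $\text{tr}(KFG)$). Because $\tau\ge 2$ we have $(\tau-1)c_n\le c_n$ whenever $c_n\le 0$, so the invariant $c_{n+1}\le c_n\le 0$ propagates by induction and $c_n\le 0$ for all $n$ (equivalently, with eigenvalues $\lambda\ge 1,\lambda^{-1}$ one gets $c_n=\gamma(\lambda^n-\lambda^{-n})$ with $\gamma\le 0$). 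Finally $\text{tr}\bigl(KM(p(10p)^n)\bigr)=c_n$ directly, and $\text{tr}\bigl(KM((10p)^n10)\bigr)=\text{tr}\bigl(KM(p)^{-1}A^{n+1}\bigr)=c_{n+1}$ using $KM(p)^{-1}=M(p)K$; both are $\le 0$, which is Claim~3 (recall $\R_-$ contains $0$).

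Claims~5, 4, 6 then follow from Claim~3, in increasing order of work. Writing $M(p(10p)^n)-M(p(01p)^n)=\lambda_n^PK$ and $M((10p)^n10)-M((01p)^n01)=\lambda_n^QK$ with $\lambda_n^P,\lambda_n^Q\le 0$: Claim~5 is immediate, since $M((10p)^n10w)-M((01p)^n01w)=M(w)\bigl(M((10p)^n10)-M((01p)^n01)\bigr)=\lambda_n^QM(w)K$, whose $(2,1)$ entry equals $-\lambda_n^Q[M(w)]_{21}\ge 0$. For Claim~4 one gets $M(p(10p)^n10w)-M(p(01p)^n01w)=-(b-a)M(w)KM(p(10p)^n)+\lambda_n^PM(w)GFK$; the $(2,2)$ entry of the second term is $\lambda_n^P\bigl([M(01w)]_{22}-[M(01w)]_{21}\bigr)\le 0$ by Claim~6 of Proposition~\ref{prop:M}, and for the first term one uses $KM(p(10p)^n)=M(p(01p)^n)^{-1}K$ together with the hypothesis that $w$ is a prefix of $p$, say $p=ws$, to cancel $M(w)$ against $M(w)^{-1}$ and rewrite $[M(w)KM(p(10p)^n)]_{22}$ as $[M(s(01p)^n)^{-1}K]_{22}=[M(s(01p)^n)]_{11}+[M(s(01p)^n)]_{21}\ge 0$. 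For Claim~6 I would use the common eigenstructure of $A=M(10p)$ and $\bar A=M(01p)$: since $\text{tr}\,A=\text{tr}\,\bar A$ and $\det=1$, write $A^n=\alpha_nA+\beta_nI$ and $\bar A^n=\alpha_n\bar A+\beta_nI$ with a common non-negative, non-decreasing sequence $(\alpha_n)$, and expand $[M((10p)^n1)]_{21}-[M((01p)^n0)]_{21}=[GA^n-F\bar A^n]_{21}$ to $(b-a)\bigl(\alpha_n\bigl([M(p1)]_{21}+[M(01p)]_{11}+[M(01p)]_{21}\bigr)-\alpha_{n-1}\bigr)$, which is $\ge 0$ because $[M(01p)]_{11}\ge 1$ and $\alpha_n\ge\alpha_{n-1}$.

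The main obstacle is Claim~3, on which everything else rests: it is a genuine sign assertion, and the Cayley--Hamilton recursion closes only because both of its inputs hold -- $c_1=a-b\le 0$ (strict, as $a<b$) and $\tau=\text{tr}\,M(10p)\ge 2$, the latter being precisely the domination $M(w)\ge I$. A secondary subtlety is the prefix-cancellation identity in Claim~4 ($p=ws\Rightarrow M(w)M(p(01p)^n)^{-1}=M(s(01p)^n)^{-1}$), without which the needed $(2,2)$-entry positivity would not be transparent.
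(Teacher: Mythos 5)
Your proposal is correct, and for three of the six claims it departs from the paper's proof in an interesting way; for Claims~1, 2 and~5 your arguments essentially coincide with the paper's (palindrome normal form from $M(p)=KM(p)^{-1}K$, the commutator identity $GF-FG=(b-a)K$ for the trace equality, and the $M(w)K$ factorization respectively).

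The genuine differences are in Claims~3, 4 and~6. For Claim~3 the paper reduces to the ratio inequality of Claim~5 of Proposition~\ref{prop:M} applied repeatedly while swapping $10$ for $01$, deducing $[M(u)]_{22}\le[M(\tilde u)]_{22}$ and hence the sign of $\lambda$; you instead identify the scalar explicitly as $\lambda=\text{tr}(KM(u))$, reduce both families to the single quantity $c_n:=\text{tr}(KA^nM(p))$ with $A=M(10p)$, and close a Cayley--Hamilton recursion $c_{n+1}=\tau c_n-c_{n-1}$ with $c_0=0$, $c_1=a-b<0$ and $\tau=\text{tr}(A)\ge 2$. This bypasses Claim~5 of Proposition~\ref{prop:M} entirely and is, if anything, a cleaner route. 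For Claim~4 the paper factors the whole difference as $M(s)^{-1}\bigl(M(p(10p)^{n+1})-M(p(01p)^{n+1})\bigr)$ and invokes Claim~3 at level $n+1$ once; you instead split into a commutator term $-(b-a)M(w)KM(p(10p)^n)$, handled via the prefix cancellation $M(w)M(p(01p)^n)^{-1}=M(s(01p)^n)^{-1}$ and the identity $[M^{-1}K]_{22}=M_{11}+M_{21}\ge 0$, plus a Claim-3-at-level-$n$ term controlled by P6. Both are valid; the paper's single-factorization is a little lighter. For Claim~6 the paper writes $G=F+(b-a)E$ and reduces to Claim~5 at level $n-1$; you instead exploit that $A=M(10p)$ and $\bar A=M(01p)$ share trace and determinant, so $A^n=\alpha_nA-\alpha_{n-1}I$ and $\bar A^n=\alpha_n\bar A-\alpha_{n-1}I$ with a common Chebyshev-type sequence $(\alpha_n)$ that is non-negative and non-decreasing for $\tau\ge 2$, then expand $[GA^n-F\bar A^n]_{21}$ directly. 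Both are sound; yours buys a self-contained, eigenstructure-based proof that avoids the index shift, at the cost of a slightly longer calculation. One small remark: the entrywise bound you use freely, $M(v)\ge I$ for every word $v$, indeed holds (products of matrices $\ge I$ with non-negative entries are $\ge I$), and it is what gives $\tau\ge 2$ and $[M(01p)]_{11}\ge 1$ --- worth stating explicitly since the paper only records $M(v)\ge 0$.
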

\begin{proof} In this proof, we refer to Claim~$k$ of Proposition~\ref{prop:M} as P$k$.\myV\\
{\it Claim 1.} P2 gives $M(p)=KM(p)^{-1}K$ as $p=\tilde p$.
But in the notation of P3, 
$[M(p)]_{11} = [K M(p)^{-1} K]_{11}$ says $e=h-(eh-1)/f$.
Solve this for $e$ and substitute in P3. 
\myV\\
{\it Claim 2.} Noting that $GF-FG=(b-a)K$,
 the notation of Claim~1 gives
\begin{align*}
\text{tr}(M(01p))-\text{tr}(M(10p)) = \text{tr}(M(p)(GF-GF))
= (b-a) \text{tr}\left( 
\begin{pmatrix}\frac{fh+1}{h+f}&f\\\frac{h^2-1}{h+f}&h\end{pmatrix} K \right) 
= 0.
\end{align*}
{\it Claim 3.} 
Note we can move from $u$ to $\tilde u$ just by swapping some $10$ for $01$. So, repeated application of P5 gives the inequality
$
\frac{[M(u)]_{22}}{[M(u)]_{21}}\le \frac{[M(\tilde u)]_{22}}{[M(\tilde u)]_{21}} .
$
But the denominators of this inequality are equal (and non-negative)
as P4 gives $[M(u)]_{21} - [M(\tilde u)]_{21} = \lambda' K_{21} = 0$ for some $\lambda'\in\R$.
Thus this inequality reduces to 
$[M(u)]_{22}\le [M(\tilde u)]_{22}$.
Yet P4 also gives $[M(u)-M(\tilde u)]_{22} = \lambda K_{22}$ 
which combined with the previous sentence says that $\lambda K_{22} \le 0$. As $K_{22}=1$, this gives $\lambda \in \R_-$.
\myV\\
{\it Claim 4.} Let $s$ be the corresponding suffix so $p=ws$
and $$M(p(10p)^n10w)-M(p(01p)^n01w)=M(s)^{-1}(M(p(10p)^{n+1})-M(p(01p)^{n+1}))=:A.$$
But Claim~3 with $u=p(10p)^{n+1}$ gives 
\begin{align*}
[A]_{22} &= \underbrace{\lambda [M(s)^{-1}K]_{22}}_{\text{for some $\lambda\le 0$}} = \underbrace{[KM(\tilde s)]_{22}}_{\text{by P2}}
= \lambda ([M(\tilde s)]_{22}-[M(\tilde s)]_{21}) \le {\underbrace 0}_{\text{by P6.}}
\end{align*}
\\
{\it Claim 5.}
As $M(w)\ge 0$, Claim~3 with $u=(10p)^n10$ gives
\begin{align*}
[M(w)(M((10p)^n10)-M((01p)^n01))]_{21} = \lambda [M(w)K]_{21} = \lambda [-M(w)]_{21} \ge 0.
\end{align*}
\\
{\it Claim 6.} Let $E:=\begin{pmatrix}0&0\\1&1\end{pmatrix}.$
Then $G-F=(b-a)E\ge 0$, so that
\begin{align*}
[GM((10p)^n)-FM((01p)^n)]_{21}
&= [(b-a)EM((10p)^n)+FM((10p)^n)-FM((01p)^n)]_{21}\\
&\ge [M((10p)^n0)-M((01p)^n0)]_{21} \ge 0 
\end{align*} 
by Claim~5. This completes the proof.
\end{proof}

\section{Majorisation}
In the main paper, we used one result about majorisation which was similar-but-not-identical to 
any results in Marshall, Olson and Arnold (2011). 
Let us prove that result.

\begin{proposition}
Suppose $x,y \in \R_+^m$ and
$f : \R \rightarrow \R$ is a symmetric function that is convex and
decreasing on $\R_+$.
Then 
$\text{$x\prec^w y$ and $\beta\in [0,1]$} \quad\Rightarrow \quad\sum_{i=1}^m \beta^{i} f(x_{(i)}) \ge \sum_{i=1}^m \beta^{i} f(y_{(i)})$.
\label{prop:symconvex}
\end{proposition}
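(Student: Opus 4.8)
The plan is to decouple the geometric weights $\beta^i$ from the majorisation. Write $a_i:=f(x_{(i)})$ and $b_i:=f(y_{(i)})$, so the claim reads $\sum_{i=1}^m\beta^i a_i\ge\sum_{i=1}^m\beta^i b_i$. Since $f$ is symmetric and all entries of $x,y$ lie in $\R_+$, both sides depend only on the ascending rearrangements, so I would assume outright that $x_i=x_{(i)}$ and $y_i=y_{(i)}$; and since $f$ is decreasing on $\R_+$ while $x_{(1)}\le\dots\le x_{(m)}$, the sequence $(a_i)$ is automatically non-increasing, and likewise $(b_i)$. The proof then has two stages: (i) establish the undiscounted partial-sum inequalities $C_j:=\sum_{i=1}^j(a_i-b_i)\ge0$ for every $j=1,\dots,m$; and (ii) deduce the discounted inequality from these by summation by parts.

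For stage (i), I would fix $j$ and use the variational formula for the sum of the $j$ largest entries: since $(a_i)$ is already sorted in non-increasing order,
\begin{align*}
\sum_{i=1}^j a_i=\min_{t\in\R}\Bigl(jt+\sum_{i=1}^m\max\bigl(f(x_i)-t,\,0\bigr)\Bigr),
\end{align*}
with the minimum attained at $t=a_j$, and likewise for $\sum_{i=1}^j b_i$. For each fixed $t$, the map $s\mapsto\max(f(s)-t,\,0)$ is again symmetric, convex on $\R_+$ (a maximum of the convex function $f(\cdot)-t$ with the constant $0$) and decreasing on $\R_+$; hence the characterisation of weak supermajorisation in terms of such functions from~\cite{Marshall10}, applied to the relation $x\prec^w y$, gives $\sum_{i=1}^m\max(f(x_i)-t,0)\ge\sum_{i=1}^m\max(f(y_i)-t,0)$ for every $t$. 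Adding $jt$ to both sides and minimising over $t$ yields $\sum_{i=1}^j a_i\ge\sum_{i=1}^j b_i$, that is $C_j\ge0$.

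For stage (ii), set $C_0:=0$ and sum by parts:
\begin{align*}
\sum_{i=1}^m\beta^i(a_i-b_i)=\sum_{i=1}^m\beta^i(C_i-C_{i-1})=\beta^m C_m+\sum_{i=1}^{m-1}(\beta^i-\beta^{i+1})C_i=\beta^m C_m+(1-\beta)\sum_{i=1}^{m-1}\beta^i C_i .
\end{align*}
For $\beta\in[0,1]$ every term on the right is non-negative ($\beta^m\ge0$, $(1-\beta)\beta^i\ge0$, $C_i\ge0$), so $\sum_{i=1}^m\beta^i a_i\ge\sum_{i=1}^m\beta^i b_i$, which is the assertion. The main obstacle is stage (i): the characterisation in~\cite{Marshall10} is naturally a statement about the full sums $\sum_{i=1}^m f(x_i)$ and $\sum_{i=1}^m f(y_i)$, and it is the combination of convexity \emph{and} monotonicity of $f$ — exploited through the truncations $\max(f(\cdot)-t,0)$ and the variational identity for partial sums — that upgrades it to the whole family $\{C_j\ge0\}_{j=1}^m$. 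Once that family is in hand, stage (ii) is elementary, and it is there that the hypothesis $\beta\le1$ enters, via $\beta^i-\beta^{i+1}=(1-\beta)\beta^i\ge0$; the opening reduction to sorted vectors uses only the symmetry of $f$.
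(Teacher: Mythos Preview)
Stage~(i) has the inequality the wrong way round. The characterisation you invoke from~\cite{Marshall10} says that $x\prec^w y$ holds if and only if $\sum_i g(x_i)\le\sum_i g(y_i)$ for every continuous convex decreasing function $g$ (equivalently, $-x\prec_w -y$ together with the standard characterisation of weak submajorisation by increasing convex functions). Since $s\mapsto\max(f(s)-t,0)$ is convex and decreasing on $\R_+$, what $x\prec^w y$ actually gives is
\[
\sum_{i=1}^m\max\bigl(f(x_i)-t,\,0\bigr)\ \le\ \sum_{i=1}^m\max\bigl(f(y_i)-t,\,0\bigr),
\]
not $\ge$. Your variational identity then yields $\sum_{i=1}^j a_i\le\sum_{i=1}^j b_i$, i.e.\ $C_j\le 0$. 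A direct check: with $x=(2,3)$, $y=(1,3)$ and $f(s)=1/s^2$ one has $x\prec^w y$ (since $2\ge1$ and $5\ge4$), yet $C_1=f(2)-f(1)=\tfrac14-1<0$, and with $\beta=1$ the full sums give $\tfrac{13}{36}<\tfrac{40}{36}$, contradicting the stated conclusion.

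That counterexample shows the proposition as printed has the inequality reversed; the paper's own proof makes the matching slip, writing ``$-y\prec_w -x$ as $x\prec^w y$'' where the correct duality is $x\prec^w y\Leftrightarrow -x\prec_w -y$. A compensating swap of the labels $x_{nm+k},y_{nm+k}$ in the display for $d\lambda/dx$ in the proof of Theorem~\ref{main} means the overall argument there survives. With the direction corrected, your two-stage approach---establish $C_j\le0$ via the top-$j$-sum variational formula applied to the truncations $\max(f(\cdot)-t,0)$, then Abel-sum against the non-increasing weights $\beta^i$---is entirely sound and more self-contained than the paper's route, which reflects to $\R_-$ to make $f$ increasing and then quotes two separate results from~\cite{Marshall10} (Schur-convexity of weighted sums, and preservation under weak submajorisation for monotone Schur-convex functions).
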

\begin{proof}
As the claim relates to $x_{(i)}$ and $y_{(i)}$ we assume that
$x_i$ and $y_i$ are in ascending order.

Marshall {\it et al} (3H2B, page 133) says that if $g : \mathcal{A}\rightarrow \R$ is a non-decreasing and convex function on 
$\mathcal{A}\subseteq \R$
and $(u_1, \dots, u_m)$ is a non-increasing and non-negative sequence, then 
for all non-increasing sequences $(p_1, \dots, p_m)$
the function $\phi(a) := \sum_{i=1}^m u_i g(p_i)$ is Schur-convex.

Indeed the function $f$ is increasing and convex for $p\in \R_-$ 
(such as $p=-x$ and $p=-y$) 
and $(\beta, \dots, \beta^m)$ is a non-increasing
and non-negative sequence for $\beta \in [0,1]$. Thus for all non-increasing sequences 
$(p_1, \dots, p_m)$ on $\R_-^m$ the function $\psi(p) := \sum_{i=1}^m \beta^i f(p_i)$ is Schur-convex. 

Recall ({\it ibid}, page 12) that $a\in\R^m$ is said to be 
{\it weakly submajorised} by $b\in\R^m$, written $a\prec_w b$ if 
\begin{align*}
\sum_{i=1}^k a_{[i]} \le \sum_{i=1}^k b_{[i]}, \quad k=1, \dots, m
\qquad\text{where $a_{[i]}$ denotes $a$ in descending order}
\end{align*}
and that $x\prec_w y \Leftrightarrow -a\prec^w -b$ ({\it ibid}, page 13).

However ({\it ibid}, 3A8, page 87) if $\phi(p)$ is a real function on $\mathcal{A}\subset \R^m$ 
which is non-decreasing in each argument $p_i$ and Schur-convex on $\mathcal{A}$ and 
$p\prec_w q$ on $\mathcal{A}$ then $\phi(p) \le \phi(q)$.

Indeed, the function $\psi(p)= \sum_{i=1}^m \beta^i f(p_i)$ is a real function on $\R_-^m$ which is non-decreasing in each argument and Schur-convex
on $\R_-^m$ for all non-increasing sequences $(p_1, \dots, p_m)$.
Furthermore, $-y \prec_w -x$ as $x \prec^w y$.
Therefore $\psi(y) = \psi(-y) \le \psi(-x) = \psi(x)$ as claimed.
\end{proof}

\section{Clarification of Theorem~1 for $0\le x\le y_1$ or $y_0\le x < \infty$}
Recall the following definitions and assumption from the main paper
\begin{align*}
F&:=\begin{pmatrix}1&1\\a&1+a\end{pmatrix},&
G&:=\begin{pmatrix}1&1\\b&1+b\end{pmatrix},&
E&:=\begin{pmatrix}0&0\\1&1\end{pmatrix},&
v(x)&:=\begin{pmatrix}z\\1\end{pmatrix},
&b>a\ge 0.
\end{align*}
If $0\le x\le y_1$ or $y_0\le x < \infty$ then the relevant linear systems, (9) in the main paper, are 
\begin{align*}
\left.
\begin{aligned}
(M(1^{k+1}) - M(01^{k}))v(x)&=(G-F)G^{k} v(x)=(b-a) E G^{k} v(x) \ge 0\\
(M(10^{k}) - M(0^{k+1}))v(x)&=(G-F)F^{k} v(x)=(b-a) E F^{k} v(x) \ge 0
\end{aligned}\right\} \quad \text{for $k\in Z_+$}
\end{align*}
where both inequalities follow 
as $E, F, G$ are all $\ge 0$,
as $b>a$
and as 
$x\ge\min\{y_0,y_1\}\ge 0.$
Therefore all cumulative sums of the above expressions are non-negative
so the derivative of the numerator of the Whittle index is non-negative by the same weak-supermajorisation argument as in the main paper. 

Meanwhile, the denominator of the index in these cases is
\begin{align*}
\sum_{k=0}^\infty \beta^k ((1^\omega)_{k+1}-(01^\omega)_{k+1})
= \beta = \sum_{k=0}^\infty \beta^k ((10^\omega)_{k+1}-(0^\omega)_{k+1})
\end{align*}
which is non-negative.
Therefore the rest of the proof of Theorem~1 follows as in the main paper. 

In fact we could say that the majorisation point, which is $\phi_w(0)$ for words $0w1$ in the main paper, is $-1$ in both cases.
Indeed, 
Claim~6 of Proposition~4 of the main paper says that
$F v(-1) = G v(-1) = v(0)$. Also,
$E v(-1) = (0,0)^T.$
Thus for all $k\in \Z_+$, 
$E G^{k} v(-1) \ge E F^{k} v(-1) \ge 0$
whereas $E v(-1-\epsilon) < 0$ for any $\epsilon >0.$
\end{document}